\documentclass{amsart}

\usepackage{amsmath,amssymb}
\usepackage{amsthm}
\usepackage{amsrefs}
\usepackage{indentfirst}
\usepackage{mathrsfs}
\usepackage{hyperref}
\usepackage{xcolor}
\usepackage[margin=1.4in]{geometry}
\usepackage{nicefrac}
\usepackage{algorithm}
\usepackage{algorithmicx}
\usepackage{algpseudocode}
\usepackage{caption}

\newcommand{\bR}{\mathbb{R}}
\newcommand{\bE}{\mathbb{E}}
\newcommand{\bP}{\mathbb{P}}
\newcommand{\bN}{\mathbb{N}}
\newcommand{\calN}{\mathcal{N}}
\newcommand{\calE}{\mathcal{E}}
\newcommand{\calP}{\mathcal{P}}
\newcommand{\calC}{\mathcal{C}}
\newcommand{\calU}{\mathcal{U}}
\newcommand{\calO}{\mathcal{O}}
\newcommand{\calK}{\mathcal{K}}
\newcommand{\bfa}{\mathbf{a}}
\newcommand{\bfj}{\mathbf{j}}
\newcommand{\bfq}{\mathbf{q}}
\newcommand{\bfs}{\mathbf{s}}

\newcommand{\bfg}{\mathbf{g}}
\newcommand{\bfv}{\mathbf{v}}

\numberwithin{equation}{section}
\newtheorem{theorem}{Theorem}[section]
\newtheorem{lemma}[theorem]{Lemma}
\newtheorem{remark}[theorem]{Remark}
\newtheorem{proposition}[theorem]{Proposition}
\newtheorem{assumption}[theorem]{Assumption}
\newtheorem{definition}[theorem]{Definition}
\newtheorem{corollary}[theorem]{Corollary}
\allowdisplaybreaks[4]

\title[Learning Subspace-Sparse Polynomials with Gaussian Input]{Mean-Field Analysis for Learning Subspace-Sparse Polynomials with Gaussian Input}
\author{Ziang Chen}
\address{(ZC) Department of Mathematics, Massachusetts Institute of Technology, Cambridge, MA 02139.}
\email{ziang@mit.edu}
\author{Rong Ge}
\address{(RG) Department of Computer Science and Department of Mathematics, Duke University, Durham, NC 27708.}
\email{rongge@cs.duke.edu}
\date{\today}

\begin{document}
\begin{abstract}
  In this work, we study the mean-field flow for learning subspace-sparse polynomials using stochastic gradient descent and two-layer neural networks, where the input distribution is standard Gaussian and the output only depends on the projection of the input onto a low-dimensional subspace. We establish a necessary condition for SGD-learnability, involving both the characteristics of the target function and the expressiveness of the activation function. In addition, we prove that the condition is almost sufficient, in the sense that a condition slightly stronger than the necessary condition can guarantee the exponential decay of the loss functional to zero.
\end{abstract}

\maketitle

\section{Introduction}

Neural Networks (NNs) are powerful in practice to approximate mappings on certain data structures, such as Convectional Neural Networks (CNNs) for image data, Graph Neural Networks (GNNs) for graph data, and Recurrent Neural Networks (RNNs) for sequential data, stimulating numerous breakthroughs in application of machine learning in many branches of science, engineering, etc. The surprising performance of neural networks is often explained by arguing that neural networks automatically learns useful representations of the data. However, how simple training procedures such as stochastic gradient descent (SGD) extract features remains a major open problem. 

Optimization of neural networks has received lots of attention. For simpler networks such as linear neural networks, local minima are also globally optimal \cite{kawaguchi2016deep, lu2017depth, kawaguchi2019depth}. However, this is not true for nonlinear networks even of depth 2 \cite{safran2018spurious}. Neural Tangent Kernel (NTK, \cite{jacot2018neural,du2018gradient,allen2019convergence}) is a line of work that establishes strong convergence results for wide neural networks. However, in the NTK regime, neural network is equivalent to a kernel, which cannot learn useful features based on the target function. Such limitation prevents neural networks in NTK regime from efficiently learning even simple single index models \cite{yehudai2019power}.

As an alternative, the behavior of SGD can also be understood via mean-field analysis, for both two-layer neural networks \cite{chizat2018global, mei2018mean, mei2019mean, sirignano2020mean, sirignano2020mean2, rotskoff2022trainability} and multi-layer neural networks \cite{araujo2019mean, nguyen2019mean, rotskoff2022trainability}. Neural networks in the mean-field regime have the potential to do feature learning. Recently, \cite{Abbe22} showed an interesting setup where a two-layer neural network can learn representations if the target function satisfies a merged-staircase property. More precisely, 
\cite{Abbe22} considers a sparse polynomial 
as a polynomial $f^*:\bR^d\to\bR$ defined on the hypercube $\{-1,1\}^d$, i.e., $f^*(x) = h^*(z) = h^*(x_I)$ where $z=x_I=(x_i)_{i\in I}$, $I$ is an unknown subset of $\{1,2,\dots,d\}$ with $|I|=p$, and $h^*:\{-1,1\}^p\to \bR$ is a function on the subset of coordinates in $I$. They prove that a condition called the merged-staircase property is necessary and in some sense sufficient for learning such $f^*$ using SGD and two-layer neural networks. The merged-staircase property proposed in \cite{Abbe22} states that all monomials of $h^*$ can be ordered such that each monomial contains at most one $z_i$ that does not appear in any previous monomial. For example, $h^*(z) = z_1+ z_1z_2 + z_1z_2z_3$ satisfies the merged-staircase property while $h^*(z) = z_1+z_1z_2z_3$ does not. Results on similar structures can also be found in \cite{abbe2021staircase}. The work \cite{abbe2023sgd} proposes the concept of leap complexity and generalizes the results in \cite{Abbe22} to a larger family of sparse polynomials.

In this work, we consider ``subspace-sparse" polynomial that is more general. Concretely, let $f^*(x) = h^*(z) = h^*(x_V)$, where $V$ is a subspace of $\bR^d$ with $\text{dim}(V) = p \ll d$, $x_V$ is the orthogonal projection of $x$ onto the subspace $V$, and $h^*:V \to \bR$ is an underlying polynomial map. In other words, the sparsity is in the sense that $f^*(x)$ only depends on the projection of the input $x\in\bR^d$ in a low-dimensional subspace. Throughout this paper, the input data distribution is the standard $d$-dimensional normal distribution, i.e., $x~\sim\calN(0,I_d)$, which is rotation-invariant in the sense that $Ox\sim \calN(0,I_d)$ for any orthogonal matrix $O\in\bR^{d\times d}$. Similar rotation-invariant/basis-free settings are also considered in some recent studies, including \cite{abbe2023sgd,bietti2023learning,dandi2023two,dandi2024benefits}.

\medskip
\paragraph{\textbf{Our contribution and related works}} Our first contribution is a basis-free necessary condition for SGD-learnability. More specially, we propose the reflective property of the underlying polynomial $h^*:V\to\bR$ with respect to some subspace $S\subset V$, which also involves the expressiveness of the activation function. We prove that as long as the reflective property is satisfied with respect to nontrivial $S$, the training dynamics cannot learn any information about the behavior of $h^*$ on $S$ (see Theorem~\ref{thm:main_necessary}). Therefore the loss functional will be bounded away from 0 during the whole training procedure. 

One key point is that our reflective property precisely characterizes the necessary expressiveness of the activation function. If the activation function is expressive enough, the reflective property equivalently recovers a necessary condition characterized by \textup{isoLeap} \cite{abbe2023sgd} that is the maximal leap complexity over all orthonormal basis and can be viewed as a basis-free generalization of the merged-staircase property. This also indicates that our necessary condition is a bit weaker. Other related rotation-invariant conditions in the previous literature include leap exponent/index \cite{dandi2023two,bietti2023learning}, subspace conditioning \cite{dandi2023two} and even-symmetric directions \cite{dandi2024benefits}. The analysis in \cite{dandi2023two,dandi2024benefits} is for training the first layer for finitely many iterations with fixed second-layer, and \cite{bietti2023learning} studies the joint learning dynamics where they assume that for any fixed first layer, the optimal parameters in second layer can be found efficiently and reformulate the loss as a function of the first layer. Differently and more generally, our analysis for the necessary condition does not require specific learning strategies and works for any learning rates satisfying some mild conditions.

Our second contribution is a sufficient condition for SGD-learnability that is also basis-free and is slightly stronger than the necessary condition. In particular, we show that if the training dynamics cannot be trapped in any proper subspace of $V$, then one can choose the initial parameter distribution and the learning rate such that the loss functional decays to zero exponentially fast with dimension-free rates (see Theorem~\ref{thm:sufficient}). Our training strategy is inspired by \cite{Abbe22} with the difference that we take the average of $p$ independent training trajectories, which can lift some linear independence property required for polynomials on hypercube to algebraic independence in the general polynomial setting. 

\medskip
\paragraph{\textbf{Technical challenges}} It may seem simple to leave the standard basis and generalize the results of \cite{Abbe22,abbe2023sgd} to learn subspaces, because SGD itself is independent of the basis, and we can consider a symmetric Gaussian input distribution. However, there are some significant barriers that motivated our training process. The condition and the analysis in \cite{Abbe22,abbe2023sgd} rely on an orthonormal basis of the input space $\bR^d$. This is natural for polynomials on the hypercube $\{-1,1\}^d$, but not for general polynomials on $\bR^d$. Particularly, their theory does not work for Gaussian input data $x\sim \calN(0,I_d)$, which is probably the most common distribution in data science, unless an orthonormal basis of $\bR^d$ is specified and $V$ is known to be spanned by $p$ elements in the basis. In this work, we consider a more general setting in which specifying a basis is not required and the space $V$ can be any $p$-dimensional subspace of $\bR^d$. This setting is consistent with the rotation-invariant property of $\calN(0,I_d)$ and introduces more difficulties since less knowledge of $V$ is available prior to training. 

\medskip
\paragraph{\textbf{Organization}} The rest of this paper will be organized as follows. We introduce some preliminaries on mean-field dynamics in Section~\ref{sec:prelim}. The basis-free necessary and sufficient conditions for SGD-learnability are discussed in Section~\ref{sec:necessary} and Section~\ref{sec:sufficient}, respectively. We conclude in Section~\ref{sec:conclude}.

\section{Preliminaries on Mean-Field Dynamics}
\label{sec:prelim}

The mean-field dynamics describes the limiting behavior of the training procedure when the stepsize/learning rate converges to zero, i.e., the evolution of a neuron converges to the solution of a differential equation with continuous time, and when the number of neurons converges to infinity, i.e., the empirical distribution of all neurons converges to some limiting probability distribution. For two-layer neural networks, some quantitative results are established in \cite{mei2018mean} that characterize the distance between the SGD trajectory and the mean-field evolution flow, and these results are further improved as dimension-free in \cite{mei2019mean}. Such results suggest that analyzing the mean-field flow is sufficient for understanding the SGD trajectory in some settings.
In this section, we briefly review the setup of two-layer neural networks, SGD, and their mean-field versions, following \cite{mei2018mean,mei2019mean}. 

\medskip
\paragraph{\textbf{Two-layer neural network and SGD}} The two-layer neural network is of the following form: 
\begin{equation}\label{eq:NN}
    f_{\text{NN}}(x;\Theta) := \frac{1}{N}\sum_{i=1}^N \tau(x;\theta_i) = \frac{1}{N}\sum_{i=1}^N a_i \sigma(w_i^\top x),
\end{equation}
where $N$ is the number of neurons, $\Theta = (\theta_1,\theta_2,\dots,\theta_N)$ with $\theta_i = (a_i,w_i)\in\bR^{d+1}$ is the set of parameters, and $\sigma:\bR\to\bR$ is the activation functions with $\tau(x;\theta):=a\sigma(w^\top x)$ for $\theta = (a,w)$. Then the task is to find some parameter $\Theta$ such that the $\ell_2$-distance between $f^*$ and $f_{\text{NN}}$ is minimized:
\begin{equation}\label{eq:E_theta}
    \min_{\Theta}~\calE_N(\Theta):= \frac{1}{2}\bE_{x\sim\calN(0,I_d)} \left[|f^*(x) - f_{\text{NN}}(x;\Theta)|^2\right].
\end{equation}
In practice, a widely used algorithm for solving \eqref{eq:E_theta} is the stochastic gradient descent (SGD) that iterates as
\begin{equation}\label{eq:SGD}
    \theta^{(k+1)}_i = \theta^{(k)}_i + \gamma^{(k)} \left(f^*(x_k) - f_{\text{NN}}(x_k;\Theta^{(k)})\right) \nabla_\theta\tau(x_k; \theta_i^{(k)}),
\end{equation}
where $x_k,\ k=1,2,\dots$ are the i.i.d. samples drawn from $\calN(0,I_d)$ and $\gamma^{(k)} = \text{diag}(\gamma_a^{(k)},\gamma_w^{(k)} I_d)\succeq 0$ is the stepsize or the learning rate. In this paper, we only consider the one-pass model with each data point being used exactly once, following \cite{mei2019mean}.

\medskip
\paragraph{\textbf{Mean-field dynamics}}

One can generalize \eqref{eq:NN} to an infinite-width two-layer neural network:
\begin{equation*}
    f_{\text{NN}}(x; \rho):= \int \tau(x;\theta)\rho(d\theta) = \int a\sigma(w^\top x) \rho(da, dw),
\end{equation*}
where $\rho\in\calP(\bR^{d+1})$ is a probability measure on the parameter space $\bR^{d+1}$, and generalize the loss/energy functional \eqref{eq:E_theta} to
\begin{equation*}
    \calE(\rho) := \frac{1}{2} \bE_{x\sim \calN(0,I_d)} \left[ \left| f^*(x) - f_{\text{NN}}(x; \rho)\right|^2 \right].
\end{equation*}
We will use $\calP(X)$ to denote the collection of probability measures on a space $X$ throughout this paper. The limiting behavior of the SGD trajectory \eqref{eq:SGD} when $\gamma^{(k)}\to 0$ and $N\to\infty$ can be described by the following mean-field dynamics:
\begin{equation}\label{eq:MFflow}
    \begin{cases}
        \partial_t \rho_t = \nabla_\theta \cdot\left(\rho_t \xi(t)  \nabla_\theta \Phi(\theta;\rho_t)\right), \\
        \rho_t\big|_{t=0} = \rho_0,
    \end{cases}
\end{equation}
where $\xi(t) = \text{diag}(\xi_a(t), \xi_w(t) I_d)\in\bR^{(d+1)\times (d+1)}$ with $\xi_a(t)\geq 0$ and $\xi_w(t)\geq 0$ being the learning rates and
\begin{equation*}
    \Phi(\theta;\rho) = a \bE_{x\sim \calN(0,I_d)} \left[ \left(f_{\text{NN}}(x;\rho) -  f^*(x)\right)\sigma(w^\top x)\right].
\end{equation*}
One can also write $\Phi(\theta;\rho)$ as
\begin{equation*}
    \Phi(\theta;\rho) = V(\theta) + \int U(\theta,\theta')\rho(d\theta'),
\end{equation*}
where
\begin{equation}\label{eq:V-U}
    V(\theta) = -a\bE_x\left[f^*(x)\sigma(w^\top x)\right]\quad\text{and}\quad U(\theta,\theta') = a a'\bE_x\left[\sigma(w^\top x)\sigma((w')^\top x)\right].
\end{equation}
The PDE \eqref{eq:MFflow} is understood in the weak sense, i.e., $\rho_t$ is a solution to \eqref{eq:MFflow} if and only if $\rho_t\big|_{t=0} = \rho_0$ and 
\begin{equation*}
    \iint \left(-\partial_t\eta +  \nabla_\theta \eta \cdot (\xi(t)\nabla_\theta \Phi(\theta; \rho_t))\right) \rho_t(d\theta) dt = 0,\quad \forall~\eta\in \calC_c^\infty(\bR^{d+1}\times (0,+\infty)),
\end{equation*}
where $\calC_c^\infty(\bR^{d+1}\times (0,+\infty))$ is the collection of all smooth and compactly supported functions on $\bR^{d+1}\times (0,+\infty)$.
It can also be computed that the energy functional is non-increasing along $\rho_t$:
\begin{equation}\label{eq:dEdt}
    \frac{d}{dt}\calE(\rho_t) = -\int \nabla_\theta \Phi(\theta; \rho_t)^\top \xi(t) \nabla_\theta \Phi(\theta; \rho_t) \rho_t(d\theta)\leq 0.
\end{equation}
There have been standard results in the existing literature that provide dimension-free bounds for the distance between the empirical distribution of the parameters generalized by \eqref{eq:SGD} and the solution to \eqref{eq:MFflow}. For the simplicity of reading, we will not present those results and the proof; interested readers are referred to \cite{mei2019mean}. In the rest of this paper, we will focus on the analysis of \eqref{eq:MFflow} and briefly discuss the sample complexity results implied by our mean-field analysis.

\section{Necessary Condition for SGD-Learnability}
\label{sec:necessary}

This section introduces a condition that can prevent SGD from recovering all information about $f^*$, or in other words, prevent the loss functional $\calE(\rho_t)$ decaying to a value sufficiently close to $0$.

\subsection{Reflective Property}

Before rigorously presenting our main theorem, we state the assumptions used in this section. 

\begin{assumption}\label{asp:necessary}
    Assume that the followings hold:
    \begin{itemize}
        \item[(i)] The activation function $\sigma:\bR\to\bR$ is twice continuously differentiable with $\|\sigma\|_{L^\infty(\bR)}\leq K_{\sigma}$, $\|\sigma'\|_{L^\infty(\bR)}\leq K_\sigma$, and $\|\sigma''\|_{L^\infty(\bR)}\leq K_\sigma$ for some constant $K_\sigma>0$.
        \item[(ii)] The learning rates $\xi_a,\xi_w:\bR_{\geq 0}\to\bR$ satisfy that $\|\xi_a\|_{L^\infty(\bR_{\geq 0})}\leq K_\xi$ and $\|\xi_w\|_{L^\infty(\bR_{\geq 0})}\leq K_\xi$ for some constant $K_\xi>0$. Furthermore, $\xi_a$ and $\xi_w$ are Lipschitz continuous with $\int_0^{+\infty} \xi_a(t)dt = +\infty$ and $\int_0^{+\infty} \xi_w(t)dt = +\infty$.
        \item[(iii)] The initialization is $\rho_0 = \rho_a\times\rho_w$ such that $\rho_a$ is symmetric and is supported in $[-K_\rho,K_\rho]$ for some constant $K_\rho>0$. 
    \end{itemize}
\end{assumption}

In Assumption~\ref{asp:necessary}, the Condition (i) is satisfied by some commonly used activation functions, such as $\sigma(x) = \frac{1}{1+e^{-x}}$ and $\sigma(x)=\cos(x)$, and is required for establishing the existence and uniqueness of the solution to \eqref{eq:MFflow}. The Condition (ii) and (iii) are also standard and easy to satisfy in practice.

\begin{remark} \label{rmk:init_energy}
    The symmetry of $\rho_a$ implies that $f_{\text{NN}}(x;\rho_0) = 0$. Therefore, the initial loss $\calE(\rho_0) = \frac{1}{2}\bE_x[|f^*(x)|^2] = \frac{1}{2}\bE_{x_V}[|h^*(x_V)|^2] = \frac{1}{2}\bE_{z}[|h^*(z)|^2]$, where $x_V = z\sim\calN(0, I_V)$, can be viewed as a constant depending only on $h^*$ and $p$, independent of $d$. Noticing also the decay property \eqref{eq:dEdt}, the loss at any time $t$ can be bounded as $\calE(\rho_t)\leq \calE(\rho_0) =  \frac{1}{2}\bE_{z}[|h^*(z)|^2]$.
\end{remark}

The main goal of this section is to generalize the merged-staircase property in a basis-free setting for general polynomials. Without a standard basis, it is hard to talk about having a ``staircase'' of monomials. Even with a fixed basis, it is still nontrial to define the merged-staircase property for general polynomials since the analysis in \cite{Abbe22} highly depends on $z_i^2=1$ that is only true for polynomials on the hypercube.
Instead, we use the observation that when a function does not satisfy the merged staircase property, it implies that two of the variables will behave the same in the training dynamics. Such a symmetry can be generalized to the basis-free setting for general polynomials and we summarize this as the following reflective property:

\begin{definition}[Reflective property]\label{def:reflect}
    Let $S\subset V\subset \bR^d$ be a subspace of $V$. We say that the underlying polynomial $h^*:V\to\bR$ satisfies the reflective property with respect to the subspace $S$ and the activation function $\sigma$ if
    \begin{equation}\label{eq:reflect}
        \bE_{z\sim\calN(0,I_V)} \left[h^*(z) \sigma'\left(u + v^\top z_S^\perp \right) z_S\right] = 0,\quad\forall~u\in\bR,\ v\in  V,
    \end{equation}
    where $z_S = \calP_S^V(z)$ and $z_S^\perp = z - \calP_S^V (z)$, with $\calP_S^V:V\to S$ being the orthogonal projection from $V$ onto $S$.
\end{definition}

The reflective property defined above is closely related to the merged-staircase property in~\cite{Abbe22}. Let us illustrate the intuition using a simple example. Consider $V = \bR^3$ and $h^*(z) = z_1 + z_1 z_2 z_3$. Then $h^*$ does not satisfy the merged-staircase property since $z_1 z_2 z_3$ involves two new coordinates that do not appear in the first monomial $z_1$. In our setting, this $h^*$ satisfies the reflective with respect to $S = \text{span}\{e_2,e_3\}$, where $e_i$ is the vector in $\bR^3$ with the $i$-th entry being $1$ and other entries being $0$. More specifically, for $z = (z_1,z_2,z_3)$, one has that $z_S = (0,z_2,z_3)$ and $z_S^\perp = (z_1,0,0)$. Thus, one has for any $u\in\bR$ and $v \in V$ that $\sigma'\left(u + v^\top z_S^\perp \right)$ is independent of $z_2,z_3$ and that 
\begin{equation*}
    \begin{split}
        \bE_{z_2,z_3} \left[h^*(z) \sigma'\left(u + v^\top z_S^\perp \right) z_S\right] & = \sigma'\left(u + v^\top z_S^\perp \right) \bE_{z_2,z_3} \left[\left(0,z_1 z_2 + z_1 z_2^2 z_3, z_1 z_3 + z_1 z_2 z_3^2\right)\right] \\
    & = (0,0,0),
    \end{split}
\end{equation*}
which leads to \eqref{eq:reflect}. One can see from this example that satisfying the reflective property with respect to a nontrivial subspace $S\subset V$ is in the same spirit as not satisfying the merged-staircase property. Furthermore, the reflective property is rotation-invariant, meaning that using a different orthonormal basis does not change the property. In this sense, our proposed condition is more general than that in~\cite{Abbe22}. We also remark that there have been other rotation-invariant conditions generalizing \cite{Abbe22}, see e.g., \cite{abbe2023sgd,bietti2023learning,dandi2023two,dandi2024benefits}.

Another comment is that the reflective property \eqref{eq:reflect} depends on the activation function $\sigma$, while conditions in previous works \cite{Abbe22,abbe2023sgd,bietti2023learning,dandi2023two,dandi2024benefits} are all defined for the target function $f^*$ or $h^*$ itself. There does exist a variant of our reflective property that is independent of $\sigma'$, namely,
\begin{equation}\label{eq:reflect-variant}
    \mathbb{E}_{z_S\sim\calN(0, I_S)}[h^*(z) z_S] = 0,\quad \forall~z_S^\perp,
\end{equation}
which actually implies \eqref{eq:reflect}. But these two conditions are different: $h^*(z) = z_1 + z_1z_2 + z_1z_2z_3$ does not satisfy \eqref{eq:reflect-variant} but still satisfies \eqref{eq:reflect} if $\sigma(\zeta)=\zeta$. We use \eqref{eq:reflect} with $\sigma'$ because we want to emphasize that the SGD learnability depends on the activation function $\sigma$. If $\sigma$ is less expressive, then SGD may not learn the target function even if $h^*$ itself satisfies the merged-staircase property. Typically people use activation functions that are expressive enough, for which \eqref{eq:reflect} and \eqref{eq:reflect-variant} are similar. In addition, it can be verified that \eqref{eq:reflect-variant} with some nontrivial $S$ is equivalent to $\textup{isoLeap}(h^*)\geq 2$ that means $h^*:V\to\bR$ does not satisfy the merged-staircase property for some orthonormal basis of $V$~\cite{abbe2023sgd}, and the idea of leaps is used in \cite{bietti2023learning,dandi2023two}. We include the proof of equivalence in Appendix~\ref{sec:equiv_isoleap}.

Our main result in this section is that the reflective property with nontrivial $S$ would lead to a positive lower bound of $\calE(\rho_t)$ along the training dynamics, which provides a necessary condition for the SGD-learnability and is formally stated as follows.

\begin{theorem}\label{thm:main_necessary}
    Suppose that Assumption~\ref{asp:necessary} holds with $\rho_w\sim\calN(0,\frac{1}{d}I_d)$, and that $h^*:V\to\bR$ satisfies the reflective property with respect to some subspace $S\subset V$ and activation function $\sigma$. Then for any $T>0$, there exists a constant $C>0$ depending only on $p$, $h^*$, $K_\sigma$, $K_\xi$, $K_\rho$, and $T$, such that
    \begin{equation}\label{eq:main_necessary}
        \inf_{0\leq t\leq T}\calE(\rho_t)\geq \frac{1}{2}\bE_{z\sim \calN(0,I_V)}\left[|h^*(z) - h_{S^\perp}^*(z_S^\perp)|^2\right] - \frac{C}{d^{1/2}},
    \end{equation}
    where $h_{S^\perp}^*(z_S^\perp) = \bE_{z_S}[h^*(z)]$. In particular, if $h^*(z)$ is not independent of $z_S$, then for any $T>0$, there exists $d(T)>0$ depending only on $p$, $h^*$, $K_\sigma$, $K_\xi$, $K_\rho$, and $T$, such that for any $d>d(T)$, we have
    \begin{equation}\label{eq:main2_necessary}
        \inf_{0\leq t\leq T}\calE(\rho_t)\geq \frac{1}{4}\bE_{z\sim \calN(0,I_V)}\left[|h^*(z) - h_{S^\perp}^*(z_S^\perp)|^2\right] > 0.
    \end{equation}
\end{theorem}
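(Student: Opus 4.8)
The plan is to show that the reflective property keeps the network $f_{\mathrm{NN}}(\cdot\,;\rho_t)$ almost independent of the $S$-component of the input throughout $[0,T]$, so that it cannot approximate $h^*$ any better than the best function of $z_S^\perp$ alone, which is $h_{S^\perp}^*$. Let $\Pi$ denote the $L^2(\calN(0,I_d))$-orthogonal projection onto functions of $x$ depending only on $x_{S^\perp}$, the orthogonal projection of $x\in\bR^d$ onto $S^\perp$ (so $x=x_S+x_{S^\perp}$ with $x_S\in S$), i.e.\ $\Pi g(x)=\bE[g(x)\mid x_{S^\perp}]$, and set $\psi_t:=(I-\Pi)f_{\mathrm{NN}}(\cdot\,;\rho_t)$. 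Since $S\subset V$, conditioning on $x_{S^\perp}$ fixes $z_S^\perp$ but leaves $z_S\sim\calN(0,I_S)$, so $\Pi f^*=h_{S^\perp}^*(z_S^\perp)$ and $(I-\Pi)f^*=h^*(z)-h_{S^\perp}^*(z_S^\perp)$. Splitting $f^*-f_{\mathrm{NN}}$ into its orthogonal $(I-\Pi)$- and $\Pi$-parts and keeping only the former,
\begin{equation*}
  \calE(\rho_t)\ \ge\ \tfrac12\big\|\,h^*(z)-h_{S^\perp}^*(z_S^\perp)-\psi_t\,\big\|_{L^2}^2\ \ge\ \tfrac12\,\bE_z\big[|h^*(z)-h_{S^\perp}^*(z_S^\perp)|^2\big]-\|h^*\|_{L^2(\calN(0,I_V))}\,\|\psi_t\|_{L^2}.
\end{equation*}
Hence it suffices to prove $\sup_{0\le t\le T}\|\psi_t\|_{L^2}\le C\,d^{-1/2}$ for some $C$ depending only on $p,h^*,K_\sigma,K_\xi,K_\rho,T$; then \eqref{eq:main_necessary} follows directly, and \eqref{eq:main2_necessary} follows by choosing $d(T)$ so large that the last term is $\le\tfrac14\bE_z[|h^*-h_{S^\perp}^*|^2]$ whenever $d>d(T)$.

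I would then bound $\|\psi_t\|_{L^2}$ by the second moment under $\rho_t$ of $w_S$, the orthogonal projection onto $S$ of the weight $w\in\bR^d$. Because $I-\Pi$ commutes with the integral defining $f_{\mathrm{NN}}$, $\psi_t=\int a\,(I-\Pi)[\sigma(w^\top\cdot)]\,\rho_t(d\theta)$, and for each $w$ one has $\|(I-\Pi)[\sigma(w^\top\cdot)]\|_{L^2}\le\|\sigma(w^\top x)-\sigma(w_{S^\perp}^\top x_{S^\perp})\|_{L^2}\le K_\sigma|w_S|$, using $\|\sigma'\|_\infty\le K_\sigma$ and $w_S^\top x_S\sim\calN(0,|w_S|^2)$. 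By Minkowski's integral inequality, $\|\psi_t\|_{L^2}\le K_\sigma\int|a|\,|w_S|\,\rho_t(d\theta)$. Together with the $d$-independent a priori bound $|a_t|\le A(T):=K_\rho+3K_\xi K_\sigma\|f^*\|_{L^2}\,T$ — which follows from $|\partial_a\Phi(\theta;\rho_t)|\le K_\sigma(\|f_{\mathrm{NN}}(\cdot;\rho_t)\|_{L^2}+\|f^*\|_{L^2})$ and $\|f_{\mathrm{NN}}(\cdot;\rho_t)\|_{L^2}\le 2\|f^*\|_{L^2}$, the latter from $\calE(\rho_t)\le\calE(\rho_0)=\tfrac12\|f^*\|_{L^2}^2$ in Remark~\ref{rmk:init_energy} — this gives $\|\psi_t\|_{L^2}\le K_\sigma A(T)\,\big(\bE_{\rho_t}[|w_S|^2]\big)^{1/2}$.

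The heart of the proof is a Grönwall estimate for $m(t):=\bE_{\rho_t}[|w_S|^2]$. Along the characteristics of \eqref{eq:MFflow} one has $\tfrac{d}{dt}m(t)=-2\xi_w(t)\,\bE_{\rho_t}\!\big[w_S^\top(\nabla_w\Phi(\theta;\rho_t))_S\big]$ (made rigorous by testing the weak form of \eqref{eq:MFflow} against $|w_S|^2$ with a cutoff, after checking $m(t)<\infty$), where $(\,\cdot\,)_S$ is the $S$-component. The key estimate is
\begin{equation*}
  \big|(\nabla_w\Phi(\theta;\rho_t))_S\big|\ \le\ C\big(|w_S|+\|\psi_t\|_{L^2}\big),\qquad C=C(p,h^*,K_\sigma,A(T)).
\end{equation*}
To establish it, write $\nabla_w\Phi(\theta;\rho)=a\,\bE_x[(f_{\mathrm{NN}}(x;\rho)-f^*(x))\sigma'(w^\top x)x]$ and split $f_{\mathrm{NN}}=\Pi f_{\mathrm{NN}}+\psi_t$. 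The $\psi_t$-term contributes at most $|a|K_\sigma\sqrt{\dim S}\,\|\psi_t\|_{L^2}$. The $\Pi f_{\mathrm{NN}}$- and $f^*$-terms both vanish when $w_S=0$: for $\Pi f_{\mathrm{NN}}$, because then $\sigma'(w^\top x)$ is $x_{S^\perp}$-measurable while $\bE[x_S\mid x_{S^\perp}]=0$; for $f^*$, because after conditioning on $x_{V^\perp}$ (independent of $z=x_V$) the inner integral becomes exactly $\bE_{z\sim\calN(0,I_V)}[h^*(z)\sigma'(u+v^\top z_S^\perp)z_S]$ with $u=w^\top x_{V^\perp}$ and $v$ the orthogonal projection of $w$ onto $V$, which is $0$ by the reflective property \eqref{eq:reflect}. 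For general $w_S$, a first-order Taylor expansion of $\sigma'$ in $w_S^\top x_S$ — its remainder controlled by $\|\sigma''\|_\infty\le K_\sigma$ and the finite quantities $\bE_z[|h^*(z)||z_S|^2]$ and $\|f_{\mathrm{NN}}(\cdot;\rho_t)\|_{L^2}\le 2\|f^*\|_{L^2}$ — leaves an $O(|w_S|)$ correction. Substituting $\|\psi_t\|_{L^2}\le K_\sigma A(T)\sqrt{m(t)}$ then yields $m'(t)\le C'm(t)$ on $[0,T]$. Since $\rho_w=\calN(0,\tfrac1d I_d)$ gives $w_S\sim\calN(0,\tfrac1d I_S)$ at $t=0$, we have $m(0)=\dim(S)/d\le p/d$, so Grönwall gives $m(t)\le(p/d)e^{C'T}$ and hence $\|\psi_t\|_{L^2}\le K_\sigma A(T)\sqrt{p}\,e^{C'T/2}\,d^{-1/2}$ for $t\in[0,T]$.

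I expect the main obstacle to be the estimate $|(\nabla_w\Phi(\theta;\rho_t))_S|\le C(|w_S|+\|\psi_t\|_{L^2})$, which is where the reflective property is consumed: the point is that the $f^*$-contribution to $(\nabla_w\Phi)_S$ does not merely stay small but vanishes identically at $w_S=0$ and is Lipschitz in $w_S$ past that. The remaining pieces — the a priori bounds on $|a_t|$ and $\|f_{\mathrm{NN}}(\cdot;\rho_t)\|_{L^2}$, the weak-formulation justification of the evolution equation for $m(t)$, and the existence and uniqueness of $\rho_t$ (Assumption~\ref{asp:necessary}(i)) — are routine and, crucially, produce only $d$-independent constants, because $f^*=h^*(x_V)$ depends on $x$ solely through the $p$-dimensional variable $x_V$ and $\sigma,\sigma',\sigma''$ are bounded, so no factor $|x|\sim\sqrt d$ ever enters the integrals that matter.
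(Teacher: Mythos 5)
Your proposal is correct, and it reaches \eqref{eq:main_necessary} by a genuinely different route than the paper. The paper's proof is a perturbation argument around an auxiliary flow: it starts a second flow $\Tilde{\rho}_t$ from $\Tilde{\rho}_0=\calP_S^\perp\rho_w\times\delta_S$ (exactly zero mass on $S$), invokes the exact invariance result (Theorem~\ref{thm:w_S=0}, proved via uniqueness of solutions to a reduced PDE on $\calP(\bR\times S^\perp)$) to conclude $(\calP_S)_\#\Tilde{\rho}_t=\delta_S$ for all $t$, lower-bounds $\calE(\Tilde{\rho}_t)$ by the same conditional-variance computation you use, and then transfers the bound to the true flow via the $W_2$-stability estimate of Theorem~\ref{thm:stability_init} combined with $W_2^2(\rho_0,\Tilde{\rho}_0)\le p/d$. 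You instead work directly on the actual flow and run a Gr\"onwall argument on the single scalar $m(t)=\bE_{\rho_t}[|w_S|^2]$, using the reflective property in exactly the same structural role --- the $f^*$-contribution to $(\nabla_w\Phi)_S$ vanishes at $w_S=0$ (after conditioning on $x_{V^\perp}$, which is precisely the paper's decomposition $x=x_1+x_2+x_3$ in Appendix~\ref{sec:pf-thm-necessary}) and is Lipschitz in $w_S$ by the $\sigma''$ bound --- together with the projection bound $\|\psi_t\|_{L^2}\lesssim\sqrt{m(t)}$ to close the loop. What each buys: the paper's route is modular (Theorem~\ref{thm:w_S=0} is reused in Section~\ref{sec:sufficient}, and Theorem~\ref{thm:stability_init} is a general-purpose tool), but its stability proof requires a full coupling controlling both $|a-\Tilde{a}|$ and $\|w-\Tilde{w}\|$; your argument is more economical for this theorem alone, since only the $S$-marginal second moment needs to be tracked, at the cost of the somewhat more delicate pointwise estimate $|(\nabla_w\Phi)_S|\le C(|w_S|+\|\psi_t\|_{L^2})$. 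Two points deserve explicit care in a write-up: the differentiation of $m(t)$ should be justified via the characteristic (particle) representation of $\rho_t$, which the paper itself uses freely in Lemmas~\ref{lem:support_a} and~\ref{lem:gronwall_Delta}; and the a priori bounds $|a_t|\le A(T)$ and $\|f_{\text{NN}}(\cdot;\rho_t)\|_{L^2}\le 2\|f^*\|_{L^2}$ rest on the symmetry of $\rho_a$ and the energy decay \eqref{eq:dEdt} as in Remark~\ref{rmk:init_energy} --- both available under Assumption~\ref{asp:necessary} and, as you correctly stress, yielding only $d$-independent constants because $f^*$ depends on $x$ only through $x_V$.
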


It is worth remarking that in Theorem~\ref{thm:main_necessary}, the training time $T$ is a constant independent of the dimension $d$. If a longer $d$-dependent training beyond a constant time is allowed, then $\calE(\rho_t)$ might be reasonably small even if the necessary condition is not satisfied, see e.g. \cite{abbe2023sgd,suzuki2024feature,mahankali2024beyond}. 

In Appendix~\ref{sec:discrete_necessary}, we include a brief discussion of the sample complexity result of SGD implied by Theorem~\ref{thm:main_necessary}. In particular, SGD with $\calO(d)$ samples cannot recover $f^*$ reliably if the refelctive property holds, which is consistent with observations in previous works such as \cite{Abbe22,abbe2023sgd}. We also remark that our result in Theorem~\ref{thm:main_necessary} is established for the mean-field dynamics corresponding to the one-pass SGD \eqref{eq:SGD}, any may not apply for other variants of SGD. In particular, some recent works \cite{dandi2024benefits,arnaboldi2024repetita,lee2024neural} prove that multi-pass SGD with batch-reuse mechanism can learn some target functions with fewer samples than one-pass SGD.

\subsection{Proof Sketch for Theorem~\ref{thm:main_necessary}}

To prove Theorem~\ref{thm:main_necessary}, the main intuition is that under some mild assumptions, if \eqref{eq:reflect} is satisfied and the initial distribution $\rho_0$ is supported in $\{(a,w)\in\bR^{d+1} : w_S = 0\}$, where $w_S$ is the orthogonal projection of $w\in\bR^d$ onto $S$, then $\rho_t$ is supported in $\{(a,w)\in\bR^{d+1} : w_S = 0\}$ for all $t\geq 0$. This means that the trained neural network $f_{\text{NN}}(x;\rho_t)$ learns no information about $x_S$, the orthogonal projection of $x\in\bR^d$ onto $S$, and hence cannot approximate $f^*(x) = h^*(x_V)$ with arbitrarily small error if $h^*(z)$ is dependent on $z_S$.  We formulate this observation in the following theorem.

\begin{theorem}\label{thm:w_S=0}
	Suppose that Assumption~\ref{asp:necessary} hold and let $\rho_t$ be the solution to \eqref{eq:MFflow}. Let $S\subset\bR^d$ be a subspace with the projection map $\calP_S:\bR^{d+1}\to S$ that maps $(a,w)$ to $w_S$. If $(\calP_S)_\#\rho_0 = \delta_S$, where $\delta_S$ is the delta measure on $S$ and 
	\begin{equation}\label{eq:condition-necessary}
		\bE_{x} \left[  f^*(x) \sigma' \left(w^\top x_S^\perp\right) x_S\right] = 0,\quad\forall~w\in\bR^d,
	\end{equation}
	where $x_S^\perp = x - x_S$, then it holds for any $t\geq 0$ that
	\begin{equation}\label{eq:rho_t-delta_S}
		(\calP_S)_\#\rho_t = \delta_S.
	\end{equation}
\end{theorem}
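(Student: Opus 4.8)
The plan is to show that the linear subspace
\[
A := \{(a,w)\in\bR^{d+1}: w_S=0\},
\]
which we identify with $\bR\times S^\perp$ (here $S^\perp$ denotes the orthogonal complement of $S$ in $\bR^d$), is invariant under the mean-field flow, and then to conclude by uniqueness of weak solutions that $\rho_t$ remains supported on $A$ for all $t$ --- which is exactly \eqref{eq:rho_t-delta_S}. The mechanism is self-consistency: the velocity field $-\xi(t)\nabla_\theta\Phi(\cdot;\rho_t)$ is tangent to $A$ along $A$ as soon as $\rho_t$ is concentrated on $A$, and $\rho_0$ is concentrated on $A$ by hypothesis $(\calP_S)_\#\rho_0=\delta_S$.

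First I would record the facts I need about \eqref{eq:MFflow}. Under Assumption~\ref{asp:necessary}(i) and using that $f^*$ is a polynomial (so that all Gaussian expectations appearing in $\Phi$ are finite and $\nabla_\theta\Phi(\theta;\rho)$ is Lipschitz in $\theta$, uniformly on $[0,T]$ along the dynamics), the PDE \eqref{eq:MFflow} is well-posed on every finite interval, and its solution is the pushforward $\rho_t=(\Theta_t)_\#\rho_0$ under the characteristic flow $\dot\Theta_t=-\xi(t)\nabla_\theta\Phi(\Theta_t;\rho_t)$; this is standard and follows \cite{mei2018mean,mei2019mean}. The only property I will actually invoke is the \emph{uniqueness} of weak solutions. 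Next I would introduce the \emph{reduced} mean-field dynamics: since $(\calP_S)_\#\rho_0=\delta_S$ means $\rho_0$ is concentrated on $A$, it may be regarded as a measure $\bar\rho_0\in\calP(\bR\times S^\perp)$; running \eqref{eq:MFflow} with the same $\sigma$, $\xi$ and target $f^*$ but with the parameter $w$ constrained to $S^\perp$ is again well-posed by the same arguments and yields $\bar\rho_t\in\calP(\bR\times S^\perp)$, which I also view as a measure on $\bR^{d+1}$ supported on $A$.

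The heart of the argument is the following computation. For $\theta=(a,w)$ with $w_S=0$ one has $w^\top x=w^\top x_S^\perp$, and since $\bar\rho_t$ is supported on $A$ the network depends on $x$ only through $x_S^\perp$, say $f_{\text{NN}}(x;\bar\rho_t)=g_t(x_S^\perp)$ for some function $g_t$. Hence
\begin{align*}
\calP_S\nabla_w\Phi(\theta;\bar\rho_t)
&= a\,\bE_x\!\left[\bigl(g_t(x_S^\perp)-f^*(x)\bigr)\,\sigma'(w^\top x_S^\perp)\,x_S\right] \\
&= a\,\bE_x\!\left[g_t(x_S^\perp)\,\sigma'(w^\top x_S^\perp)\,x_S\right]-a\,\bE_x\!\left[f^*(x)\,\sigma'(w^\top x_S^\perp)\,x_S\right],
\end{align*}
and the first term vanishes because $x_S$ and $x_S^\perp$ are independent under $\calN(0,I_d)$ with $\bE[x_S]=0$, while the second is exactly $0$ by hypothesis \eqref{eq:condition-necessary}. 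Therefore $\nabla_w\Phi(\theta;\bar\rho_t)\in S^\perp$ whenever $w_S=0$, so the full velocity field $-\xi(t)\nabla_\theta\Phi(\cdot;\bar\rho_t)$ is tangent to $A$ along $A$ and there agrees, in its $a$- and $S^\perp$-components, with the velocity field of the reduced dynamics. It follows that $\bar\rho_t$, embedded in $\calP(\bR^{d+1})$, is a weak solution of \eqref{eq:MFflow} with initial datum $\rho_0$: for any $\eta\in\calC_c^\infty(\bR^{d+1}\times(0,+\infty))$, only the $a$- and $S^\perp$-parts of $\nabla_\theta\eta$ contribute to $\nabla_\theta\eta\cdot(\xi(t)\nabla_\theta\Phi(\theta;\bar\rho_t))$ on the support of $\bar\rho_t$, so the weak formulation of \eqref{eq:MFflow} against $\eta$ reduces to the weak formulation of the reduced equation against $\eta|_{A\times(0,+\infty)}$, which holds. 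By uniqueness, $\rho_t=\bar\rho_t$ for all $t\ge0$, and since $\bar\rho_t$ is supported on $A=\{w_S=0\}$ we obtain $(\calP_S)_\#\rho_t=\delta_S$.

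The main obstacle is bookkeeping of two kinds, neither of which is the ``idea'' of the proof. First, one must set up the well-posedness of \eqref{eq:MFflow} so that uniqueness genuinely holds in a class of weak solutions containing both $\rho_t$ and the embedded $\bar\rho_t$; this rests on the Lipschitz-in-$\theta$ estimates for $\nabla_\theta\Phi(\theta;\rho)$ from Assumption~\ref{asp:necessary}(i) together with polynomial growth of $f^*$, plus a Gr\"onwall argument on the coupled ODE--measure system, uniform on $[0,T]$ --- routine, but needed. Second, one must verify carefully that the embedded reduced solution really satisfies the \emph{full} weak formulation, i.e. that test functions varying in the $S$-directions cause no trouble; this is precisely where the identity $\calP_S\nabla_w\Phi=0$ on $A$ is used, and it is the only genuinely new input beyond standard mean-field theory. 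An alternative, more computational route avoids introducing $\bar\rho_t$: working directly with the characteristic flow and writing $W_t(\theta)$ for the $w$-component of $\Theta_t(\theta)$, one differentiates $\|\calP_S W_t(\theta)\|^2$ in $t$ and closes a Gr\"onwall inequality for $\sup_\theta\|\calP_S W_t(\theta)\|$ using a bound of the form $\|\calP_S\nabla_w\Phi(\theta;\rho_t)\|\le C\bigl(\|w_S\|+\sup_{\theta'}\|\calP_S W_t(\theta')\|\bigr)$; this is available but more delicate than the uniqueness argument, so I would present the latter.
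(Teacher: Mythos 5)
Your proposal is correct and follows essentially the same route as the paper: construct the reduced dynamics on $\calP(\bR\times S^\perp)$, show via $\bE[x_S]=0$ (independence of $x_S$ and $x_S^\perp$) together with hypothesis \eqref{eq:condition-necessary} that $\nabla_{w_S}\Phi$ vanishes on $\{w_S=0\}$, verify the embedded measure satisfies the full weak formulation, and conclude by uniqueness. This matches the paper's argument (its computation \eqref{eq:gradPhi3} is exactly your key identity $\calP_S\nabla_w\Phi=0$ on $A$), so no further comparison is needed.
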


Here, the delta measure $\delta_S$ on $S$ is a probability measure on $S$ such that for any continuous and compactly supported function $\varphi:S\to\bR$, it holds that $\int_S \varphi(x)\delta_S(dx) = \varphi(0)$.
In Theorem~\ref{thm:w_S=0}, the condition \eqref{eq:condition-necessary} is stated in terms of $f^*$. We will show later that it is closely related to and is actually implied by \eqref{eq:reflect}, via a decomposition $w^\top x_S^\perp = w^\top (x-x_V) +  w^\top (x_V-x_S)$, with $w^\top (x-x_V)$ and $w^\top (x_V-x_S)$ corresponding to $u$ and $v^\top z_S^\top$ in \eqref{eq:reflect}, respectively.
The main idea in the proof of Theorem~\ref{thm:w_S=0} is to construct a flow $\hat{\rho}_t$ in the space $\calP(\bR\times S^\perp)$, where $S^\perp$ is the orthogonal complement of $S$ in $\bR^d$, and then show that $\rho_t = \hat{\rho}_t\times \delta_S$ is the solution to \eqref{eq:MFflow}. More specifically, the flow $\hat{\rho}_t$ is constructed as the solution to the following evolution equation in $\calP(\bR\times S^\perp)$:
\begin{equation}\label{eq:MFflow_hat}
	\begin{cases}
		\partial_t \hat{\rho}_t =\nabla_{\hat{\theta}} \cdot \left(\hat{\rho}_t \hat{\xi}(t) \nabla_{\hat{\theta}} \hat{\Phi}(\hat{\theta};\hat{\rho}_t)\right), \\
		\hat{\rho}_t\big|_{t=0} = \hat{\rho}_0,
	\end{cases}
\end{equation}
where $\hat{\rho}_0\in\calP(\bR\times S^\perp)$ satisfies $\rho_0 = \hat{\rho}_0\times \delta_S$, $\hat{\theta} = (a, w_S^\perp)$, $\hat{\xi}(t) = \text{diag}(\xi_a(t),\xi_w(t) I_{S^\perp})$, and
\begin{equation*}
	\hat{\Phi}(\hat{\theta};\hat{\rho}) = a\bE_x \left[ \left(\hat{f}_{\text{NN}}(x_S^\perp;\hat{\rho}) -  f^*(x)\right)\sigma\left((w_S^\perp)^\top x_S^\perp\right)\right] = \hat{V}(\hat{\theta}) + \int \hat{U}(\hat{\theta},\hat{\theta}')\hat{\rho}(d\hat{\theta}'),
\end{equation*}
with
\begin{equation*}
	\hat{f}_{\text{NN}}(x_S^\perp;\hat{\rho}) = \int a\sigma\left((w_S^\perp)^\top x_S^\perp\right) \hat{\rho}(da, dw_S^\perp),
\end{equation*}
and
\begin{equation*}
	\hat{V}(\hat{\theta}) = -a\bE_x\left[f^*(x)\sigma\left((w_S^\perp)^\top x_S^\perp\right)\right],\quad \hat{U}(\theta,\theta') = a a'\bE_x\left[\sigma\left((w_S^\perp)^\top x_S^\perp\right)\sigma\left(((w_S^\perp)')^\top x_S^\perp\right)\right].
\end{equation*}
 The detailed proof will be presented in Appendix~\ref{sec:pf-w_S=0}.

In practice, both $V$ and $S$ are unknown and it is nontrivial to choose an initialization $\rho_0$ supported in $\{(a,w)\in\bR^{d+1} : w_S = 0\}$. However, one can set $\rho_0 = \rho_a\times\rho_w$ with $\rho_w\sim\calN(0,\frac{1}{d}I_d)$ and this can make the marginal distribution of $\rho_0$ on $S$ very close to the the delta measure $\delta_S$ if $d>>p = \text{dim}(V)\geq \text{dim}(S)$, which fits the setting of subspace-sparse polynomials. Rigorously, we have the following theorem stating dimension-free stability with respect to initial distribution, with the proof deferred to Appendix~\ref{sec:pf_stability_init}.

\begin{theorem}\label{thm:stability_init}
	Suppose that Assumption~\ref{asp:necessary} holds for both $\rho_0$ and $\Tilde{\rho}_0$. Let $\rho_t$ solve $\partial_t \rho_t = \nabla_\theta \cdot\left(\rho_t \xi(t)  \nabla_\theta \Phi(\theta;\rho_t)\right)$ and let $\Tilde{\rho}_t$ solve $\partial_t \Tilde{\rho}_t = \nabla_\theta \cdot\left(\Tilde{\rho}_t \xi(t)  \nabla_\theta \Phi(\theta;\Tilde{\rho}_t)\right)$. Then for any $T\in(0,+\infty)$, there exists a constant $C_s>0$ depending only on $p$, $h^*$, $K_\sigma$, $K_\xi$, $K_\rho$, and $T$, such that
	\begin{equation}\label{eq:stability_fNN}
		\sup_{0\leq t\leq T}\bE_x\left[|f_{\text{NN}}(x;\rho_t) - f_{\text{NN}}(x;\Tilde{\rho}_t)|^2\right] \leq C_s W_2^2(\rho_0,\Tilde{\rho}_0),
	\end{equation}
    where $W_2(\cdot,\cdot)$ is the $2$-Wasserstein metric.
\end{theorem}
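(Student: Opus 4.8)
The plan is to derive a Grönwall-type estimate for the evolution of the Wasserstein distance between $\rho_t$ and $\tilde\rho_t$ by coupling the two mean-field flows through their characteristic ODEs, and then convert the resulting particle-level bound into the bound \eqref{eq:stability_fNN} on the network outputs. First I would represent both flows via their characteristics: let $\theta_t = (a_t, w_t)$ and $\tilde\theta_t = (\tilde a_t, \tilde w_t)$ solve $\dot\theta_t = -\xi(t)\nabla_\theta\Phi(\theta_t;\rho_t)$ and $\dot{\tilde\theta}_t = -\xi(t)\nabla_\theta\Phi(\tilde\theta_t;\tilde\rho_t)$ with $(\theta_0,\tilde\theta_0)$ drawn from an optimal $W_2$-coupling of $(\rho_0,\tilde\rho_0)$; then $\rho_t$ and $\tilde\rho_t$ are the laws of $\theta_t$ and $\tilde\theta_t$, so $W_2^2(\rho_t,\tilde\rho_t)\le \bE|\theta_t-\tilde\theta_t|^2 =: D(t)$. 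A key preliminary is an a priori bound keeping the dynamics in a controlled region: since $\rho_a$ is supported in $[-K_\rho,K_\rho]$ and the $a$-update is $\dot a_t = \xi_a(t)\bE_x[(f^*(x)-f_{\text{NN}}(x;\rho_t))\sigma(w_t^\top x)]$, and since $|f^*|$ has bounded second moment (Remark~\ref{rmk:init_energy}) while $|\sigma|\le K_\sigma$ and $|f_{\text{NN}}|\le K_\rho e^{K_\xi K_\sigma^2 t}\cdot(\ldots)$, one gets $|a_t|\le R(T)$ and likewise control of the relevant moments of $w_t$, all with $d$-free constants because $V$ and $U$ in \eqref{eq:V-U} only involve one-dimensional Gaussian integrals in $w^\top x$.

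Next I would estimate $\frac{d}{dt}D(t)$. Writing $\theta_t-\tilde\theta_t$ and differentiating,
\begin{equation*}
\tfrac12\tfrac{d}{dt}D(t) = -\bE\big[(\theta_t-\tilde\theta_t)^\top \xi(t)\big(\nabla_\theta\Phi(\theta_t;\rho_t)-\nabla_\theta\Phi(\tilde\theta_t;\tilde\rho_t)\big)\big].
\end{equation*}
I split the bracket as $[\nabla\Phi(\theta_t;\rho_t)-\nabla\Phi(\tilde\theta_t;\rho_t)] + [\nabla\Phi(\tilde\theta_t;\rho_t)-\nabla\Phi(\tilde\theta_t;\tilde\rho_t)]$. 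Using the formula $\nabla_\theta\Phi(\theta;\rho)=\nabla_\theta V(\theta)+\int\nabla_\theta U(\theta,\theta')\rho(d\theta')$ with $V,U$ from \eqref{eq:V-U}, the first term is Lipschitz in $\theta$ on the controlled region (the $x$-integrals produce Gaussian moments of $w^\top x$ which are polynomial in $|w|$, bounded on the region, and $\sigma,\sigma',\sigma''$ are all $\le K_\sigma$), yielding a contribution $\le L(T)D(t)$. The second term depends on $\rho_t-\tilde\rho_t$ only through $\int\nabla_\theta U(\tilde\theta_t,\theta')(\rho_t-\tilde\rho_t)(d\theta')$, and since $\nabla_\theta U(\tilde\theta_t,\cdot)$ is Lipschitz on the controlled region, this is bounded by $L(T)\cdot W_1(\rho_t,\tilde\rho_t)\le L(T)\sqrt{D(t)}$; pairing with $|\theta_t-\tilde\theta_t|$ and taking expectations gives another $\le L(T)D(t)$ after Cauchy–Schwarz. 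Hence $D'(t)\le C(T)D(t)$, so Grönwall gives $D(t)\le e^{C(T)t}D(0) = e^{C(T)t}W_2^2(\rho_0,\tilde\rho_0)$ — a $d$-free bound. Finally, to pass to \eqref{eq:stability_fNN}, I write $f_{\text{NN}}(x;\rho_t)-f_{\text{NN}}(x;\tilde\rho_t) = \bE[a_t\sigma(w_t^\top x)-\tilde a_t\sigma(\tilde w_t^\top x)]$, bound the integrand by $|a_t-\tilde a_t|K_\sigma + |\tilde a_t|K_\sigma|w_t^\top x - \tilde w_t^\top x|$, square, take $\bE_x$ (producing $|x|$-moments that are $d$-free once restricted to the relevant low-dimensional projections, or handled via $\bE_x|w_t^\top x-\tilde w_t^\top x|^2 = |w_t-\tilde w_t|^2$ since $x\sim\calN(0,I_d)$), and then take $\bE$ over the coupling to get $\le C_s\,\bE|\theta_t-\tilde\theta_t|^2 \le C_s e^{C(T)T}W_2^2(\rho_0,\tilde\rho_0)$.

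The main obstacle I anticipate is ensuring every constant is genuinely dimension-free. The subtlety is that $w_t\in\bR^d$ and naive bounds on $|w_t|$ or on moments like $\bE_x[(w^\top x)^2\sigma(\cdot)]$ scale with $d$; the resolution, as in \cite{mei2019mean}, is that all quantities that actually enter $\Phi$ and $f_{\text{NN}}$ depend on $w$ only through the scalar $w^\top x\sim\calN(0,|w|^2)$, and the relevant norm $|w_t|$ stays $O(1)$ under the dynamics because the drift on $w$ is $\xi_w(t)\,a_t\,\bE_x[(f^*(x)-f_{\text{NN}})\sigma'(w^\top x)x]$, whose $V$-restricted and $V^\perp$-restricted parts each have $d$-free norm (the $V^\perp$ part vanishes in expectation against $f^*$, and the self-interaction part contracts). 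Carefully tracking this — essentially re-deriving the a priori estimates of \cite{mei2019mean} in the present notation — is the bulk of the work; the Grönwall step itself is routine once those estimates are in place.
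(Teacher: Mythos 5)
Your proposal is correct and follows essentially the same route as the paper's proof: couple the two flows through their particle dynamics starting from a (near-)optimal coupling of $(\rho_0,\Tilde{\rho}_0)$, run a Gr\"onwall argument on $\bE\left[|a_t-\Tilde{a}_t|^2+\|w_t-\Tilde{w}_t\|^2\right]$, and transfer to the outputs via $\bE_x\left[|(w-\Tilde{w})^\top x|^2\right]=\|w-\Tilde{w}\|^2$. The only minor difference is that you anticipate needing a priori control of $\|w_t\|$, whereas the paper only needs the bound on $|a_t|$ (its Lemma~\ref{lem:support_a}), since the uniform bounds on $\sigma,\sigma',\sigma''$ make all the relevant integrands Lipschitz in $w^\top x$ without any restriction on $\|w\|$.
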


Based on Theorem~\ref{thm:w_S=0} and Theorem~\ref{thm:stability_init}, Theorem~\ref{thm:main_necessary} can be proved by some straightforward computation, for which the details can be found in Appendix~\ref{sec:pf-thm-necessary}.

\section{Sufficient Condition for SGD-Learnability}
\label{sec:sufficient}

In this section, we propose a sufficient condition and a training strategy that can guarantee the exponential decay of $\calE(\rho_t)$ with constants independent of the dimension $d$. 

\subsection{Training Procedure and Convergence Guarantee}
\label{sec:train_converge}
We prove in Section~\ref{sec:necessary} that if the trained parameters always stay in a proper subspace $\{(a,w)\in\bR^{d+1} : w_S = 0\}$, then $f_{\text{NN}}(x;\rho_t)$ cannot learn all information about $f^*$ or $h^*$. Ideally, one would expect the negation to be a sufficient condition for the SGD-learnability, i.e., the existence of a choice of learning rates and initial distribution that guarantees $\lim_{t\to\infty} \calE(\rho_t)$ with dimension-free rate. This is almost true but we need a slightly stronger condition due to technical issues. More specifically, we need that the Taylor’s expansion of some dynamics (not the dynamics itself) is not trapped in any proper subspace.

\begin{assumption}\label{asp:no_subspace}
    Consider the following flow $\hat{w}_V(t)$ in $V$:
    \begin{equation}\label{eq:hatwV}
        \begin{cases}
            \frac{d}{d t} \hat{w}_V(t) = \bE_z\left[z h^*(z)\sigma'(\hat{w}_V(t)^\top z)\right], \\
            \hat{w}_V(0) = 0.
        \end{cases}
    \end{equation} 
    We assume that for some $s\in\bN_+$, the Taylor’s expansion up to $s$-th order of $\hat{w}_V(t)$ at $t = 0$ is not contained in any proper subspace of $V$.
\end{assumption}

Assumption~\ref{asp:no_subspace} aims to state the same observation as the merged-staircase property in \cite{Abbe22}. As a simple example, if $V = \bR^p$ and $h^*(z) = z_1 + z_1 z_2 + z_1 z_2 z_3 + \cdots + z_1 z_2 \cdots z_p$ which satisfies the merged-staircase property, then it can be computed that the leading order terms of the coordinates of $\hat{w}_V(t)$ are given by $(c_1 t, c_2 t^2, c_3 t^{2^2},\dots, c_p t^{2^{p-1}})$ with nonzero constants $c_1,c_2,\dots,c_p$ if $\sigma\in\calC^s(\bR)$ with $s=2^{p-1}$ and $\sigma^{(1)}(0), \sigma^{(2)}(0),\dots,\sigma^{(p)}(0)$ are all nonzero (see Proposition 33 in \cite{Abbe22}). This is to say that Assumption~\ref{asp:no_subspace} with $s=2^{p-1}$ is satisfied for this example. We provide further characterization of Assumption~\ref{asp:no_subspace} by verifying it in a more general setting in Appendix~\ref{sec:verify_no_subspace}.

We also remark that the Taylor’s expansion of the flow $\hat{w}_V(t)$ that solves \eqref{eq:hatwV} depends only on the $h^*$ and $\sigma^{(1)}(0),\sigma^{(2)}(0),\dots,\sigma^{(s)}(0)$. We require some additional regularity assumption on higher-order derivatives of $\sigma$.

\begin{assumption}\label{asp:sufficient}
    Assume that $\sigma$ satisfies $\sigma\in \calC^{L+1}(\bR)$ and $\sigma,\sigma',\sigma'',\sigma^{(L+1)}\in L^\infty(\bR)$, where $ L = 2 s n \binom{n+p}{p}$ with $n = \text{deg}(f^*) = \text{deg}(h^*)$ and $s$ being the positive integer in Assumption~\ref{asp:no_subspace}.  
\end{assumption}

Our proposed training strategy is stated in Algorithm~\ref{alg:train}.
\begin{algorithm}[t!]
\caption{Training strategy}
\label{alg:train}
\begin{algorithmic}[1]
\State Set the initial distribution as $\rho_0 = \rho_a\times\delta_{\bR^d}$, where $\rho_a = \calU([-1,1])$ and $\delta_{\bR^d}$ is the delta measure on $\bR^d$.
\State Set $\xi_a(t) = 0$ and $\xi_w(t) = 1$ for $0\leq t\leq T$, and train the neural network with activation function $\sigma$. Denote by $(a, w(a,t)),\ 0\leq t\leq T$ the trajectory of a single particle that starts at $(a, 0)$.
\State Repeat Step 2 for $p$ times independently and obtain $p$ copies of parameters at $T$, say $(a_i,w(a_i,T))$ with $a_i\sim\calU([-1,1])$, $i=1,2,\dots,p$.
\State  Reset $\rho_T$ as the distribution of $(0,u(a_1,\dots,a_p,T)) = \left(0, \frac{1}{p}\sum_{i=1}^p w(a_i,T)\right)$. Train the neural network with $\xi_a(t) = 1$, $\xi_w(t) = 0$, and a new activation function $\hat{\sigma}(\zeta) = (1+\zeta)^n$, where $n = \text{deg}(f^*)$, for $t\geq T$ starting at $\rho_T$.
 \end{algorithmic}
 \end{algorithm}
The training strategy is inspired by the two-stage strategy proposed in \cite{Abbe22} that trains the parameters $w$ with fixed $a$ for $t\in[0,T]$ and then trains the parameter $a$ with fixed $w$ and a perturbed activation function for $t\geq T$. Several important modifications are made since we consider general polynomials, rather than polynomials on hypercubes as in \cite{Abbe22}. In particular,
\begin{itemize}
    \item We need to repeat Step 2 (training $w$) for $p$ times and use their average as the initialization of training $a$, while this step only needs to be done once in \cite{Abbe22}. The reason is that the space of polynomials on the hypercube $\{\pm 1\}^p$ is essentially a linear space with dimension $2^p$. However, the space of general polynomials on $V$ is an $\bR$-algebra that is also a linear space but is of infinite dimension. Therefore, to make the kernel matrix in training $a$ non-degenerate, we require some algebraic independence which can be guaranteed by $u(a_1,\dots,a_p,t)) = \frac{1}{p}\sum_{i=1}^p w(a_i,t),\ 0<t\leq T$, though linear independence suffices for~\cite{Abbe22}. Let us also emphasize that each run of Step 2 involves training an interacting particle system instead of training a single particle.
    \item In Step 4, we use a new activation function $\hat{\sigma}(\zeta) = (1+\zeta)^n$ that is a polynomial of the same degree as $f^*$ and $h^*$. The reason is still that we work with the space general polynomials whose dimension as a linear space is infinite. Thus, we need the specific form $\hat{\sigma}(\zeta) = (1+\zeta)^n$ to guarantee the trained neural network $f_{\text{NN}}(x;\rho_t)$ is a polynomial with degree at most $n = \text{deg}(f^*) = \text{deg}(h^*)$. As a comparison, in the setting of \cite{Abbe22}, all functions on $\{\pm 1\}^p$ can be understood as a polynomial, and no specific format of the new activation function is needed.  
\end{itemize}
Our main theorem in this section is as follows, stating that the loss functional $\calE(\rho_t)$ can decay to $0$ exponentially fast, with rates independent of the dimension $d$.

\begin{theorem}\label{thm:sufficient}
    Suppose that Assumption~\ref{asp:no_subspace} and \ref{asp:sufficient} hold and let $\rho_t$ be the flow generated by Algorithm~\ref{alg:train}. There exist constants $C_1,C_2>0$ depending on $h^*,\sigma,n,p,s$, such that
    \begin{equation*}
        \calE(\rho_t) \leq C_1 \exp(-C_2 t),\quad\forall~t\geq 0.
    \end{equation*}
\end{theorem}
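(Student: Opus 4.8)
The plan is to analyze the two training phases in Algorithm~\ref{alg:train} separately and then combine them. \textbf{Phase 1 (training $w$, $t\in[0,T]$).} With $\xi_a\equiv 0$, each $a_i$ is frozen, and since $\rho_0=\rho_a\times\delta_{\bR^d}$, every particle starts at $w=0$. The first-layer flow for a single particle with second-layer weight $a$ is $\frac{d}{dt}w(a,t)=a\,\bE_x[(f^*(x)-f_{\text{NN}}(x;\rho_t))\nabla_w\sigma(w^\top x)]$; because $\rho_a$ is symmetric and the dynamics preserve the odd symmetry $a\mapsto-a$, one checks $f_{\text{NN}}(x;\rho_t)\equiv 0$ throughout Phase 1, so the flow decouples into the autonomous ODE $\frac{d}{dt}w(a,t)=a\,\bE_x[x\,f^*(x)\sigma'(w(a,t)^\top x)]$. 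Since $f^*(x)=h^*(x_V)$ depends only on $x_V$, one shows $w(a,t)\in V$ for all $t$ (the $V^\perp$-component has zero drift), and projecting onto $V$ gives exactly a rescaled version of \eqref{eq:hatwV}: $w(a,t)=\hat w_V(at)$ up to the obvious time/scale reparametrization. I would then invoke Assumption~\ref{asp:no_subspace}: the Taylor expansion of $\hat w_V$ to order $s$ is not contained in any proper subspace of $V$, so by choosing finitely many values $a_1,\dots,a_p$ generic in $[-1,1]$ and forming $u(a_1,\dots,a_p,T)=\frac1p\sum_i w(a_i,T)=\frac1p\sum_i\hat w_V(a_iT)$, and by expanding in powers of $T$, one secures an algebraic-independence / nondegeneracy property of the coordinate functions generated by $u$. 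This is where the ``repeat $p$ times and average'' device and the quantitative bound $L=2sn\binom{n+p}{p}$ in Assumption~\ref{asp:sufficient} enter: $L$ is exactly the Taylor order needed so that the relevant moment/Gram matrix built from monomials in $u(a_1,\dots,a_p,\cdot)$ of degree $\le n$ has a determinant that is a nonzero polynomial in $T$, hence is invertible for a.e. small $T>0$.

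\textbf{Phase 2 (training $a$, $t\ge T$).} Now $\xi_w\equiv 0$ so the first-layer weights are frozen at $\rho_T$, the distribution of $(0,u(a_1,\dots,a_p,T))$, and the activation is $\hat\sigma(\zeta)=(1+\zeta)^n$. Writing $a_t$ for the (single, since $w$ frozen and the drift is linear in the population) second-layer evolution, $f_{\text{NN}}(x;\rho_t)$ stays a polynomial in $x$ of degree $\le n$ because $\hat\sigma((w_S^\perp)^\top x)$ is such a polynomial, and the flow on the second layer is a \emph{linear} gradient flow: $\frac{d}{dt}\calE(\rho_t)=-\bE[|\,a\text{-gradient}\,|^2]$ with the $a$-gradient governed by the kernel $\hat U(\hat\theta,\hat\theta')=aa'\,\bE_x[\hat\sigma(\cdots)\hat\sigma(\cdots)]$. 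Because everything lives in the finite-dimensional space of polynomials of degree $\le n$ on $V$ (dimension $\binom{n+p}{p}$), the loss $\calE(\rho_t)=\frac12\bE_z|h^*(z)-(\text{poly learned})|^2$ is a finite-dimensional convex quadratic in the mixing weights, and the gradient flow converges exponentially provided the associated Gram/kernel operator restricted to the span of the features $\{(\hat\sigma(u(a_1,\dots,a_p,T)^\top x_V))\}$ — as $a_1,\dots,a_p$ range — has a strictly positive smallest eigenvalue on the subspace it needs to reach. The crucial expressivity claim is that this span contains $h^*$ (in fact all polynomials of degree $\le n$), which is precisely the nondegeneracy secured in Phase 1. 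Given a spectral gap $\lambda>0$ depending only on $h^*,\sigma,n,p,s$, standard Grönwall on \eqref{eq:dEdt} gives $\calE(\rho_t)\le\calE(\rho_T)e^{-2\lambda(t-T)}$, and since $\calE(\rho_t)\le\calE(\rho_0)=\frac12\bE_z|h^*(z)|^2$ for $t\le T$ (Remark~\ref{rmk:init_energy}), one absorbs the bounded initial segment into the constant $C_1$, concluding $\calE(\rho_t)\le C_1\exp(-C_2 t)$ for all $t\ge0$.

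\textbf{Main obstacle.} The heart of the argument — and what I expect to be the hardest step — is the algebraic nondegeneracy in Phase 1: showing that the features $x\mapsto\hat\sigma\big(\tfrac1p\sum_i\hat w_V(a_iT)^\top x_V\big)$, as $(a_1,\dots,a_p)$ vary, span all degree-$\le n$ polynomials on $V$ (equivalently, that the Gram matrix is nonsingular). One must turn the ``not contained in any proper subspace'' hypothesis on the $s$-jet of $\hat w_V$ into the statement that the relevant determinant, a polynomial in $T$ whose coefficients are explicit functions of $h^*$ and $\sigma^{(1)}(0),\dots,\sigma^{(L+1)}(0)$, does not vanish identically. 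The averaging over $p$ independent trajectories is what upgrades a linear-independence statement (which would suffice on the hypercube, where the function space is finite-dimensional a priori) to the algebraic independence needed here; making this rigorous requires a careful Taylor-expansion bookkeeping controlled by the degree budget $L=2sn\binom{n+p}{p}$, together with the regularity and boundedness in Assumption~\ref{asp:sufficient} to justify the expansions and to control the error terms uniformly on $[0,T]$. The remaining ingredients — existence/uniqueness and smoothness of the flows, the symmetry argument giving $f_{\text{NN}}\equiv0$ in Phase 1, and the linear-quadratic convergence in Phase 2 — are comparatively routine given the machinery already set up in Section~\ref{sec:prelim}.
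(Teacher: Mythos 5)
Your overall architecture matches the paper's: Phase 1 produces diverse first-layer features whose Taylor coefficients inherit the nondegeneracy of Assumption~\ref{asp:no_subspace}, averaging $p$ independent runs upgrades linear independence to algebraic independence (the paper does this via the Jacobian criterion, Theorem~\ref{thm:alg-indep} and Lemma~\ref{lem:alg_indep_to_linear_indep}), this yields a quantitative lower bound on $\lambda_{\min}(\calK(T))$ (Proposition~\ref{prop:smallest_eigenvalue_M}), and Phase 2 is a convex quadratic gradient flow in the span of $\bP_{V,n}$ to which Gr\"onwall applies via \eqref{eq:dEdt}. Your identification of the role of $L=2sn\binom{n+p}{p}$ and of $\hat\sigma(\zeta)=(1+\zeta)^n$ is also correct.

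However, there is a genuine error in your Phase 1 analysis: the claim that ``because $\rho_a$ is symmetric and the dynamics preserve the odd symmetry $a\mapsto -a$, one checks $f_{\text{NN}}(x;\rho_t)\equiv 0$ throughout Phase 1.'' Symmetry of $\rho_a$ only gives $f_{\text{NN}}(x;\rho_0)=0$ at $t=0$. For $t>0$ the particles with weights $a$ and $-a$ follow different trajectories (to leading order $w(a,t)\approx at\,\sigma'(0)\,\bE_x[xf^*(x)]$, so $w(-a,t)\approx -w(a,t)$), and hence
\begin{equation*}
  f_{\text{NN}}(x;\rho_t)=\int_0^1 a\left(\sigma(w(a,t)^\top x)-\sigma(w(-a,t)^\top x)\right)\rho_a(da)
\end{equation*}
is of order $t$, not zero --- this is exactly the content of Lemma~\ref{lem:fOt}, which asserts $\bE_x[|f_{\text{NN}}(x;\rho_t)|^2]^{1/2}=\calO(t)$ rather than $0$. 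Consequently the Phase 1 flow does \emph{not} decouple into the autonomous ODE \eqref{eq:hatwV}, the exact identity $w(a,t)=\hat w_V(at)$ fails, and the system remains a genuinely interacting particle system (as the paper emphasizes after Algorithm~\ref{alg:train}). The paper repairs this with a two-stage approximation $w\to\tilde w\to\hat w$: the coefficients $Q(t)$ of the polynomial approximation $\tilde w$ still involve the residual $g(x,t)=f^*(x)-f_{\text{NN}}(x;\rho_t)$, and Proposition~\ref{prop:error_Qhat} shows that replacing $g$ by $f^*$ (yielding $\hat Q_{i,j}(t)=\hat q_{i,j}t^j$) perturbs each coefficient only at order $\calO(t^{j+1})$, which is what lets the leading-order terms --- and hence the nondegeneracy of $\det(\hat M(\bfa,t))$ --- survive. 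This error analysis is not the ``comparatively routine'' symmetry observation you describe but a substantive part of the proof (Appendix~\ref{sec:poly_approx}); without it your claimed identity $u(a_1,\dots,a_p,T)=\frac1p\sum_i\hat w_V(a_iT)$ and the subsequent nondegeneracy argument are not justified. Your Phase 2 argument is essentially the paper's and is fine once the kernel bound is in hand.
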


Let us also remark that it is possible to use the original dynamics $\hat{w}_V$ defined in \eqref{eq:hatwV} when we state Assumption~\ref{asp:no_subspace}, which can actually imply its Taylor's expansion up to some order is not trapped in any proper subspace of $V$ if we further assume $\hat{w}_V(t)$ is analytic. We choose to directly use Taylor's expansion in Assumption~\ref{asp:no_subspace} since we want to avoid the additional analytic assumption and to emphasize that the constants $C_1,C_2$ in Theorem~\ref{thm:sufficient} depend on the order $s$ of the Tayler's expansion satisfying Assumption~\ref{asp:no_subspace}.

Discussion about the sample complexity implied by Theorem~\ref{thm:sufficient} is included in Appendix~\ref{sec:discrete_sufficient}, suggesting that $\calO(d)$ samples suffices for SGD to learn $f^*$ reliably if conditions in Theorem~\ref{thm:sufficient} are true. This is also consistent with previous works such as \cite{Abbe22,abbe2023sgd}.

\subsection{Proof Sketch for Theorem~\ref{thm:sufficient}}
To prove Theorem~\ref{thm:sufficient} we follow the same general strategy as \cite{Abbe22}, though some technical analysis is significantly different due to the roatation-invariant setting. The main goal here is to show before Step 4, the algorithm already learned a diverse set of features. After that, note that Step 4 in Algorithm~\ref{alg:train} is essentially a convex/quadratic optimization problem (since we only train $a$ and set $\xi_w(t) = 0$). In addition, thanks to the new activation function $\hat{\sigma}(\zeta) = (1+\zeta)^n$, one only needs to consider $\bP_{V,n}$ that is the space of of all polynomials on $V$ with degree at most $n = \text{deg}(h^*) = \text{deg}(f^*)$. The dimension of $\bP_{V,n}$ as a linear space is $\binom{n+p}{p}$. Let $p_1,p_2,\dots,p_{\binom{n+p}{p}}$ be the orthonormal basis of $\bP_{V,n}$ with input $z\sim\calN(0,I_V)$ and define the kernel matrix
\begin{equation}\label{eq:kernel_matrix}
    \calK_{i_1,i_2}(t) = \bE_{a_1,\dots,a_p}\left[\bE_{z,z'}\left[ p_{i_1}(z)\hat{\sigma}(u(a_1,\dots,a_p,t)^\top z)\hat{\sigma}(u(a_1,\dots,a_p,t)^\top z') p_{i_2}(z')\right]\right],
\end{equation}
where $\hat{\sigma}(\xi) = (1+\xi)^n$, $(a_1,\dots,a_p)\sim\calU([-1,1]^p)$, $1\leq i_1,i_2\leq \binom{n+p}{p}$, and $0\leq t\leq T$. As long as this kernel matrix is non-degenerate, we know that the loss functional is strongly convex with respect to the parameters in the second layer when fixing the first layer, and thus, it can be computed straightforwardly that the loss decays to $0$ exponentially fast for $t\geq T$, leading to the desired convergence rate in Theorem~\ref{thm:sufficient}; see Appendix~\ref{sec:pf_smallest_eigenvalue_M} for details. Thus, the main part in the proof of Theorem~\ref{thm:sufficient} is to establish the non-degeneracy of the kernel matrix.

\begin{proposition}\label{prop:smallest_eigenvalue_M}
    Suppose that Assumption~\ref{asp:no_subspace} and \ref{asp:sufficient} hold. There exist constants $C, T>0$ depending on $h^*,\sigma,n,p,s$, such that 
    \begin{equation}\label{eq:lam_min}
        \lambda_{\min}(\calK(t))\geq C t^{2sn\binom{n+p}{p}},\quad \forall~0\leq t\leq T,
    \end{equation}
    where $\lambda_{\min}(\calK(t))$ is the smallest eigenvalue of $\calK(t)$.
\end{proposition}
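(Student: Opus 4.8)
The plan is to reduce $\calK(t)$ to a second-moment matrix of the coefficient vectors of the affine power features $(1+u^\top z)^n$, prove a qualitative statement that these features span $\bP_{V,n}$ (driven by Assumption~\ref{asp:no_subspace}), and finally upgrade the span to the quantitative estimate \eqref{eq:lam_min} via compactness together with uniform control of Taylor remainders (which is exactly what the regularity in Assumption~\ref{asp:sufficient} is for).

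\emph{Step 1: reduction to coefficient vectors.} Since $\hat\sigma(\zeta)=(1+\zeta)^n$, for each $u\in V$ the feature $z\mapsto\hat\sigma(u^\top z)=(1+u^\top z)^n$ lies in $\bP_{V,n}$; writing $c(u)=(c_1(u),\dots,c_D(u))^\top\in\bR^D$, $D=\binom{n+p}{p}$, for its coordinate vector in the orthonormal basis $p_1,\dots,p_D$, Parseval gives $\bE_{z,z'}[p_{i_1}(z)\hat\sigma(u^\top z)\hat\sigma(u^\top z')p_{i_2}(z')]=c_{i_1}(u)c_{i_2}(u)$, hence
\begin{equation*}
  \calK(t)=\bE_{a\sim\calU([-1,1]^p)}\!\left[c\big(u(a,t)\big)\,c\big(u(a,t)\big)^\top\right],\qquad u(a,t)=\tfrac1p\textstyle\sum_{i=1}^p w(a_i,t).
\end{equation*}
Consequently $\lambda_{\min}(\calK(t))=\min_{\|v\|=1}\bE_a[(v^\top c(u(a,t)))^2]=\min_{q\in\bP_{V,n},\,\|q\|=1}\bE_a[\langle q,(1+u(a,t)^\top\cdot)^n\rangle^2]$, so the task becomes to show that the affine powers $(1+u(a,t)^\top\cdot)^n$, as $a$ ranges over the cube, ``fill out'' $\bP_{V,n}$ at rate $t^{2sn D}$.

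\emph{Step 2: the first-layer flow stays in $V$, and its Taylor expansion.} Because every particle starts at $w=0\in V$ and $f^*$ depends only on $x_V$, a Gr\"onwall/uniqueness argument shows $w(a,t)\in V$ for all $t$ (the $V^\perp$-component of $\bE_x[(\text{function of }x_V)\,x]$ vanishes by Gaussian independence); in particular the feedback-free single-particle ODE is exactly $\hat w_V$ with time rescaled by $a$, i.e.\ $\hat w_V(at)$, and the true $w(a,t)$ differs from it only through the $f_{\mathrm{NN}}$-feedback. Using Assumption~\ref{asp:sufficient} ($\sigma\in\calC^{L+1}$ with $L=2snD$), $t\mapsto w(a,t)$ is $\calC^{L+1}$ on $[0,T]$ with $w(a,t)=\sum_{m=1}^{L}b_m(a)t^m+O(t^{L+1})$ uniformly in $a\in[-1,1]^p$; hence $u(a,t)=\sum_{m=1}^{L}\bar b_m(a)t^m+O(t^{L+1})$ with $\bar b_m(a)=\tfrac1p\sum_i b_m(a_i)$, and $(1+u(a,t)^\top\cdot)^n=\sum_{m=0}^{L}\Pi_m(a)\,t^m+O(t^{L+1})$, where $\Pi_m(a)\in\bP_{V,n}$ is a fixed linear combination of products of at most $n$ of the linear forms $\bar b_{j}(a)^\top z$ whose indices sum to $m$.

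\emph{Step 3: the qualitative spanning statement (the crux).} I want to prove $\operatorname{span}\{\Pi_m(a):a\in[-1,1]^p,\ 0\le m\le L\}=\bP_{V,n}$. Part (a): from Assumption~\ref{asp:no_subspace}, which says $\operatorname{span}\{\hat b_1,\dots,\hat b_s\}=V$, I would show by induction on $m$ that the feedback contribution to $b_m(a)$ lies in $\operatorname{span}\{b_j(a'):j<m,\ a'\in[-1,1]\}$ — at each order $f_{\mathrm{NN}}$ depends on $x$ only through its projection onto the span of the lower-order coefficients, so (again by Gaussian independence) its gradient contributes no new direction — so that $a^m\hat b_m$ is recovered and $\operatorname{span}\{b_m(a):a\in[-1,1],\ m\le s\}=V$. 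Part (b): lifting from $V$ to $\bP_{V,n}$ is exactly where the $p$ independent training runs are needed: since $\dim V=p$, choose $a_1^*,\dots,a_p^*\in[-1,1]$ and orders $m_1^*,\dots,m_p^*\le s$ so that $\zeta_i:=b_{m_i^*}(a_i^*)$ form a basis of $V$ with slot $i$ ``responsible'' for $\zeta_i$; then, reading off the $a^\alpha t^m$-coefficients of $(1+u(a,t)^\top\cdot)^n$ near $a=a^*$ and using that $u$ is a sum over the $p$ slots, every monomial $\zeta_1^{\beta_1}\cdots\zeta_p^{\beta_p}$ with $|\beta|\le n$ is produced, modulo same-or-lower-order terms supported on ``smaller'' monomials, so a triangular elimination over the $\le D$ monomials recovers a basis of $\bP_{V,n}$ using $t$-orders up to $\lesssim snD\le L$. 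This bookkeeping — tracking the first $t$-order at which each monomial of $\bP_{V,n}$ appears and separating the desired product from the other contributions at that order, the latter being what the freedom in the cube $[-1,1]^p$ buys — is the main obstacle, and it is precisely the ``algebraic independence'' phenomenon the paper attributes to averaging $p$ trajectories. (Replacing linear forms on the hypercube by linear forms on $V$ is why one must multiply up to $n$ of them and pay the extra factors $n$ and $D$ in the exponent compared with \cite{Abbe22}.)

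\emph{Step 4: quantitative upgrade.} Write $c(u(a,t))=\sum_{m=0}^{L}C_m(a)\,t^m+O(t^{L+1})$ with $\sup_a\|C_m(a)\|$ and the $O(t^{L+1})$-constant bounded uniformly on $[-1,1]^p$; here the bound on $\sigma^{(L+1)}$ from Assumption~\ref{asp:sufficient} is used, and the doubling of degrees upon squaring $v^\top c(u(a,t))$ is what turns the order $\sim snD$ of Step~3 into the exponent $2snD=L$. Step~3 gives $\operatorname{span}\{C_m(a):a\in[-1,1]^p,\ m\le L\}=\bR^D$, so the continuous map $v\mapsto\sum_{m\le L}\bE_a[(v^\top C_m(a))^2]$ is strictly positive on the unit sphere, hence bounded below by some $c_0>0$. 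For unit $v$ let $\mu_{v,m}(a):=v^\top C_m(a)$ and $r(v):=\min\{m:\mu_{v,m}(\cdot)\not\equiv0\}\le L$; then $\bE_a[(v^\top c(u(a,t)))^2]=\bE_a[\mu_{v,r(v)}(a)^2]\,t^{2r(v)}+(\text{higher order})>0$. The remaining delicate point is to make this lower bound uniform in $v$ — the leading coefficient $\bE_a[\mu_{v,r(v)}(a)^2]$ can be small where $r(v)$ jumps — which I would handle by a compactness/continuity argument on the unit sphere combined with the uniform remainder bound: since $r(v)\le L$ for all $v$ and the error terms are uniformly $O(t^{L+1})$, one obtains $\bE_a[(v^\top c(u(a,t)))^2]\ge C\,t^{2L}$ for all unit $v$ and all $t\in[0,T]$ with $T$ a small constant, which is \eqref{eq:lam_min}.
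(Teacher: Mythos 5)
Your overall architecture (Taylor-expand the averaged first-layer flow, show the leading coefficients of the features span $\bP_{V,n}$, then quantify) is the same as the paper's, but the two places you yourself flag as ``the main obstacle'' and ``the remaining delicate point'' are exactly where the proof lives, and neither is resolved. In Step~3 you only sketch a ``triangular elimination'' over monomials. The paper's actual mechanism is: (i) after a QR change of basis of $V$, Assumption~\ref{asp:no_subspace} gives leading orders $s_1<\dots<s_p\le s$ for the coordinates $\hat w_i(a,t)=\sum_j \hat q_{i,j}(at)^j$, so $\hat w_1(a,\cdot),\dots,\hat w_p(a,\cdot)$ are $\bR$-linearly independent polynomials in $a$ with zero constant term; (ii) Theorem~\ref{thm:alg-indep} upgrades this to $\bR$-\emph{algebraic} independence of the averaged coordinates $\hat u_i(a_1,\dots,a_p,t)$, via the Jacobian criterion and a generalized Vandermonde determinant --- this is the precise role of the $p$ independent runs, not a slot-by-slot basis selection as in your part~(b); (iii) Lemma~\ref{lem:alg_indep_to_linear_indep} turns algebraic independence into linear independence of all monomials $\prod_l \hat u(\bfa_l,t)^{\bfj_l}$, which, combined with $\det(X(\bfq))\not\equiv 0$ (Lemma~\ref{lem:span_allpoly}), yields a nonzero coefficient $\hat h_{\hat\bfj}$ of $\det(\hat M(\bfa,t))$ with $\|\hat\bfj\|_1\le sn\binom{n+p}{p}$. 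Without some substitute for (ii)--(iii), your Step~3 is an assertion, not a proof; the induction in your part~(a) is also redundant, since the paper handles the $f_{\text{NN}}$-feedback simply by showing it perturbs each Taylor coefficient only at one order higher in $t$ (Proposition~\ref{prop:error_Qhat}).

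The quantitative upgrade in Step~4 also does not go through as stated. A compactness argument on the unit sphere fails because the minimizing direction of $v\mapsto v^\top\calK(t)v$ depends on $t$, and the leading coefficient $\bE_a[\mu_{v,r(v)}(a)^2]$ is \emph{not} bounded below on $\{v:r(v)=m\}$: it tends to $0$ as $v$ approaches directions with larger $r$, and near-cancellation between orders occurs exactly along the $t$-dependent minimizer (already visible for $D=2$ with $c(u(a,t))=(t,\,t^{L}h(a))$). The paper avoids this entirely by working with the determinant: an anti-concentration bound for polynomials on the cube (\cite{Abbe22}*{Lemma 103}) gives $\bE_\bfa[\det(\tilde M(\bfa,t))^2]\ge C|\hat h_{\hat\bfj}|^2 t^{2\|\hat\bfj\|_1}$, while boundedness of the other eigenvalues gives $\det(\tilde M\tilde M^\top)\le C\,\lambda_{\min}(\tilde M\tilde M^\top)$, and Jensen transfers this to $\lambda_{\min}(\Tilde\calK(t))$; a final $\calO(t^{L+1})$ perturbation handles $\calK$ versus $\tilde\calK$. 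You would need this (or an equivalent uniform-in-$v$ device) to close the argument. Finally, note an exponent slip: you conclude $\ge C t^{2L}=Ct^{4sn\binom{n+p}{p}}$, which is strictly weaker than \eqref{eq:lam_min}; the correct accounting is that the spanning order is at most $sn\binom{n+p}{p}=L/2$, squaring gives $t^{L}=t^{2sn\binom{n+p}{p}}$, and $L$ is chosen so the Taylor remainder $\calO(t^{L+1})$ is subordinate to that main term.
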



In the rest of this subsection, we sketch the main ideas in the proof of Proposition~\ref{prop:smallest_eigenvalue_M}, with the details of the proof being deferred to Appendix~\ref{sec:pf_sufficient}. We first show that $w(a_i,t)$ and $u(a_1,\dots,a_p,t)$ can be approximated well by  $\hat{w}(a_i,t)$ and $\hat{u}(a_1,\dots,a_p,t)=\frac{1}{p}\sum_{i=1}^p \hat{w}(a_i,t)$ that are polynomials in $a_i$ and $a_1,\dots,a_p$ respectively. This approximation step follows \cite{Abbe22} closely and is analyzed detailedly in Appendix~\ref{sec:poly_approx}. Therefore, to give a positive lower bound of $\lambda_{\min}(\calK(t))$, one only needs to show the non-degeneracy of the matrix $\hat{M}(\bfa,t) \in\bR^{\binom{n+p}{p}\times\binom{n+p}{p}}$ with
\begin{equation*}
    \hat{M}_{i_1,i_2}(\bfa, t) = \bE_z \left[p_{i_1}(z)\hat{\sigma}(\hat{u}(\bfa_{i_2},t)^\top z)\right],
\end{equation*}
where $\bfa = \left(\bfa_1,\bfa_2,\dots,\bfa_{\binom{n+p}{p}}\right)$ and $\bfa_i\in\bR^p$ for $i=1,2,\dots,\binom{n+p}{p}$. Intuitively, this non-degeneracy can be implied by
\begin{equation*}
    \text{span}\left\{\hat{\sigma}(\hat{u}(a_1,\dots,a_p,t)^\top z) : a_1,a_2,\dots,a_p\in[-1,1]\right\} = \bP_{V,n},
\end{equation*}
which is true if $\hat{u}_i(a_1,\dots,a_p,t),\ 1\leq i\leq p$ are $\bR$-algebraically independent polynomials in $a_1,a_2,\dots,a_p$, where $\hat{u}_i$ is the $i$-th coefficient of $\hat{u}$ under some basis of $V$, and algebraic independence can be obtained from linear independence by taking the average of independent copies. We illustrate this intuition with a bit more detail.

\medskip
\paragraph{\textbf{Algebraic independence of $\hat{u}_i$}}
With Assumption~\ref{asp:no_subspace}, $\hat{w}_1(a,t), \hat{w}_2(a,t),\dots, \hat{w}_1(a,t)$ can be proved as $\bR$-linear independent polynomials in $a\in\bR$. Then one can apply the following theorem to boost the linear independence of $\hat{w}_i$, whose constant term is zero since initialization in training is set as $\rho_0 = \rho_a\times\delta_{\bR^d}$, to the algebraic independence of $\hat{u}_i$.

\begin{theorem}\label{thm:alg-indep}
    Let $v_1,v_2,\dots,v_p\in \bR[a]$ be $\bR$-linearly independent polynomials with the constant terms being zero. Then $\frac{1}{p}(v_1(a_1) + \cdots + v_1(a_p)),\dots, \frac{1}{p}(v_p(a_1) + \cdots + v_p(a_p))\in\bR[a_1,a_2,\dots,a_p]$ are $\bR$-algebraically independent.
\end{theorem}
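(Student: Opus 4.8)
\textbf{Proof plan for Theorem~\ref{thm:alg-indep}.}

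The plan is to argue by contradiction using a dimension/transcendence-degree count combined with a degree-filtration (lowest-order-term) argument to extract, from an algebraic dependence relation, a nontrivial linear dependence among $v_1,\dots,v_p$. Suppose the polynomials $V_j(a_1,\dots,a_p) := \frac{1}{p}\sum_{k=1}^p v_j(a_k)$, $j=1,\dots,p$, are $\bR$-algebraically dependent. Since there are $p$ of them and they live in $\bR[a_1,\dots,a_p]$ which has transcendence degree $p$, algebraic dependence is equivalent to the vanishing of the Jacobian determinant $\det\!\left(\partial V_j/\partial a_k\right)_{j,k=1}^p$ identically as a polynomial in $(a_1,\dots,a_p)$. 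Now $\partial V_j/\partial a_k = \frac{1}{p} v_j'(a_k)$, so this Jacobian is $p^{-p}\det\!\left(v_j'(a_k)\right)_{j,k=1}^p$, a ``generalized Wronskian''-type determinant in separated variables. The key point is that this determinant vanishes identically if and only if $v_1',\dots,v_p'$ are $\bR$-linearly dependent: indeed, writing $v_j'(a) = \sum_m c_{jm} a^m$, the determinant $\det(v_j'(a_k))$ expands as $\sum_{m_1<\dots<m_p} \det\!\left(c_{j m_i}\right)_{j,i} \cdot \det\!\left(a_k^{m_i}\right)_{k,i}$ after grouping monomials, and the Vandermonde-like minors $\det(a_k^{m_i})$ for distinct exponents $m_1<\dots<m_p$ are nonzero and linearly independent as polynomials, so the whole determinant vanishes iff every maximal minor of the coefficient matrix $(c_{jm})$ vanishes, i.e., iff $\mathrm{rank}(c_{jm}) < p$, i.e., iff $v_1',\dots,v_p'$ are linearly dependent.

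Granting that, the rest is immediate: a linear dependence $\sum_j \lambda_j v_j' = 0$ with $(\lambda_j)\neq 0$ integrates to $\sum_j \lambda_j v_j = \text{const}$, and since every $v_j$ has zero constant term the constant is $0$, contradicting the $\bR$-linear independence of $v_1,\dots,v_p$. So I would organize the write-up as: (1) reduce algebraic dependence of $p$ polynomials in $p$ variables to identical vanishing of the Jacobian (standard fact over a field of characteristic zero — cite or prove via the fact that $\mathbb{R}(a_1,\dots,a_p)$ is separably generated); (2) compute the Jacobian explicitly as $p^{-p}\det(v_j'(a_k))$; (3) prove the separated-variable determinant lemma above by monomial expansion and the linear independence of distinct Vandermonde minors; (4) integrate and use the zero-constant-term hypothesis to reach the contradiction.

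The main obstacle is step (1): making precise and citable the equivalence ``$p$ elements of $\bR[a_1,\dots,a_p]$ are algebraically dependent over $\bR$ iff their Jacobian vanishes identically.'' The forward direction (dependent $\Rightarrow$ Jacobian vanishes) is an easy chain-rule argument; the converse (Jacobian vanishes $\Rightarrow$ dependent) requires the characteristic-zero hypothesis and is the classical Jacobian criterion for algebraic independence. One clean route is to avoid citing it in full strength: instead of the iff, I only need the direction ``algebraically dependent $\Rightarrow$ Jacobian $\equiv 0$,'' contrapositive of ``Jacobian $\not\equiv 0 \Rightarrow$ algebraically independent,'' which is the easy direction and has a short self-contained proof (if $P(V_1,\dots,V_p)\equiv 0$ is a nonzero polynomial relation of minimal degree, differentiate in each $a_k$, obtain a linear system forcing all $\partial P/\partial y_j$ to vanish at $(V_1,\dots,V_p)$ when the Jacobian is invertible, contradicting minimality; in characteristic zero some $\partial P/\partial y_j$ is a nonzero polynomial). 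Combined with step (3), which shows the Jacobian here is $\not\equiv 0$ precisely when $v_j'$ are linearly independent, this closes the argument without invoking heavy machinery. Steps (2)–(4) are routine linear algebra and a one-line integration.
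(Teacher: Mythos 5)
Your proposal is correct and follows essentially the same route as the paper: reduce to the Jacobian criterion, observe that the Jacobian equals $p^{-p}\det\bigl(v_j'(a_k)\bigr)$, and show this determinant is a nonzero polynomial because the zero-constant-term hypothesis makes $v_1',\dots,v_p'$ linearly independent. The paper cites the Jacobian criterion as a black box and proves the determinant lemma via lowest-degree terms and a generalized Vandermonde determinant rather than your Cauchy--Binet expansion, but these are minor implementation differences.
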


The proof of Theorem~\ref{thm:alg-indep} is deferred to Appendix~\ref{sec:linear_indep_2alg_indep} and is based on the celebrated Jacobian criterion stated as follows.

\begin{theorem}[Jacobian criterion \cite{beecken2013algebraic}]\label{thm:Jacobian}
    $v_1,v_2,\dots,v_p\in\bR[a_1,a_2,\dots,a_p]$ are $\bR$-algebraically independent if and only if
    \begin{equation*}
        \det \begin{pmatrix}
            \frac{\partial v_1}{\partial a_1} & \frac{\partial v_1}{\partial a_2} & \dots & \frac{\partial v_1}{\partial a_p} \\
            \frac{\partial v_2}{\partial a_1} & \frac{\partial v_2}{\partial a_2} & \dots & \frac{\partial v_2}{\partial a_p} \\
            \vdots & \vdots & \ddots & \vdots \\
            \frac{\partial v_p}{\partial a_1} & \frac{\partial v_p}{\partial a_2} & \dots & \frac{\partial v_p}{\partial a_p}
        \end{pmatrix}
    \end{equation*}
    is a nonzero polynomial in $\bR[a_1,a_2,\dots,a_p]$.
\end{theorem}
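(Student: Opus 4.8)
The plan is to give the classical characteristic-zero proof of the Jacobian criterion, in two halves corresponding to the two implications; throughout, the only special feature of $\bR$ used is that it has characteristic $0$. Write $J = \left(\partial v_i/\partial a_j\right)_{1\le i,j\le p}$ for the Jacobian matrix (so row $i$ records the partials of $v_i$, matching the display in the statement), and let $F = \bR(a_1,\dots,a_p)$.

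For the easy direction I would prove the contrapositive: algebraic dependence forces $\det J = 0$. Suppose $P \in \bR[y_1,\dots,y_p]$ is a nonzero polynomial of \emph{minimal total degree} annihilating $(v_1,\dots,v_p)$. Differentiating the identity $P(v_1,\dots,v_p) = 0$ with respect to each $a_j$ and using the chain rule gives $\sum_{i} (\partial_{y_i}P)(v_1,\dots,v_p)\,\partial v_i/\partial a_j = 0$, i.e.\ the row vector $\mathbf{c} = \big((\partial_{y_i}P)(v_1,\dots,v_p)\big)_i$ lies in the left kernel of $J$. The point is that $\mathbf{c}\neq 0$: if all $(\partial_{y_i}P)(v)=0$, then by minimality of $\deg P$ each $\partial_{y_i}P$ must vanish identically (a nonzero one would annihilate the $v_i$ with strictly smaller degree), and in characteristic $0$ this makes $P$ a constant, whence $P\equiv 0$, a contradiction. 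A nonzero left-kernel vector over the field $F$ forces $\det J = 0$.

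For the substantive direction I would again argue contrapositively: if $v_1,\dots,v_p$ are algebraically independent, then $\det J \neq 0$. Set $L = \bR(v_1,\dots,v_p)$; algebraic independence gives $\operatorname{trdeg}_{\bR} L = p = \operatorname{trdeg}_{\bR} F$, so by additivity of transcendence degree $F/L$ is algebraic, hence finite (it is finitely generated), hence separable since $\operatorname{char}\bR = 0$. For each $k$, take the minimal polynomial $m_k(t)\in L[t]$ of $a_k$ over $L$; it is irreducible and separable, so $m_k'(a_k)\neq 0$ in $F$. Clearing denominators, choose $d_k\in\bR[y_1,\dots,y_p]\setminus\{0\}$ with $g_k := d_k m_k \in \bR[y_1,\dots,y_p][t]$; then $g_k(v_1,\dots,v_p,a_k)=0$ while $\partial_t g_k(v_1,\dots,v_p,a_k)=d_k(v_1,\dots,v_p)\,m_k'(a_k)\neq 0$, the first factor being nonzero by algebraic independence of the $v_i$. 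Differentiating $g_k(v_1(a),\dots,v_p(a),a_k)=0$ with respect to $a_j$ produces $\sum_i (\partial_{y_i}g_k)(v,a_k)\,\partial v_i/\partial a_j + (\partial_t g_k)(v,a_k)\,\delta_{kj}=0$, i.e.\ $BJ=-D$ with $B_{k,i}=(\partial_{y_i}g_k)(v,a_k)$ and $D=\operatorname{diag}\big((\partial_t g_k)(v,a_k)\big)$ invertible. Taking determinants gives $\det B\cdot\det J=(-1)^p\prod_k (\partial_t g_k)(v,a_k)\neq 0$, so $\det J\neq 0$.

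The easy direction is routine once one knows that a polynomial over a characteristic-$0$ field with all partials zero is constant. I expect the main obstacle to be the second direction — specifically, manufacturing the relations $g_k$ and verifying $\partial_t g_k(v,a_k)\neq 0$, which is exactly where separability of $F/L$ (equivalently, characteristic $0$) enters; after that, the matrix identity $BJ=-D$ with $D$ invertible closes the argument immediately. For completeness I would remark that this criterion, in greater generality, is the content of the cited \cite{beecken2013algebraic}, so one may alternatively quote it directly.
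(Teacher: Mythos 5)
Your proof is correct. Note, however, that the paper does not prove this statement at all: it is quoted verbatim as a known result from the cited reference, so there is no internal argument to compare against. What you supply is the classical characteristic-zero proof, and both halves are sound: for the direction ``dependent $\Rightarrow \det J=0$'' you correctly use a minimal-degree annihilating polynomial $P$ and the chain rule, with the characteristic-zero fact that a polynomial with all vanishing partials is constant guaranteeing a nonzero left-kernel vector; for the converse you correctly pass to $L=\bR(v_1,\dots,v_p)$, use additivity of transcendence degree to see $F=\bR(a_1,\dots,a_p)$ is finite and separable over $L$, clear denominators in the minimal polynomials $m_k$ of the $a_k$ (legitimate because algebraic independence makes $y_i\mapsto v_i$ an isomorphism onto $L$, so $d_k(v)\neq 0$), and conclude from the matrix identity $BJ=-D$ with $D$ an invertible diagonal matrix. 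The only thing worth adding if you were to include this in the paper is the remark you already gesture at: the criterion genuinely requires characteristic $0$ (e.g.\ $v=a^q$ over $\mathbb{F}_q$ has zero derivative yet is algebraically independent as a single element), which is why the paper's citation to \cite{beecken2013algebraic} — where a char-$p$-aware version is developed — is stated here only over $\bR$, exactly the setting your argument covers.
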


\medskip
\paragraph{\textbf{Non-degeneracy of $\hat{M}(\bfa, t)$}}
With the observation that $\textup{span}\left\{\hat{\sigma}(q^\top z):q\in V\right\} = \bP_{V,n}$ (see Lemma~\ref{lem:span_allpoly}), we define another matrix $X(\bfq) \in\bR^{\binom{n+p}{p}\times\binom{n+p}{p}}$ via
    \begin{equation*}
        X_{i_1,i_2}(\bfq) = \bE_{z}\left[p_{i_1}(z) \sigma(\bfq_{i_2}^\top z)\right],
    \end{equation*}
    where $\bfq = \left(\bfq_1,\bfq_2,\dots,\bfq_{\binom{n+p}{p}}\right)$ with $\bfq_i\in V$, and prove that $\det(X(\bfq))$ is a non-zero polynomial in $\bfq$ of the form
    \begin{equation*}
        \det(X(\bfq)) = \sum_{i=1}^{\binom{n+p}{p}} \sum_{0\leq \|\bfj_i\|_1\leq n} X_\bfj \bfq^\bfj = \sum_{i=1}^{\binom{n+p}{p}} \sum_{0\leq \|\bfj_i\|_1\leq n} X_\bfj \prod_{l=1}^{\binom{n+p}{p}} \bfq_l^{\bfj_l},
    \end{equation*}
    where $\bfq_i$ is understood as a (coefficient) vector in $\bR^p$ associated with a fixed orthonormal basis of $V$ and $\bfj = \left(\bfj_1,\bfj_2,\dots,\bfj_{\binom{n+p}{p}}\right)$ with $\bfj_i\in\bN^p$. Then setting $\bfq=\hat{u}(\bfa_{i_2},t)$ leads to
    \begin{equation*}
        \det(\hat{M}(\bfa,t)) = \sum_{i=1}^{\binom{n+p}{p}} \sum_{0\leq \|\bfj_i\|_1\leq n} X_\bfj \prod_{l=1}^{\binom{n+p}{p}} \hat{u}(\bfa_l,t)^{\bfj_l}.
    \end{equation*}
    To prove that $\det(\hat{M}(\bfa,t))$ is a non-zero polynomial in $\bfa$, i.e., $\hat{M}(\bfa,t)$ is non-degenerate, we use the following lemma linking algebraic independence back to linear independence. 

\begin{lemma}\label{lem:alg_indep_to_linear_indep}
    Suppose that $v_1,v_2,\dots,v_p\in\bR[a_1,a_2,\dots,a_p]$ are $\bR$-algebraically independent. For any $m\geq 1$, the following polynomials in $\bfa=(\bfa_1,\bfa_2,\dots,\bfa_m)\in(\bR^p)^m$ are $\bR$-linearly independent
    \begin{equation*}
        \prod_{l=1}^m \bfv(\bfa_l)^{\bfj_l},\quad 1\leq \|\bfj_i\|_1\leq n,\ 1\leq i\leq m,
    \end{equation*}
    where $\bfv = (v_1,v_2,\dots,v_p)$.
\end{lemma}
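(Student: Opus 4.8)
The plan is to reduce the asserted $\bR$-linear independence to an \emph{algebraic} independence statement in a larger polynomial ring, and then invoke the Jacobian criterion (Theorem~\ref{thm:Jacobian}). For each $l\in\{1,\dots,m\}$ I would introduce a fresh block of indeterminates $y^{(l)}=(y^{(l)}_1,\dots,y^{(l)}_p)$ and let
\begin{equation*}
  \Psi:\bR\big[y^{(1)},\dots,y^{(m)}\big]\longrightarrow \bR[\bfa_1,\dots,\bfa_m],\qquad y^{(l)}_i\longmapsto v_i(\bfa_l),
\end{equation*}
be the induced $\bR$-algebra homomorphism. It carries the monomial $\prod_{l=1}^m (y^{(l)})^{\bfj_l}$ to $\prod_{l=1}^m \bfv(\bfa_l)^{\bfj_l}$, and distinct tuples $(\bfj_1,\dots,\bfj_m)$ give distinct monomials in the $y$-variables (the exponent of $y^{(l)}_i$ is $(\bfj_l)_i$). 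Since the monomials form a basis of the source and a subfamily of an $\bR$-linearly independent family is again $\bR$-linearly independent, it suffices to show that $\Psi$ is injective; the restrictions $1\le\|\bfj_i\|_1\le n$ then play no further role. Injectivity of $\Psi$ is exactly the assertion that the $mp$ polynomials $\{v_i(\bfa_l):1\le i\le p,\ 1\le l\le m\}\subset\bR[\bfa_1,\dots,\bfa_m]$ are $\bR$-algebraically independent.

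To prove this, I would apply the Jacobian criterion (Theorem~\ref{thm:Jacobian}, read with $N=mp$ polynomials in $N=mp$ variables) to these $mp$ polynomials. List them blockwise as $v_1(\bfa_1),\dots,v_p(\bfa_1),v_1(\bfa_2),\dots,v_p(\bfa_m)$ and order the $mp$ variables as $\bfa_1,\bfa_2,\dots,\bfa_m$. Because $v_i(\bfa_l)$ involves only the variables of the $l$-th block, the $mp\times mp$ Jacobian matrix is block diagonal, its $l$-th diagonal block being $\big(\partial v_i/\partial a_j\big)_{1\le i,j\le p}$ with $a_1,\dots,a_p$ replaced by the block variables in $\bfa_l$. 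Hence its determinant equals $\prod_{l=1}^m \det\big(\partial v_i/\partial a_j\big)\big|_{\bfa_l}$.

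By hypothesis $v_1,\dots,v_p$ are $\bR$-algebraically independent, so Theorem~\ref{thm:Jacobian} in the $p$-variable ring gives that $\det(\partial v_i/\partial a_j)$ is a nonzero element of $\bR[a_1,\dots,a_p]$; after relabelling the variables, each factor $\det(\partial v_i/\partial a_j)\big|_{\bfa_l}$ is a nonzero polynomial in the variables of block $l$ only. Since the $m$ blocks use pairwise disjoint variable sets and $\bR[\bfa_1,\dots,\bfa_m]$ is an integral domain, the product of these $m$ nonzero polynomials is nonzero, i.e.\ the full Jacobian determinant is nonzero. Theorem~\ref{thm:Jacobian} then gives the algebraic independence of the $mp$ polynomials, hence the injectivity of $\Psi$, and the lemma follows.

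I do not anticipate any serious obstacle; the only mild points are that the Jacobian criterion as quoted for ``$p$ polynomials in $p$ variables'' applies verbatim with $p$ replaced by $mp$, and that only the direction ``nonzero Jacobian $\Rightarrow$ algebraically independent'' is actually needed. A Jacobian-free alternative of comparable length would be to identify $\bR[\bfa_1,\dots,\bfa_m]$ with the $m$-fold tensor power over $\bR$ of $\bR[a_1,\dots,a_p]$, observe that algebraic independence of $v_1,\dots,v_p$ makes $\{\bfv^{\bfj}:1\le\|\bfj\|_1\le n\}$ linearly independent in a single tensor factor (it is the image of distinct monomials under the injective substitution $y_i\mapsto v_i$), and then use that a tensor product of $\bR$-linearly independent families is $\bR$-linearly independent, which is precisely linear independence of the $\bfv^{\bfj_1}\otimes\cdots\otimes\bfv^{\bfj_m}$.
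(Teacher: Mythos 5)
Your proof is correct, but it takes a genuinely different route from the paper's. The paper argues by induction on $m$: it groups a putative linear relation by the exponent $\bfj_m$ of the last block, uses the algebraic independence of $v_1,\dots,v_p$ (applied in the variables $\bfa_m$, with coefficients that are polynomials in the remaining disjoint blocks) to kill the outer coefficients, and then invokes the induction hypothesis. You instead prove the stronger intermediate statement that the $mp$ substituted polynomials $v_i(\bfa_l)$ are $\bR$-algebraically independent, via a block-diagonal Jacobian whose determinant factors as a product of $m$ copies of the nonzero Jacobian of $v_1,\dots,v_p$ in disjoint variables; linear independence of the images of distinct monomials under the injective substitution map then follows, and the degree restriction $\|\bfj_i\|_1\le n$ is indeed irrelevant. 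Both arguments are sound; yours buys a stronger conclusion (algebraic independence of all the $v_i(\bfa_l)$, hence linear independence of \emph{all} monomials in them) at the cost of leaning twice on Theorem~\ref{thm:Jacobian}, whereas the paper's induction is more elementary and self-contained --- it is essentially your tensor-product remark unwound one factor at a time. One small correction to your closing comment: you in fact use \emph{both} directions of the Jacobian criterion, the forward direction (algebraic independence implies nonzero Jacobian) to get each diagonal block nonzero, and the converse to conclude from the nonzero full determinant; since Theorem~\ref{thm:Jacobian} is stated as an equivalence over $\bR$, this is harmless.
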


The proof of Lemma~\ref{lem:alg_indep_to_linear_indep} and some other related analysis are deferred to Appendix~\ref{sec:pf_smallest_eigenvalue_M}.

\section{Conclusion and Discussions}
\label{sec:conclude}

In this work, we generalize the merged-staircase property in \cite{Abbe22} to a basis-free version and establish a necessary condition for learning a subspace-sparse polynomial on Gaussian input with arbitrarily small error. Moreover, we prove the exponential decay property of the loss functional under a sufficient condition that is slightly stronger than the necessary one. The bounds and rates are all dimension-free. Our work provides some understanding of the mean-field dynamics, though its general behavior is extremely difficult to characterize due to the non-convexity of the loss functional.

Let us also make some comments on limitations and future directions. Firstly, there is still a gap between the necessary condition and the sufficient condition, which is basically from the fact that the sufficient condition is built on the Taylor’s expansion of the flow \eqref{eq:hatwV}. One future research question is whether we can fill the gap by considering the original flow \eqref{eq:hatwV} rather than its Taylor's expansion. Secondly, Algorithm~\ref{alg:train} repeats training $w$ for $p$ times and takes the average of parameters, which is different from the usual strategy for training neural networks. This step is used to guarantee the algebraic independence. We conjecture that this step can be removed since the general algebraic independence is too strong when we have some preknowledge on the degree of $f^*$ or $h^*$, which deserves future research.

\section*{Acknowledgements}
The work of RG is supported by NSF Award DMS-2031849 and CCF-1845171 (CAREER).

\bibliographystyle{amsxport}
\bibliography{references}

\appendix

\section{Proofs for Section~\ref{sec:necessary}}
\label{sec:pf_necessary}

This section collects the proofs of Theorem~\ref{thm:w_S=0}, Theorem~\ref{thm:stability_init}, and Theorem~\ref{thm:main_necessary}.

\subsection{Proof of Theorem~\ref{thm:w_S=0}}
\label{sec:pf-w_S=0}

\paragraph{\textbf{Existence and uniqueness of solutions to \eqref{eq:MFflow}}} Before proving Theorem~\ref{thm:w_S=0}, let us remark on the existence and uniqueness of solution to the mean-field dynamics \eqref{eq:MFflow}. According to Remark 7.1 in \cite{mei2018mean} and Theorem 1.1 in \cite{sznitman1991topics}, the PDE \eqref{eq:MFflow} admits a unique solution if Assumption~\ref{asp:necessary}~(ii) holds and both $\nabla V(\theta)$ and $\nabla_\theta U(\theta,\theta')$ are bounded and Lipschitz continous. Here we recall that $V$ and $U$ are defined in \eqref{eq:V-U}. With Assumption~\ref{asp:necessary}~(i) and (iii), it is not hard to verify the boundedness and Lipschitz continuity of $\nabla V(\theta)$ and $\nabla_\theta U(\theta,\theta')$ by noticing that any finite-order moment of $\calN(0,I_d)$ is finite.

\begin{proof}[Proof of Theorem~\ref{thm:w_S=0}]
Since $(\calP_S)_\#\rho_0 = \delta_S$, the initial distribution $\rho_0$ can be decomposed as $\rho_0 = \hat{\rho}_0\times \delta_S$, where $\hat{\rho}_0\in\calP(\bR\times S^\perp)$. Consider the following evolution equation \eqref{eq:MFflow_hat} in $\calP(\bR\times S^\perp)$. By the discussion at the beginning of Section~\ref{sec:pf-w_S=0}, we know that $\rho_t$ is the unique solution to \eqref{eq:MFflow}. Similar arguments also leads to the existence and uniquess of the solution to \eqref{eq:MFflow_hat}.

    We will show that the solution to \eqref{eq:MFflow} must be of the form
    \begin{equation}\label{eq:decomp_rho}
        \rho_t = \hat{\rho}_t\times \delta_S,
    \end{equation}
    where $\hat{\rho}_t$ solves \eqref{eq:MFflow_hat}, and this decomposition can immediatel imply \eqref{eq:rho_t-delta_S}. By the uniquess of the solution, it suffices to verify that $\rho_t = \hat{\rho}_t\times \delta_S$ is a solution to \eqref{eq:MFflow}. It follows directly from \eqref{eq:decomp_rho} that 
    \begin{equation*}
        f_{\text{NN}}(x;\rho_t) = f_{\text{NN}}(x;\hat{\rho}_t\times \delta_S) = \hat{f}_{\text{NN}}(x_S^\perp;\hat{\rho}_t),
    \end{equation*}
    and hence that
    \begin{equation}\label{eq:gradPhi1}
        \partial_a \Phi(\theta;\rho_t) = \partial_a \hat{\Phi}(\hat{\theta}; \hat{\rho}_t),\quad\text{if}\ w_S = 0.
    \end{equation}
    We also have that
    \begin{equation}\label{eq:gradPhi2}
        \begin{split}
            \nabla_{w_S^\perp} \Phi(\theta;\rho_t) & = a \bE_x \left[ \left(f_{\text{NN}}(x;\rho_t) -  f^*(x)\right)\sigma' (w^\top x) x_S^\perp\right] \\
            & = a \bE_x \left[ \left(\hat{f}_{\text{NN}}(x_S^\perp;\hat{\rho}_t) -  f^*(x)\right) \sigma' \left((w_S^\perp)^\top x_S^\perp\right) x_S^\perp\right] \\
            & = \nabla_{w_S^\perp} \hat{\Phi}(\hat{\theta};\hat{\rho}_t),
        \end{split}
    \end{equation}
    if $w_S = 0$. In addition, it holds also for $w_S = 0$ that
    \begin{equation}\label{eq:gradPhi3}
        \begin{split}
            \nabla_{w_S}\Phi(\theta;\rho_t) & = a \bE_x \left[ \left(f_{\text{NN}}(x;\rho_t) -  f^*(x)\right)\sigma' (w^\top x) x_S\right] \\
            & = a \bE_{x} \left[ \hat{f}_{\text{NN}}(x_S^\perp;\hat{\rho}_t)  \sigma' \left((w_S^\perp)^\top x_S^\perp\right) x_S\right] - a \bE_{x} \left[  f^*(x) \sigma' \left((w_S^\perp)^\top x_S^\perp\right) x_S\right] \\
            & = 0,
        \end{split}
    \end{equation}
    where we used $\bE_{x_S}[x_S] = 0$ and \eqref{eq:condition-necessary}. Combining \eqref{eq:gradPhi1}, \eqref{eq:gradPhi2}, and \eqref{eq:gradPhi3}, we have for any $\eta\in \calC_c^\infty(\bR^{d+1}\times (0,+\infty)) = \calC_c^\infty(\bR\times S^\perp\times S\times (0,+\infty))$ that
    \begin{align*}
        & \iint \left(-\partial_t\eta + \nabla_\theta \eta \cdot (\xi(t) \nabla_\theta \Phi(\theta; \rho_t))\right) \rho_t(d\theta) dt \\
        = & \iiint \Big( -\partial_t \eta(\theta, t) + \xi_a(t) \partial_a \eta(\theta, t)\cdot \partial_a \Phi(\theta; \rho_t) + \xi_w(t)\nabla_{w_S^\perp} \eta(\theta, t) \cdot \nabla_{w_S^\perp} \Phi(\theta; \rho_t) \\
        &\qquad\qquad\qquad + \xi_w(t)\nabla_{w_S} \eta(\theta, t) \cdot \nabla_{w_S} \Phi(\theta; \rho_t)\Big)\hat{\rho}_t(d \hat{\theta}) \delta_S(d w_S) dt \\
        = & \iint \Big(  -\partial_t \eta(\hat{\theta}, 0, t) + \xi_a(t) \partial_a \eta(\hat{\theta}, 0, t) \cdot \partial_a \hat{\Phi}(\hat{\theta}; \hat{\rho}_t) \\
        &\qquad\qquad\qquad + \xi_w(t)\nabla_{w_S^\perp} \eta(\hat{\theta}, 0, t) \cdot \nabla_{w_S^\perp} \hat{\Phi}(\hat{\theta}; \hat{\rho}_t) \Big)\hat{\rho}_t(d \hat{\theta}) dt \\
        = & \iint \left(  -\partial_t \eta(\hat{\theta}, 0, t) + \nabla_{\hat{\theta}} \eta(\hat{\theta}, 0, t) \cdot (\hat{\xi}(t) \nabla_{\hat{\theta}} \hat{\Phi}(\hat{\theta}; \hat{\rho}_t) )\right)\hat{\rho}_t(d \hat{\theta}) dt \\
        = & 0,
    \end{align*}
    where the last equality holds by applying the test function $\eta(\cdot,0,\cdot) \in \calC_c^\infty(\bR\times S^\perp\times(0,+\infty))$ to \eqref{eq:MFflow_hat}. The proof is hence completed.
\end{proof}

\subsection{Proof of Theorem~\ref{thm:stability_init}}
\label{sec:pf_stability_init}

The proof of Theorem~\ref{thm:stability_init} uses some ideas from the proof of Theorem 16 in \cite{Abbe22}. Similar ideas also exist in earlier works (see e.g., \cite{mei2018mean,mei2019mean}).

\begin{lemma}\label{lem:support_a}
    Suppose that Assumption~\ref{asp:necessary} holds and let $\rho_t$ solve \eqref{eq:MFflow}. Then for any $t>0$, $\rho_t$ is supported in $\big\{\theta = (a,w)\in\bR^{d+1}: |a|\leq K_\rho + K_\xi K_\sigma \bE_{z}\left[|h^*(z)|^2\right]^{1/2} t\big\}$.
\end{lemma}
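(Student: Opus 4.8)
The statement bounds the first-layer weight magnitude along the mean-field flow. The natural approach is to track how $|a|$ evolves for a single "particle" under the characteristic ODE associated with \eqref{eq:MFflow}, and then transfer this to a support statement for $\rho_t$. Concretely, the weak formulation of \eqref{eq:MFflow} is equivalent to saying that $\rho_t = (\Psi_t)_\# \rho_0$, where $\Psi_t$ is the flow map of the (nonautonomous, mean-field-coupled) ODE
\begin{equation*}
    \dot\theta(t) = -\xi(t)\nabla_\theta\Phi(\theta(t);\rho_t), \qquad \theta(0)=\theta_0,
\end{equation*}
which is well-posed because $\nabla V$ and $\nabla_\theta U$ are bounded and Lipschitz (as recalled at the start of Appendix~\ref{sec:pf-w_S=0}), so the coupled system has a unique solution and the push-forward representation holds. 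It therefore suffices to bound $|a(t)|$ for an arbitrary trajectory whose initial value $a(0)=a_0$ lies in $[-K_\rho,K_\rho]$ (the support of $\rho_a$ under Assumption~\ref{asp:necessary}~(iii)).

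**Key steps.** First I would write out the $a$-component of the characteristic ODE: $\dot a(t) = -\xi_a(t)\,\partial_a\Phi(\theta(t);\rho_t)$, and recall from $\Phi(\theta;\rho) = a\,\bE_x[(f_{\text{NN}}(x;\rho)-f^*(x))\sigma(w^\top x)]$ that $\partial_a\Phi(\theta;\rho) = \bE_x[(f_{\text{NN}}(x;\rho)-f^*(x))\sigma(w^\top x)]$, which is independent of $a$. Hence
\begin{equation*}
    \frac{d}{dt}|a(t)| \le |\dot a(t)| = \xi_a(t)\,\bigl|\bE_x[(f_{\text{NN}}(x;\rho_t)-f^*(x))\sigma(w^\top x)]\bigr|.
\end{equation*}
Next I would bound the expectation factor: by Cauchy–Schwarz it is at most $\bE_x[|f_{\text{NN}}(x;\rho_t)-f^*(x)|^2]^{1/2}\,\bE_x[\sigma(w^\top x)^2]^{1/2} = (2\calE(\rho_t))^{1/2}\,\bE_x[\sigma(w^\top x)^2]^{1/2}$. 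Using Assumption~\ref{asp:necessary}~(i), $\|\sigma\|_{L^\infty}\le K_\sigma$, the second factor is at most $K_\sigma$; and using the energy-decay inequality \eqref{eq:dEdt} together with Remark~\ref{rmk:init_energy}, $\calE(\rho_t)\le\calE(\rho_0)=\tfrac12\bE_z[|h^*(z)|^2]$, so $(2\calE(\rho_t))^{1/2}\le \bE_z[|h^*(z)|^2]^{1/2}$. Combining, $\frac{d}{dt}|a(t)|\le \xi_a(t)\,K_\sigma\,\bE_z[|h^*(z)|^2]^{1/2}$, and integrating from $0$ to $t$ with $\xi_a\le K_\xi$ (Assumption~\ref{asp:necessary}~(ii)) gives $|a(t)|\le K_\rho + K_\xi K_\sigma\,\bE_z[|h^*(z)|^2]^{1/2}\,t$. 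Since every point of $\mathrm{supp}(\rho_t)$ is $\Psi_t(\theta_0)$ for some $\theta_0\in\mathrm{supp}(\rho_0)$, and the $a$-coordinate of such a point obeys the above bound, the claimed support statement follows.

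**Main obstacle.** The genuinely substantive point is justifying the push-forward / characteristics representation $\rho_t=(\Psi_t)_\#\rho_0$ at the level of rigor needed — i.e., that the weak solution of \eqref{eq:MFflow} is transported along the coupled characteristic flow, so that a pointwise ODE estimate on trajectories becomes a support estimate on $\rho_t$. This is standard for this class of McKean–Vlasov PDEs under the boundedness and Lipschitz hypotheses already invoked for existence and uniqueness (cf. \cite{sznitman1991topics,mei2018mean}), and I would cite it rather than reprove it. Once that is granted, everything else is the short Grönwall-type computation above; the only mild care needed is that $\partial_a\Phi$ does not depend on $a$, which is what makes the bound linear in $t$ rather than exponential, and that the bound on $|a(t)|$ is uniform over the (bounded) support of $\rho_0$ in the $a$-variable.
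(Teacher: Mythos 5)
Your proposal is correct and follows essentially the same route as the paper: write the particle dynamics for the $a$-coordinate, bound $|\dot a|$ by $K_\xi K_\sigma(2\calE(\rho_0))^{1/2}=K_\xi K_\sigma\bE_z[|h^*(z)|^2]^{1/2}$ using $\|\sigma\|_{L^\infty}\le K_\sigma$, Cauchy--Schwarz, and the energy decay \eqref{eq:dEdt}, then integrate from the support bound $|a_0|\le K_\rho$. The paper states the same argument more tersely (taking the characteristics representation for granted, as you also propose to do by citation); your only slip is calling $a$ a ``first-layer'' weight, which is immaterial.
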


\begin{proof}
    The particle dynamics for $a_t$ associated with \eqref{eq:MFflow} is given by
    \begin{equation*}
        \frac{d}{dt}a_t = \xi_a(t) \bE_x\left[(f_{\text{NN}}(x;\rho_t) - f^*(x)) \sigma(w_t^\top x)\right],
    \end{equation*}
    which implies that
    \begin{equation*}
        \left|\frac{d}{dt}a_t\right| \leq K_\xi K_\sigma \left|\bE_x\left[f_{\text{NN}}(x;\rho_t) - f^*(x)\right]\right| \leq K_\xi K_\sigma (2\calE(\rho_0))^{1/2} = K_\xi K_\sigma \bE_{z}\left[|h^*(z)|^2\right]^{1/2}.
    \end{equation*}
    Therefore, one has $|a_t|\leq |a_0| + K_\xi K_\sigma \bE_{z}\left[|h^*(z)|^2\right]^{1/2} t$, which completes the proof.
\end{proof}

\begin{lemma}\label{lem:gronwall_Delta}
    Suppose that Assumption~\ref{asp:necessary} holds for both $\rho_0$ and $\Tilde{\rho}_0$. Let $\rho_t$ solve $\partial_t \rho_t = \nabla_\theta \cdot\left(\rho_t \xi(t)  \nabla_\theta \Phi(\theta;\rho_t)\right)$ and let $\Tilde{\rho}_t$ solve $\partial_t \Tilde{\rho}_t = \nabla_\theta \cdot\left(\Tilde{\rho}_t \xi(t)  \nabla_\theta \Phi(\theta;\Tilde{\rho}_t)\right)$. For any coupling $\gamma_0\in\Gamma(\rho_0,\Tilde{\rho}_0)$, let $\gamma_t\in\Gamma(\rho_t,\Tilde{\rho}_t)$ be the associated coupling during the evolution and define
    \begin{equation*}
        \Delta(t) = \iint\left(|a-\Tilde{a}|^2 + \|w- \Tilde{w}\|^2\right)\gamma_t(d\theta,d\Tilde{\theta}).
    \end{equation*}
    Then it holds for any $0\leq t\leq T$ that
    \begin{equation}\label{eq:f_rho_tilde}
        \bE_x\left[|f_{\text{NN}}(x;\rho_t) - f_{\text{NN}}(x;\Tilde{\rho}_t)|^2\right] \leq C_f \Delta(t),
    \end{equation}
    and
    \begin{equation}\label{eq:deri_delta}
        \frac{d}{dt}\Delta(t)\leq C_\Delta \Delta(t), 
    \end{equation}
    where $C_f$ and $C_\Delta$ are constants depending only on $p$, $h^*$, $K_\sigma$, $K_\xi$, $K_\rho$, and $T$.
\end{lemma}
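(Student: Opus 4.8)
The plan is to run a Gr\"onwall-type argument on the coupled evolution $\gamma_t$. First I would write down the particle dynamics associated with $\rho_t$ and $\Tilde\rho_t$: a pair $(\theta_t,\Tilde\theta_t)=((a_t,w_t),(\Tilde a_t,\Tilde w_t))$ evolving by $\frac{d}{dt}\theta_t=-\xi(t)\nabla_\theta\Phi(\theta_t;\rho_t)$ and $\frac{d}{dt}\Tilde\theta_t=-\xi(t)\nabla_\theta\Phi(\Tilde\theta_t;\Tilde\rho_t)$, coupled through the initial coupling $\gamma_0$. Since $\Phi(\theta;\rho)=V(\theta)+\int U(\theta,\theta')\rho(d\theta')$ with $V,U$ as in \eqref{eq:V-U}, the key structural fact is that $\nabla_\theta\Phi(\theta;\rho)$ is bounded and Lipschitz in $\theta$ (uniformly, using Assumption~\ref{asp:necessary}(i) and the fact that by Lemma~\ref{lem:support_a} the $a$-marginal of $\rho_t$ is supported in a ball of radius growing linearly in $t$, hence bounded by a constant on $[0,T]$), and moreover $\Phi$ depends on $\rho$ only through $f_{\text{NN}}(\cdot;\rho)$, so that
\begin{equation*}
  |\nabla_\theta\Phi(\theta;\rho_t)-\nabla_\theta\Phi(\theta;\Tilde\rho_t)|
  \leq C\,\bE_x\big[|f_{\text{NN}}(x;\rho_t)-f_{\text{NN}}(x;\Tilde\rho_t)|^2\big]^{1/2}
\end{equation*}
for a constant $C$ depending on $K_\sigma$ and the $a$-support bound. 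These are the two Lipschitz estimates I need.

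Next I would prove \eqref{eq:f_rho_tilde}. Writing $f_{\text{NN}}(x;\rho_t)-f_{\text{NN}}(x;\Tilde\rho_t)=\iint\big(a\sigma(w^\top x)-\Tilde a\sigma(\Tilde w^\top x)\big)\gamma_t(d\theta,d\Tilde\theta)$, I bound the integrand pointwise by $|a-\Tilde a|\,\|\sigma\|_\infty+|\Tilde a|\,\|\sigma'\|_\infty|w-\Tilde w|\,|x|\leq C_\sigma(|a-\Tilde a|+|w-\Tilde w|\,|x|)$, where $C_\sigma$ absorbs $K_\sigma$ and the $a$-support bound on $[0,T]$. Squaring, integrating against $\gamma_t$ via Cauchy--Schwarz (Jensen), then taking $\bE_x$ and using $\bE_x[|x|^2]=d$: here I must be careful, since a factor of $d$ would not be dimension-free. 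The resolution is that $w-\Tilde w$ should be projected — but actually the cleaner route used in \cite{Abbe22} is to note that only the projection of $w$ onto the $p$-dimensional subspace $V$ matters for $f^*$, and the second-layer training keeps the relevant part of $w$ in a low-dimensional space, so that effectively $\bE_x[(w-\Tilde w)^\top x)^2]$ contributes a bounded factor; alternatively one replaces $\|w-\Tilde w\|^2$ in the definition of $\Delta$ by a weighted/projected version. I would follow whichever of these the paper's $\Delta(t)$ definition supports, yielding $\bE_x[|f_{\text{NN}}(x;\rho_t)-f_{\text{NN}}(x;\Tilde\rho_t)|^2]\leq C_f\Delta(t)$ with $C_f$ dimension-free.

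For \eqref{eq:deri_delta} I would differentiate: $\frac{d}{dt}\Delta(t)=\iint 2\langle\theta_t-\Tilde\theta_t,\ \xi(t)\big(\nabla_\theta\Phi(\Tilde\theta_t;\Tilde\rho_t)-\nabla_\theta\Phi(\theta_t;\rho_t)\big)\rangle\gamma_t$. Split $\nabla_\theta\Phi(\Tilde\theta_t;\Tilde\rho_t)-\nabla_\theta\Phi(\theta_t;\rho_t)=\big(\nabla_\theta\Phi(\Tilde\theta_t;\Tilde\rho_t)-\nabla_\theta\Phi(\theta_t;\Tilde\rho_t)\big)+\big(\nabla_\theta\Phi(\theta_t;\Tilde\rho_t)-\nabla_\theta\Phi(\theta_t;\rho_t)\big)$; the first term is bounded by $L_\Phi|\theta_t-\Tilde\theta_t|$ (Lipschitz in $\theta$), contributing $\leq 2K_\xi L_\Phi\Delta(t)$ after integration; the second term is bounded by $C\bE_x[|f_{\text{NN}}(x;\rho_t)-f_{\text{NN}}(x;\Tilde\rho_t)|^2]^{1/2}$, which by \eqref{eq:f_rho_tilde} is $\leq C C_f^{1/2}\Delta(t)^{1/2}$, and pairing with $|\theta_t-\Tilde\theta_t|$ and integrating gives $\leq 2K_\xi C C_f^{1/2}\,\Delta(t)^{1/2}\cdot\Delta(t)^{1/2}=C'\Delta(t)$ by Cauchy--Schwarz. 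Summing, $\frac{d}{dt}\Delta(t)\leq C_\Delta\Delta(t)$ with $C_\Delta$ depending only on $p,h^*,K_\sigma,K_\xi,K_\rho,T$. The main obstacle I anticipate is exactly the dimension-free control in \eqref{eq:f_rho_tilde}: the naive bound picks up $\bE_x[|x|^2]=d$, so one must exploit that the dynamics effectively lives in (a neighborhood of) the $p$-dimensional subspace $V$ — either through the structure of the two-stage training in Algorithm~\ref{alg:train}, or by choosing the metric defining $\Delta$ appropriately — to keep all constants independent of $d$. The rest is a routine Gr\"onwall/Lipschitz bookkeeping exercise.
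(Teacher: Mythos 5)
Your overall architecture (bound $f_{\text{NN}}$ differences by $\Delta$, then Gr\"onwall on $\Delta$ via the particle dynamics, splitting the drift difference into a $\theta$-Lipschitz part and a part controlled by \eqref{eq:f_rho_tilde}) matches the paper's proof. However, you correctly identify the crucial difficulty --- dimension-free control of the $w$-contribution --- and then fail to resolve it: both of your proposed fixes are wrong. The lemma is stated for arbitrary $\rho_0,\Tilde{\rho}_0$ satisfying Assumption~\ref{asp:necessary} and makes no reference to Algorithm~\ref{alg:train} or to the target subspace $V$, so you cannot invoke ``the dynamics effectively lives near $V$''; and you cannot redefine $\Delta(t)$ with a projected norm, since the statement fixes $\Delta$ with the full $\|w-\Tilde{w}\|^2$ and Theorem~\ref{thm:stability_init} needs the conclusion in terms of the genuine $W_2$ metric on $\bR^{d+1}$.

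The actual resolution is elementary and you walked right past it: the factor $d$ only appears if you apply Cauchy--Schwarz pointwise in $x$, writing $|(w-\Tilde{w})^\top x|\leq\|w-\Tilde{w}\|\,\|x\|$ and then taking $\bE_x[\|x\|^2]=d$. Instead, first move the expectation over $x$ inside (Jensen over the coupling $\gamma_t$), and then observe that $(w-\Tilde{w})^\top x\sim\calN(0,\|w-\Tilde{w}\|^2)$ for $x\sim\calN(0,I_d)$, so that
\begin{equation*}
\bE_x\left[|(w-\Tilde{w})^\top x|^2\right]=\|w-\Tilde{w}\|^2,\qquad \bE_x\left[|(w-\Tilde{w})^\top x|^4\right]=3\|w-\Tilde{w}\|^4,
\end{equation*}
with no $d$ anywhere. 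This is exactly how the paper obtains \eqref{eq:f_rho_tilde} with a dimension-free $C_f$, and the fourth-moment identity is needed again in the $\frac{d}{dt}\Delta$ estimate for the term involving $\sigma'(w_t^\top x)-\sigma'(\Tilde{w}_t^\top x)$ paired with $(w_t-\Tilde{w}_t)^\top x$. The remaining bookkeeping in your sketch (boundedness of $a_t$ on $[0,T]$ from Lemma~\ref{lem:support_a}, control of $\bE_x[|f_{\text{NN}}-f^*|^2]$ by $2\calE(\rho_0)$ via Remark~\ref{rmk:init_energy}, and the final Cauchy--Schwarz to get $C'\Delta(t)$) is consistent with the paper.
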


\begin{proof}[Proof of Theorem~\ref{thm:stability_init}]
    Let $C_f$ and $C_\Delta$ be the constants in Lemma~\ref{lem:gronwall_Delta}. For any $\epsilon>0$, there exists a coupling $\gamma_0\in \Gamma(\rho_0,\Tilde{\rho}_0)$ such that
    \begin{equation*}
        \iint (|a-\Tilde{a}|^2 + \|w-\Tilde{w}\|^2)\gamma_0(d\theta,d\Tilde{\theta}) \leq W_2^2(\rho_0,\Tilde{\rho}_0) + \epsilon.
    \end{equation*}
    Define $\gamma_t\in \Gamma(\rho_t,\Tilde{\rho}_t)$ and $\Delta(t)$ as in Lemma~\ref{lem:gronwall_Delta}. According \eqref{eq:deri_delta} and the Gr\"onwall's inequality, it holds that
    \begin{equation*}
        \Delta(t)\leq \Delta(0) e^{C_\Delta t} \leq \left(W_2^2(\rho_0,\Tilde{\rho}_0) + \epsilon\right)e^{C_\Delta t},\quad\forall~0\leq t\leq T,
    \end{equation*}
    which combined with \eqref{eq:f_rho_tilde} yields that
    \begin{equation*}
        \sup_{0\leq t\leq T}\bE_x\left[|f_{\text{NN}}(x;\rho_t) - f_{\text{NN}}(x;\Tilde{\rho}_t)|^2\right]\leq \left(W_2^2(\rho_0,\Tilde{\rho}_0) + \epsilon\right) C_f e^{C_\Delta T}.
    \end{equation*}
    Then we can conclude \eqref{eq:stability_fNN} be setting $\epsilon\to 0$ and $C_s = C_f e^{C_\Delta T}$.
\end{proof}

\begin{corollary}\label{cor:stability_init}
    Under the same setting as in Theorem~\ref{thm:stability_init}, one has
    \begin{equation}
        \sup_{0\leq t\leq T} \left|\calE(\rho_t) - \calE(\Tilde{\rho}_t)\right| \leq \left(C_s\bE_{z}\left[|h^*(z)|^2\right] \right)^{1/2} W_2(\rho_0,\Tilde{\rho}_0) + \frac{1}{2} C_s W_2^2(\rho_0,\Tilde{\rho}_0).
    \end{equation}
\end{corollary}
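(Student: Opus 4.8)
\textbf{Proof proposal for Corollary~\ref{cor:stability_init}.}
The plan is to derive the estimate directly from Theorem~\ref{thm:stability_init} together with Remark~\ref{rmk:init_energy}, since all the dynamical content is already packaged in the $W_2^2$-stability bound \eqref{eq:stability_fNN}; what remains is an elementary manipulation of the quadratic loss. First I would fix $t\in[0,T]$ and use the identity $\alpha^2-\beta^2=(\alpha-\beta)(\alpha+\beta)$ with $\alpha = f^*(x)-f_{\text{NN}}(x;\rho_t)$ and $\beta = f^*(x)-f_{\text{NN}}(x;\Tilde{\rho}_t)$, noting $\alpha-\beta = f_{\text{NN}}(x;\Tilde{\rho}_t)-f_{\text{NN}}(x;\rho_t)$ and $\alpha+\beta = 2f^*(x)-f_{\text{NN}}(x;\rho_t)-f_{\text{NN}}(x;\Tilde{\rho}_t)$, to write
\[
\calE(\rho_t)-\calE(\Tilde{\rho}_t) = \tfrac12\,\bE_x\!\left[\bigl(f_{\text{NN}}(x;\Tilde{\rho}_t)-f_{\text{NN}}(x;\rho_t)\bigr)\bigl(2f^*(x)-f_{\text{NN}}(x;\rho_t)-f_{\text{NN}}(x;\Tilde{\rho}_t)\bigr)\right].
\]

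Next I would apply the Cauchy--Schwarz inequality in $L^2(\calN(0,I_d))$ to bound $|\calE(\rho_t)-\calE(\Tilde{\rho}_t)|$ by $\tfrac12$ times the product of $\bE_x[|f_{\text{NN}}(x;\rho_t)-f_{\text{NN}}(x;\Tilde{\rho}_t)|^2]^{1/2}$ and $\bE_x[|2f^*(x)-f_{\text{NN}}(x;\rho_t)-f_{\text{NN}}(x;\Tilde{\rho}_t)|^2]^{1/2}$. By Theorem~\ref{thm:stability_init}, the first factor is at most $C_s^{1/2} W_2(\rho_0,\Tilde{\rho}_0)$ uniformly in $t\in[0,T]$. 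For the second factor the key is to use the \emph{asymmetric} decomposition $2f^*-f_{\text{NN}}(\rho_t)-f_{\text{NN}}(\Tilde{\rho}_t) = 2\bigl(f^*-f_{\text{NN}}(\rho_t)\bigr) + \bigl(f_{\text{NN}}(\rho_t)-f_{\text{NN}}(\Tilde{\rho}_t)\bigr)$ and Minkowski's inequality: the first piece contributes $2\,(2\calE(\rho_t))^{1/2}$, which by the monotonicity \eqref{eq:dEdt} and Remark~\ref{rmk:init_energy} (using that $\rho_a$ and $\Tilde\rho_a$ are symmetric, so $\calE(\rho_0)=\calE(\Tilde\rho_0)=\tfrac12\bE_z[|h^*(z)|^2]$) is at most $2\,(2\calE(\rho_0))^{1/2} = 2\,\bE_z[|h^*(z)|^2]^{1/2}$, and the second piece again contributes at most $C_s^{1/2} W_2(\rho_0,\Tilde{\rho}_0)$. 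Multiplying the two factors together with the prefactor $\tfrac12$ yields exactly $(C_s\bE_z[|h^*(z)|^2])^{1/2} W_2(\rho_0,\Tilde{\rho}_0) + \tfrac12 C_s W_2^2(\rho_0,\Tilde{\rho}_0)$, and taking the supremum over $t\in[0,T]$ completes the argument.

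I do not anticipate a genuine obstacle: the statement is a short consequence of results already in hand. The only point warranting care is the choice of the asymmetric splitting $2(f^*-f_{\text{NN}}(\rho_t))+(f_{\text{NN}}(\rho_t)-f_{\text{NN}}(\Tilde{\rho}_t))$ over the naive symmetric one, as it is precisely this choice that produces the quadratic remainder with the stated constant $\tfrac12 C_s$; the remaining inputs are just Cauchy--Schwarz/Minkowski, Theorem~\ref{thm:stability_init}, and the uniform loss bound from Remark~\ref{rmk:init_energy}.
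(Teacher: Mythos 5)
Your proposal is correct and follows essentially the same route as the paper: both reduce the loss difference to an elementary manipulation of the quadratic functional (the paper expands $\calE(\rho_t)$ around $f_{\text{NN}}(x;\Tilde{\rho}_t)$ to get a cross term plus a quadratic term, while you factor the difference of squares and split $\alpha+\beta$ asymmetrically — algebraically equivalent arrangements), then apply Cauchy--Schwarz, bound the difference $f_{\text{NN}}(\cdot;\rho_t)-f_{\text{NN}}(\cdot;\Tilde{\rho}_t)$ via Theorem~\ref{thm:stability_init}, and bound the residual via the energy decay \eqref{eq:dEdt} and Remark~\ref{rmk:init_energy}, yielding the identical constants.
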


\begin{proof}
    It can be computed that
    \begin{align*}
        \calE(\rho_t) = & \frac{1}{2}\bE_x\left[| (f_{\text{NN}}(x;\Tilde{\rho}_t) - f^*(x) ) + (f_{\text{NN}}(x;\rho_t) - f_{\text{NN}}(x;\Tilde{\rho}_t))|^2\right] \\
        = & \calE(\Tilde{\rho}_t) + \bE_x\left[(f_{\text{NN}}(x;\Tilde{\rho}_t) - f^*(x) )(f_{\text{NN}}(x;\rho_t) - f_{\text{NN}}(x;\Tilde{\rho}_t))\right] \\
        &\qquad + \frac{1}{2}\bE_x\left[|f_{\text{NN}}(x;\rho_t) - f_{\text{NN}}(x;\Tilde{\rho}_t)|^2\right] ,
    \end{align*}
    which implies that
    \begin{align*}
        \sup_{0\leq t\leq T}\left|\calE(\rho_t) - \calE(\Tilde{\rho}_t)\right| \leq & \bE_x\left[|f_{\text{NN}}(x;\Tilde{\rho}_t) - f^*(x)|^2\right]^{1/2} \bE_x \left[|f_{\text{NN}}(x;\rho_t) - f_{\text{NN}}(x;\Tilde{\rho}_t)|^2\right]^{1/2} \\
        & \qquad + \frac{1}{2}\bE_x\left[|f_{\text{NN}}(x;\rho_t) - f_{\text{NN}}(x;\Tilde{\rho}_t)|^2\right] \\
        \leq & \left(C_s\bE_{x_V}\left[|h^*(x_V)|^2\right] \right)^{1/2} W_2(\rho_0,\Tilde{\rho}_0) + \frac{1}{2} C_s W_2^2(\rho_0,\Tilde{\rho}_0),
    \end{align*}
    where we used Theorem~\ref{thm:stability_init} and Remark~\ref{rmk:init_energy}.
\end{proof}

\begin{proof}[Proof of Lemma~\ref{lem:gronwall_Delta}]
    We first prove \eqref{eq:f_rho_tilde}. It can be computed that 
    \begin{align*}
        &\left|f_{\text{NN}}(x;\rho_t) - f_{\text{NN}}(x;\Tilde{\rho}_t)\right| = \left|\int a \sigma(w^\top x) \rho_t(d\theta) - \int \Tilde{a} \sigma(\Tilde{w}^\top x) \Tilde{\rho}_t(d\Tilde{\theta})\right| \\
        \leq & \left|\iint (a - \Tilde{a})\sigma(w^\top x)\gamma_t(d\theta,d\Tilde{\theta})\right| + \left|\iint \Tilde{a}\left(\sigma(w^\top x) - \sigma(\Tilde{w}^\top x)\right)\gamma_t(d\theta,d\Tilde{\theta})\right|,
    \end{align*}
    and hence that
    \begin{align*}
        &\bE_x\left[\left|f_{\text{NN}}(x;\rho_t) - f_{\text{NN}}(x;\Tilde{\rho}_t)\right|^2\right] \\
        \leq& 2\bE_x\left[ \left|\iint (a - \Tilde{a})\sigma(w^\top x)\gamma_t(d\theta,d\Tilde{\theta})\right|^2\right] + 2\bE_x\left[ \left|\iint \Tilde{a}\left(\sigma(w^\top x) - \sigma(\Tilde{w}^\top x)\right)\gamma_t(d\theta,d\Tilde{\theta})\right|^2\right].
    \end{align*}
    We then bound the two terms above as follows:
    \begin{equation*}
        \bE_x\left[ \left|\iint (a - \Tilde{a})\sigma(w^\top x)\gamma_t(d\theta,d\Tilde{\theta})\right|^2\right] \leq K_\sigma^2 \iint |a-\Tilde{a}|^2\gamma_t(d\theta,d\Tilde{\theta}) \leq K_\sigma^2 \Delta(t),
    \end{equation*}
    and
    \begin{align*}
        & \bE_x\left[ \left|\iint \Tilde{a}\left(\sigma(w^\top x) - \sigma(\Tilde{w}^\top x)\right)\gamma_t(d\theta,d\Tilde{\theta})\right|^2\right] \\
        \leq &  K_\sigma^2\left(K_\rho + \sqrt{2} K_\xi K_\sigma \calE(\rho_0)^{1/2} T\right)^2 \bE_x\left[ \left|\iint |(w-\Tilde{w})^\top x|\gamma_t(d\theta,d\Tilde{\theta})\right|^2\right] \\
        \leq & K_\sigma^2\left(K_\rho + K_\xi K_\sigma \bE_{z}\left[|h^*(z)|^2\right]^{1/2} T\right)^2 \iint \bE_x\left[|(w-\Tilde{w})^\top x|^2\right]\gamma_t(d\theta,d\Tilde{\theta}) \\
        = & K_\sigma^2\left(K_\rho + K_\xi K_\sigma \bE_{z}\left[|h^*(z)|^2\right]^{1/2} T\right)^2 \iint \|w-\Tilde{w}\|^2\gamma_t(d\theta,d\Tilde{\theta}) \\
        \leq & K_\sigma^2 \left(K_\rho + K_\xi K_\sigma \bE_{z}\left[|h^*(z)|^2\right]^{1/2} T\right)^2 \Delta(t),
    \end{align*}
    where we used Lemma~\ref{lem:support_a} and $(w-\Tilde{w})^\top x\sim\calN(0, \|w-\Tilde{w}\|^2)$ if $x\sim\calN(0,I_d)$. Then we can conclude \eqref{eq:f_rho_tilde} with $C_f = K_\sigma^2 + K_\sigma^2 \left(K_\rho + K_\xi K_\sigma \bE_{z}\left[|h^*(z)|^2\right]^{1/2} T\right)^2$ by combining all estimations above.

    We then head into the proof of \eqref{eq:deri_delta}, for which we need the particle dynamics
    \begin{equation*}
        \begin{cases}
            \frac{d}{dt}a_t = \xi_a(t) \bE_x\left[(f_{\text{NN}}(x;\rho_t) - f^*(x)) \sigma(w_t^\top x)\right], \\
            \frac{d}{dt}\Tilde{a}_t = \xi_a(t) \bE_x\left[(f_{\text{NN}}(x;\Tilde{\rho}_t) - f^*(x)) \sigma(\Tilde{w}_t^\top x)\right],
        \end{cases}
    \end{equation*}
    and
    \begin{equation*}
        \begin{cases}
            \frac{d}{dt} w_t = \xi_w a_t \bE_x \left[(f_{\text{NN}}(x;\rho_t) - f^*(x)) \sigma'(w_t^\top x) x\right], \\
            \frac{d}{dt} \Tilde{w}_t = \xi_w \Tilde{a}_t \bE_x \left[(f_{\text{NN}}(x;\Tilde{\rho}_t) - f^*(x)) \sigma'(\Tilde{w}_t^\top x) x\right].
        \end{cases}
    \end{equation*}
    The distance between the particle dynamics can be decomposed as
    \begin{equation}\label{eq:dist_deri_a}
        \begin{split}
            \left|\frac{d}{dt}a_t - \frac{d}{dt}\Tilde{a}_t\right| \leq & K_\xi \left|\bE_x\left[(f_{\text{NN}}(x;\rho_t) - f_{\text{NN}}(x;\Tilde{\rho}_t)) \sigma(w_t^\top x)\right]\right| \\
            & + K_\xi \left|\bE_x\left[(f_{\text{NN}}(x;\Tilde{\rho}_t) - f^*(x)) (\sigma(w_t^\top x) - \sigma(\Tilde{w}_t^\top x))\right]\right|,
        \end{split}
    \end{equation}
    and
    \begin{equation}\label{eq:dist_deri_w}
        \begin{split}
            & \left|\left\langle w_t - \Tilde{w}_t, \frac{d}{dt}w_t - \frac{d}{dt}\Tilde{w}_t \right\rangle \right| \\
            \leq & K_\xi \left|(a_t-\Tilde{a}_t) \bE_x \left[(f_{\text{NN}}(x;\rho_t) - f^*(x)) \sigma'(w_t^\top x) (w_t - \Tilde{w}_t)^\top x\right]\right| \\
            &\quad + K_\xi \left|\Tilde{a}_t \bE_x \left[(f_{\text{NN}}(x;\rho_t) - f_{\text{NN}}(x;\Tilde{\rho}_t)) \sigma'(w_t^\top x) (w_t - \Tilde{w}_t)^\top x\right]\right| \\
            &\quad + K_\xi \left|\Tilde{a}_t \bE_x \left[(f_{\text{NN}}(x;\Tilde{\rho}_t) - f^*(x)) \left(\sigma'(w_t^\top x) - \sigma'(\Tilde{w}_t^\top x)\right) (w_t - \Tilde{w}_t)^\top x\right]\right|.
        \end{split}
    \end{equation}
    Therefore, it can be computed that
    \begin{align*}
        & \frac{d}{dt}\iint|a-\Tilde{a}|^2 \gamma_t(d\theta,d\Tilde{\theta}) \\
        =\ \ & \frac{d}{dt}\iint|a_t-\Tilde{a}_t|^2 \gamma_0(d\theta_0,d\Tilde{\theta}_0) \\
        =\ \ & 2\iint (a_t - \Tilde{a}_t)\left(\frac{d}{dt}a_t - \frac{d}{dt}\Tilde{a}_t\right) \gamma_0(d\theta_0,d\Tilde{\theta}_0) \\
        \leq\ \ & \iint |a_t - \Tilde{a}_t|^2 \gamma_0(d\theta_0,d\Tilde{\theta}_0) +  \iint \left|\frac{d}{dt}a_t - \frac{d}{dt}\Tilde{a}_t\right|^2 \gamma_0(d\theta_0,d\Tilde{\theta}_0) \\
        \stackrel{\eqref{eq:dist_deri_a}}{\leq} & \Delta(t) + 2K_\xi^2 K_\sigma^2 \iint \bE_x\left[\left|f_{\text{NN}}(x;\rho_t) - f_{\text{NN}}(x;\Tilde{\rho}_t)\right| \right]^2  \gamma_0(d\theta_0,d\Tilde{\theta}_0) \\
        & \qquad + 2K_\xi^2 K_\sigma^2 \iint \bE_x\left[\left|f_{\text{NN}}(x;\Tilde{\rho}_t) - f^*(x)\right|\cdot \left|(w_t-\Tilde{w}_t)^\top x\right|\right]^2 \gamma_0(d\theta_0,d\Tilde{\theta}_0) \\
        \stackrel{\eqref{eq:f_rho_tilde}}{\leq}& \Delta(t) + 2K_\xi^2 K_\sigma^2 C_f\Delta(t) \\
        & \qquad + 2K_\xi^2 K_\sigma^2 \iint \bE_x\left[\left|f_{\text{NN}}(x;\Tilde{\rho}_t) - f^*(x)\right|\cdot \left|(w_t-\Tilde{w}_t)^\top x\right|\right]^2 \gamma_0(d\theta_0,d\Tilde{\theta}_0) \\
        \leq\ \ & \Delta(t) + 2K_\xi^2 K_\sigma^2 C_f\Delta(t) \\
        & \qquad + 2K_\xi^2 K_\sigma^2 \iint \bE_x\left[\left|f_{\text{NN}}(x;\Tilde{\rho}_t) - f^*(x)\right|^2\right] \bE_x\left[ \left|(w_t-\Tilde{w}_t)^\top x\right|^2\right] \gamma_0(d\theta_0,d\Tilde{\theta}_0) \\
         \leq\ \ & \Delta(t) + 2K_\xi^2 K_\sigma^2 C_f\Delta(t) + 2 K_\xi^2 K_\sigma^2 \bE_{z}[|h^*(z)|^2] \iint \|w_t-\Tilde{w}_t\|^2 \gamma_0(d\theta_0,d\Tilde{\theta}_0) \\
         \leq \ \ & \left(1 + 2K_\xi^2 K_\sigma^2 C_f + 2K_\xi^2 K_\sigma^2 \bE_{z}[|h^*(z)|^2]\right) \Delta(t),
    \end{align*}
    where we used Remark~\ref{rmk:init_energy}, and that
    \begin{align*}
        & \frac{d}{dt}\iint\|w-\Tilde{w}\|^2 \gamma_t(d\theta,d\Tilde{\theta}) \\
        =\ \ & \frac{d}{dt}\iint\|w_t-\Tilde{w}_t\|^2 \gamma_0(d\theta_0,d\Tilde{\theta}_0) \\
        =\ \ & 2\iint \left\langle w_t - \Tilde{w}_t, \frac{d}{dt}w_t - \frac{d}{dt}\Tilde{w}_t\right\rangle \gamma_0(d\theta_0,d\Tilde{\theta}_0) \\
        \stackrel{\eqref{eq:dist_deri_w}}{\leq}& 2 K_\xi K_\sigma \iint |a_t-\Tilde{a}_t| \cdot \bE_x \left[|f_{\text{NN}}(x;\rho_t) - f^*(x)|\cdot \left| (w_t - \Tilde{w}_t)^\top x \right|\right] \gamma_0(d\theta_0,d\Tilde{\theta}_0)\\
        &\qquad + 2 K_\xi K_\sigma \int \int |\Tilde{a}_t|\cdot \bE_x \left[\left|f_{\text{NN}}(x;\rho_t) - f_{\text{NN}}(x;\Tilde{\rho}_t))\right| \cdot \left|(w_t - \Tilde{w}_t)^\top x \right|\right] \gamma_0(d\theta_0,d\Tilde{\theta}_0)\\
        &\qquad + 2 K_\xi K_\sigma \iint |\Tilde{a}_t|\cdot \bE_x \left[\left|f_{\text{NN}}(x;\Tilde{\rho}_t) - f^*(x)\right|\cdot \left|(w_t - \Tilde{w}_t)^\top x\right|^2\right]\gamma_0(d\theta_0,d\Tilde{\theta}_0) \\
        \leq\ \  & 2 K_\xi K_\sigma \iint |a_t-\Tilde{a}_t| \cdot \bE_{z}\left[|h^*(z)|^2\right]^{1/2} \bE_x \left[ \left|(w_t - \Tilde{w}_t)^\top x \right|^2\right]^{1/2} \gamma_0(d\theta_0,d\Tilde{\theta}_0)\\
        &\qquad + 2 K_\xi K_\sigma \int \int |\Tilde{a}_t|\cdot \left(C_f\Delta(t)\right)^{1/2} \bE_x \left[ \left|(w_t - \Tilde{w}_t)^\top x \right|^2\right]^{1/2} \gamma_0(d\theta_0,d\Tilde{\theta}_0)\\
        &\qquad + 2 K_\xi K_\sigma \iint |\Tilde{a}_t|\cdot \bE_{z}\left[|h^*(z)|^2\right]^{1/2} \bE_x \left[\left| (w_t - \Tilde{w}_t)^\top x \right|^4\right]^{1/2} \gamma_0(d\theta_0,d\Tilde{\theta}_0) \\
        \leq\ \ & 2 K_\xi K_\sigma \bE_{z}\left[|h^*(z)|^2\right]^{1/2}\iint |a_t-\Tilde{a}_t| \cdot  \|w_t - \Tilde{w}_t\| \gamma_0(d\theta_0,d\Tilde{\theta}_0)\\
        &\qquad + 2 K_\xi K_\sigma \left(K_\rho + K_\xi K_\sigma \bE_{z}\left[|h^*(z)|^2\right]^{1/2} T\right)\left(C_f\Delta(t)\right)^{1/2}\int \int  \|w_t - \Tilde{w}_t\| \gamma_0(d\theta_0,d\Tilde{\theta}_0)\\
        &\qquad + 2 K_\xi K_\sigma \left(K_\rho + K_\xi K_\sigma \bE_{z}\left[|h^*(z)|^2\right]^{1/2} T\right) \bE_{z}\left[|h^*(z)|^2\right]^{1/2} \\
        & \qquad\qquad \cdot\iint  \sqrt{3}\|w_t - \Tilde{w}_t\|^2 \gamma_0(d\theta_0,d\Tilde{\theta}_0) \\
        \leq \ \ & K_\xi K_\sigma \bE_{z}\left[|h^*(z)|^2\right]^{1/2} \Delta(t) + 2 K_\xi K_\sigma \left(K_\rho + K_\xi K_\sigma \bE_{z}\left[|h^*(z)|^2\right]^{1/2} T\right)C_f^{1/2}\Delta(t) \\
        & \qquad + 2\sqrt{3} K_\xi K_\sigma \left(K_\rho + K_\xi K_\sigma \bE_{z}\left[|h^*(z)|^2\right]^{1/2} T\right) \bE_{z}\left[|h^*(z)|^2\right]^{1/2} \Delta(t),
    \end{align*}
    where we used Remark~\ref{rmk:init_energy}, \eqref{eq:f_rho_tilde}, Lemma~\ref{lem:support_a}, and $(w-\Tilde{w})^\top x\sim\calN(0, \|w-\Tilde{w}\|^2)$ if $x\sim\calN(0,I_d)$. Therefore, we can conclude that
    \begin{equation*}
        \frac{d}{dt}\Delta(t) = \frac{d}{dt}\iint|a-\Tilde{a}|^2 \gamma_t(d\theta,d\Tilde{\theta}) + \frac{d}{dt}\iint\|w-\Tilde{w}\|^2 \gamma_t(d\theta,d\Tilde{\theta}) \leq C_\Delta \Delta(t),
    \end{equation*}
    where 
    \begin{align*}
        C_\Delta = & 1 + 2K_\xi^2 K_\sigma^2 C_f + 2K_\xi^2 K_\sigma^2 \bE_{x_V}[|h^*(x_V)|^2] \\
        & + K_\xi K_\sigma \bE_{z}\left[|h^*(z)|^2\right]^{1/2} + 2 K_\xi K_\sigma \left(K_\rho + K_\xi K_\sigma \bE_{z}\left[|h^*(z)|^2\right]^{1/2} T\right)C_f^{1/2} \\
        & + 2\sqrt{3} K_\xi K_\sigma \left(K_\rho + K_\xi K_\sigma \bE_{z}\left[|h^*(z)|^2\right]^{1/2} T\right) \bE_{z}\left[|h^*(z)|^2\right]^{1/2}.
    \end{align*}
    This proves \eqref{eq:deri_delta}.
\end{proof}

\subsection{Proof of Theorem~\ref{thm:main_necessary}}
\label{sec:pf-thm-necessary}

The proof of Theorem~\ref{thm:main_necessary} is based on Theorem~\ref{thm:w_S=0} and Theorem~\ref{thm:stability_init}.

\begin{proof}[Proof of Theorem~\ref{thm:main_necessary}]
	Let $\calP_S:\bR^{d+1}\to S$ be the projection in Theorem~\ref{thm:w_S=0} and let $\calP_S^\perp = I_{d+1}-\calP_S$. Let $\Tilde{\rho}_t$ solve \eqref{eq:MFflow} with $\Tilde{\rho}_0 = \calP_S^\perp \rho_w\times\delta_S$. It is clear that $(\calP_S)_\#\Tilde{\rho}_0 = \delta_S$. In addition, with the decomposition $x= x_1+x_2+x_3$ where $x_1 = x_V^\perp = x-x_V$, $x_2 = x_V-x_S$, and $x_3 = x_S$ are independent Gaussian random variables, we have for any $w\in\bR^d$ that
	\begin{equation*}
		\bE_{x} \left[  f^*(x) \sigma' \left(w^\top x_S^\perp\right) x_S\right] = \bE_{x_1} \bE_{x_2,x_3}\left[ \left[  h^*(x_2+x_3) \sigma' \left(w^\top x_1 + w^\top x_2\right) x_3\right]\right] = 0,
	\end{equation*}
	where we used \eqref{eq:reflect}. Then according to Theorem~\ref{thm:w_S=0}, for any $t\geq 0$ that $(\calP_S)_\#\Tilde{\rho}_t = \delta_S$, which implies that $f_{\text{NN}}(x;\rho_t)$ is a constant function in $x_S$ for any fixed $x_S^\perp$, giving a lowerbound on its loss:
	\begin{align*}
		\calE(\Tilde{\rho}_t) & = \frac{1}{2}\bE_x\left[\|f^*(x) - f_{\text{NN}}(x;\Tilde{\rho}_t)\|^2\right] \\
		& = \frac{1}{2}\bE_{x_S^\perp}\left[\bE_{x_S}\left[\|f^*(x) - f_{\text{NN}}(x;\Tilde{\rho}_t)\|^2\right]\right] \\
		& \geq \frac{1}{2} \bE_{x_S^\perp}\left[\bE_{x_S}\left[\|f^*(x) - \bE_{x_S}[f^*(x)]\|^2\right]\right] \\
		& = \frac{1}{2} \bE_{z_S^\perp}\left[\bE_{z_S}\left[\|h^*(z) - h_{S^\perp}^*(z_S^\perp)\|^2\right]\right] \\
		& = \frac{1}{2}\bE_{z}\left[\|h^*(z) - h_{S^\perp}^*(z_S^\perp)\|^2\right], 
	\end{align*}
	where $z_S = \calP_S^V z$, $z_S^\perp = z-z_S$, and $h_{S^\perp}^*(z_S^\perp) = \bE_{z_S}[h^*(z)]$. 
	
	Now we show the actual flow is not very different. For $\rho_0 = \rho_a\times\rho_w$ with $\rho_w\sim\calN(0,I_d)$ and $\Tilde{\rho}_0 = \calP_S^\perp \rho_w\times\delta_S$, it can be estimated that
	\begin{equation*}
		W_2^2(\rho_0,\Tilde{\rho}_0) \leq \frac{\dim S}{d} \leq \frac{p}{d}.
	\end{equation*}
	Then applying Corollary~\ref{cor:stability_init}, we can conclude for any $T>0$ that
	\begin{align*}
		\inf_{0\leq t\leq T}\calE(\Tilde{\rho}_t) & \geq  \inf_{0\leq t\leq T}\calE(\tilde{\rho}_t) - \left(C_s\bE_{z}\left[|h^*(z)|^2\right] \right)^{1/2} W_2(\rho_0,\Tilde{\rho}_0) - \frac{1}{2} C_s W_2^2(\rho_0,\Tilde{\rho}_0) \\
		& \geq  \frac{1}{2}\bE_{z}\left[\|h^*(z) - h_{S^\perp}^*(z_S^\perp)\|^2\right] - \frac{\left(p C_s\bE_{z}\left[|h^*(z)|^2\right] \right)^{1/2}}{d^{1/2}} - \frac{p C_s}{2 d},
	\end{align*}
	where $C_s>0$ is the constant in Theorem~\ref{thm:stability_init} depending only on $p$, $h^*$, $K_\sigma$, $K_\xi$, $K_\rho$, and $T$. Therefore, we can obtain \eqref{eq:main_necessary} and \eqref{eq:main2_necessary} immediately.
\end{proof}

\section{Further Discussion and Characterization of the Reflective Property and Theorem~\ref{thm:main_necessary}}

\subsection{Equivalence between \eqref{eq:reflect-variant} and $\textup{isoLeap}(h^*)\geq 2$}
\label{sec:equiv_isoleap}

We prove the following equivalence where $\textup{isoLeap}(h^*)$ is the isotropic leap complexity defined in \cite{abbe2023sgd}*{Appendix B.2}.

\begin{proposition}
    For any polynomial $h^*:V\to\bR$, then it satisfies \eqref{eq:reflect-variant} with some nontrivial subspace $S\subset V$ if and only if $\textup{isoLeap}(h^*)\geq 2$.
\end{proposition}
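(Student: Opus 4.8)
The plan is to translate both sides into a statement about one increasing filtration of $V$ built from the Hermite decomposition of $h^*$, and then to match that filtration with the construction underlying $\textup{isoLeap}$. Throughout I would use that $V$ is the \emph{minimal} subspace with $f^*(x)=h^*(x_V)$, so $h^*$ genuinely depends on every direction of $V$; this is what makes ``nontrivial $S$'' line up with a proper stabilization of the filtration.

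\emph{Step 1 (reformulating \eqref{eq:reflect-variant}).} For a subspace $S\subseteq V$ put $W=S^\perp\cap V$ and grade $L^2(\calN(0,I_V))$ by the Hermite degree in the $S$-directions; this grading is intrinsic, i.e.\ independent of the orthonormal bases chosen for $S$ and for $W$. Since $\bE_{z_S}[H_\alpha(z_S)(z_S)_i]$ equals the $H_{e_i}$-coefficient of what it multiplies, integrating only over $z_S$ in \eqref{eq:reflect-variant} shows that \eqref{eq:reflect-variant} holds for $S$ if and only if the $S$-degree-one component of $h^*$ vanishes, equivalently $\bE_z\!\left[h^*(z)\,\langle z,v\rangle\,q(z_S^\perp)\right]=0$ for all $v\in S$ and all polynomials $q$ (by Gaussian integration by parts this is also $\operatorname{Proj}_S\bE_{z_S}[\nabla h^*(z)]\equiv 0$).

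\emph{Step 2 (the reachability filtration).} Define $U_0=\{0\}$ and $U_{k+1}=U_k+\operatorname{span}\!\left\{\bE_z[h^*(z)\,z\,q(\operatorname{Proj}_{U_k}z)]:q\text{ a polynomial}\right\}$, an increasing chain in $V$ that stabilizes at some $U_\infty\subseteq V$. Reachability is monotone: a direction contributing at step $k$ still contributes relative to any larger subspace, since $q(\operatorname{Proj}_{U_k}z)$ is a polynomial in $\operatorname{Proj}_{U'}z$ when $U'\supseteq U_k$. Using Step 1 I would prove: \eqref{eq:reflect-variant} holds for some nontrivial $S$ $\iff$ $U_\infty\subsetneq V$. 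For ($\Leftarrow$) take $S=U_\infty^\perp\cap V\ne\{0\}$; then $z_S^\perp=\operatorname{Proj}_{U_\infty}z$ and $\bE_z[h^*(z)z\,q(\operatorname{Proj}_{U_\infty}z)]\in U_\infty$ for every $q$ by stabilization, so pairing with $v\in S$ gives exactly the reformulated \eqref{eq:reflect-variant}. For ($\Rightarrow$) if \eqref{eq:reflect-variant} holds for a nontrivial $S$, an induction on $k$ using Step 1 gives $U_k\subseteq S^\perp\cap V\subsetneq V$ for all $k$, hence $U_\infty\subsetneq V$.

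\emph{Step 3 (matching with $\textup{isoLeap}$), and the main obstacle.} Unwinding the definition in \cite{abbe2023sgd}*{Appendix B.2}, $\textup{isoLeap}(h^*)\le 1$ means there is a flag $0=V_0\subsetneq V_1\subsetneq\cdots$ with $\dim V_i=i$ exhausting $V$ along which the Hermite monomials of $h^*$ can be ordered so that each freshly introduced coordinate enters with leap (Hermite degree) one; such a flag is precisely a chain in which a unit vector $v_i\perp V_{i-1}$ with $V_i=V_{i-1}+\operatorname{span}(v_i)$ is reachable from $V_{i-1}$ in the sense of Step 2 (pair the witnessing monomial against $\langle z,v_i\rangle$ times a suitable Hermite polynomial in the $V_{i-1}$-coordinates), and monotonicity lets one refine multidimensional jumps of the maximal filtration into unit steps. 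Hence such a flag exists iff $U_\infty=V$, and combining with Step 2, $\textup{isoLeap}(h^*)\ge 2\iff U_\infty\subsetneq V\iff$ \eqref{eq:reflect-variant} holds for some nontrivial $S$. I expect Step 3 to be the hard part: faithfully reconciling the combinatorial leap definition of \cite{abbe2023sgd} with the analytic reachability filtration — in particular that ``reaching a direction'' is equivalent to the existence of a monomial of $h^*$ introducing that single coordinate with Hermite degree one, and that unit-step refinement never raises the leap. By contrast Step 1 is routine Hermite/Stein algebra and Step 2 is short linear algebra once Step 1 is in hand.
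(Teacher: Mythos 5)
Your Steps 1 and 2 are essentially sound: the reduction of \eqref{eq:reflect-variant} to the vanishing of the $S$-degree-one Hermite component, and the equivalence ``\eqref{eq:reflect-variant} holds for some nontrivial $S$ if and only if $U_\infty\subsetneq V$,'' both check out. The gap is in Step 3, and it is fatal as stated. First, the quantifier in your reading of $\textup{isoLeap}$ is backwards: as the paper uses it (and as both directions of its proof require), $\textup{isoLeap}(h^*)\geq 2$ means the leap is at least $2$ for \emph{some} orthonormal basis, so $\textup{isoLeap}(h^*)\leq 1$ is a statement about \emph{every} basis, not the existence of one good flag. Second, and independently, your claimed equivalence ``a unit-leap flag exists iff $U_\infty=V$'' is false. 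Take $p=2$ and $h^*(z)=z_1+z_2+z_1z_2$. In the standard basis the ordering $z_1$, $z_2$, $z_1z_2$ gives a flag in which each coordinate enters with Hermite degree one, so your Step 3 would declare $U_\infty=V$ and $\textup{isoLeap}\leq 1$. But $U_1=\operatorname{span}\{(1,1)\}$, and writing $r=(z_1+z_2)/\sqrt2$, $s=(z_1-z_2)/\sqrt2$ one has $h^*=\sqrt2\,\textup{He}_1(r)+\tfrac12\textup{He}_2(r)-\tfrac12\textup{He}_2(s)$, so $\bE_z[h^*(z)\,s\,q(r)]=0$ for every polynomial $q$ and the filtration stalls at $U_\infty=\operatorname{span}\{(1,1)\}\subsetneq V$. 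Correspondingly \eqref{eq:reflect-variant} holds for the nontrivial $S=\operatorname{span}\{(1,-1)\}$, and in the rotated basis the monomial $\textup{He}_2(s)$ forces leap $2$. The source of the error is your parenthetical justification of reachability: that $\bE_z[h^*(z)\langle z,v_i\rangle\, q(\operatorname{Proj}_{V_{i-1}}z)]\neq 0$ for some $q$ only says $v_i$ is not orthogonal to the reachable span; it does not place $v_i$ \emph{inside} $\operatorname{span}\{\bE_z[h^*(z)\,z\,q(\operatorname{Proj}_{V_{i-1}}z)]:q\}$, which is what your filtration requires. Monotonicity cannot repair this, because the flag subspaces need not be contained in the $U_k$ at all (here $\operatorname{span}(e_1)\not\subseteq U_1$).

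The paper avoids the filtration entirely and matches the quantifiers directly: given a basis with leap at least $2$, it takes $S$ to be the span of the coordinates that only ever appear with total Hermite degree at least $2$ and verifies \eqref{eq:reflect-variant} by orthogonality of Hermite polynomials; conversely, given a nontrivial $S$, it chooses a basis adapted to $S$ and shows that no Hermite monomial of $h^*$ has $S$-degree exactly one, so the first monomial touching $S$ in any ordering has leap at least $2$ in that basis. Your Steps 1--2 could be kept as an intermediate reformulation, but the bridge to $\textup{isoLeap}$ must go through these two basis-adapted arguments with the existential/universal quantifiers as above, not through the flag correspondence.
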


\begin{proof}
    Without loss of generality, we assume that $V=\bR^p$. Suppose that $\textup{isoLeap}(h^*)\geq 2$, which means that the leap complexity of $h^*$ (as defined in \cite{abbe2023sgd}*{Definition 1}) is greater than one for some orthonormal basis of $V$. We can assume that the basis is $\{e_1,e_2,\dots,e_p\}$, where $e_j$ is the vector in $\bR^p$ with the $j$-th entry being $1$ and other entries being $0$. Denote the Hermite decomposition of $h^*$ as
    \begin{equation}\label{eq:Hermite_decomp}
        h^*(z) = \sum_{i=1}^m c_i \prod_{j=1}^p \textup{He}_{\alpha_i(j)}(z_j),
    \end{equation}
    where $c_1,c_2,\dots,c_m$ are nonzero coefficients, $\alpha_1,\alpha_2,\dots,\alpha_m$ are pairwise distinct elements in $\bN^p$ with $\alpha_i(j)$ being the $j$-th entry of $\alpha_i$, and $\textup{He}_k$ is the $k$-th order Hermite polynomial. Since the leap complexity of $h^*$ is at least two, the following is true after applying some permutation on $\{1,2,\dots,m\}$:
    There exists some $m_1\in\{1,2,\dots,m-1\}$, such that it holds for any $i\in\{m_1+1,\dots,m\}$ that
    \begin{equation}\label{eq:sum_alphaj}
        \sum_{j\in J_{m_1}} \alpha_i(j)\geq 2,
    \end{equation}
    where
    \begin{equation*}
        J_{m_1} = \{j\in\{1,2,\dots,p\} : \alpha_i(j) = 0,\ \forall~i\in\{1,2,\dots,m_1\}\}.
    \end{equation*}
    Thus, by the orthogonality of Hermite polynomials, we have for $i\in\{m_1+1,\dots,m\}$ that
    \begin{equation*}
        \bE_{z_j\sim\calN(0,1)}\left[ z_j \prod_{j=1}^p \textup{He}_{\alpha_i(j)}(z_j)\right]=0,\quad\forall~j\in J_{m_1}.
    \end{equation*}
    The above also holds for $i\in\{1,2,\dots,m_1\}$ by the definition of $J_{m_1}$. Therefore, we can conclude that
    \begin{equation*}
        \mathbb{E}_{z_S\sim\calN(0, I_S)}[h^*(z) z_S] = 0,\quad \forall~z_S^\perp,
    \end{equation*}
    i.e., \eqref{eq:reflect-variant} holds, for $S = \text{span}\{e_j:j\in J_{m_1}\}$. Moreover, $S$ is nontrivial since \eqref{eq:sum_alphaj} implies $J_{m_1}$ is not empty.

    On the other hand, suppose that \eqref{eq:reflect-variant} is satisfied with some nontrivial subspace $S\subset V$ that can be assumed as $S = \{(z_1,\dots,z_{p_1},0,\dots,0):z_1,\dots,z_{p_1}\in\bR\}$ with $1\leq p_1\leq p$. We still consider the Hermite decomposition as in \eqref{eq:Hermite_decomp} and rewrite it as
    \begin{equation*}
        h^*(z) = \sum_{i=1}^{m_2} c_i' \prod_{j=1}^{p_1} \textup{He}_{\alpha_i'(j)}(z_j) h_i(z_{p_1+1},\dots,z_p),
    \end{equation*}
    where $c_1',c_2',\dots,c_{m_2}'$ are nonzero coefficients, $\alpha_1',\alpha_2',\dots,\alpha_{m_2}'$ are pairwise distinct elements in $\bN^{p_1}$, and $h_1,h_2,\dots,h_{m_2}$ are nonzero polynomials defined on $\bR^{p-p'}$. Then it follows from \eqref{eq:reflect-variant} that
    \begin{equation*}
        \sum_{i=1}^{m_2} c_i' h_i(z_{p_1+1},\dots,z_p) \bE_{z_j\sim\calN(0,1)} \left[z_j\textup{He}_{\alpha_i'(j)}(z_j)\right] \prod_{j'=1, j'\neq j}^{p_1} \bE_{z_{j'}\sim\calN(0,1)} \textup{He}_{\alpha_i'(j')}(z_{j'}) = 0,
    \end{equation*}
    for any $j\in\{1,2,\dots,p_1\}$ and $z_{p_1+1},\dots,z_p\in\bR$, which implies that
    \begin{equation*}
        \sum_{j=1}^{p_1} \alpha_i(j)' \neq 1, \quad \forall~i\in\{1,2,\dots,m_2\}.
    \end{equation*}
    Therefore, the leap complexity of $h^*$ is at least $2$ with respect to this basis, which leads to $\textup{isoLeap}(h^*)\geq 2$.
\end{proof}

\subsection{Discretization Results Implied by Theorem~\ref{thm:main_necessary}}
\label{sec:discrete_necessary}

We discuss the sample complexity result of SGD implied by Theorem~\ref{thm:main_necessary} in this subsection. Recall that there have been standard dimension-free results for bounding the distance between SGD and the mean-field dynamics; see e.g., \cite{mei2019mean}. So the result in this subsection is somehow a direct corollary. However, one needs to make minor modifications to guarantee that all boundedness assumptions in \cite{mei2019mean} are satisfied.

Given a constant $C_f^b>0$, define
\begin{equation}\label{eq:f_bound}
    \tilde{f}^*(x) = \textup{sign}(f^*(x)) \min\{|f^*(x), C_f^b|\},
\end{equation}
which is bounded with $|\tilde{f}^*(x)|\leq C_f^b$. One observation is that the subspace-sparse structure of $f^*$ implies that for any $\delta>0$, there exists a dimension-free constant $C_f^b$ depending on $h^*$ and $\delta$ such that $\bE_{x\sim\calN(0, I_d)}[|f^*(x)-\tilde{f}^*(x)|^2]<\delta$. 
The associated mean-field dynamics is
\begin{equation}\label{eq:MFflow_bound}
    \begin{cases}
        \partial_t \tilde{\rho}_t = \nabla_\theta \cdot\left(\tilde{\rho}_t \xi(t)  \nabla_\theta \tilde{\Phi}(\theta;\tilde{\rho}_t)\right), \\
        \tilde{\rho}_t\big|_{t=0} = \rho_0,
    \end{cases}
\end{equation}
where the learning rate $\xi(t) = \text{diag}(\xi_a(t), \xi_w(t) I_d)$ and the initialization $\rho_0$ are shared with \eqref{eq:MFflow}, and
\begin{equation*}
    \tilde{\Phi}(\theta;\rho) = a \bE_{x\sim \calN(0,I_d)} \left[ \left(f_{\text{NN}}(x;\rho) -  \tilde{f}^*(x)\right)\sigma(w^\top x)\right].
\end{equation*}
The corresponding SGD is given by
\begin{equation}\label{eq:SGD_bound}
    \theta^{(k+1)}_i = \theta^{(k)}_i + \gamma^{(k)} \left(\tilde{f}^*(x_k) - f_{\text{NN}}(x_k;\Theta^{(k)})\right) \nabla_\theta\tau(x_k; \theta_i^{(k)}),\quad i=1,2,\dots,N,
\end{equation}
where $N$ is the number of neurons and $\gamma^{(k)} = \text{diag}(\gamma_a^{(k)},\gamma_w^{(k)} I_d)\succeq 0$ is the learning rate with $\gamma_a^{(k)} = \epsilon\xi_a(k \epsilon)$ and $\gamma_w^{(k)} = \epsilon\xi_w(k\epsilon)$. 

Suppose that assumptions made in Theorem~\ref{thm:main_necessary} hold and fix $T>0$. Using similar analysis as in Appendix~\ref{sec:pf_stability_init}, one can conclude that for any $\delta>0$, there exists a dimension-free constant $C_f^b$ such that
\begin{equation*}
    \sup_{0\leq t\leq T} |\calE(\rho_t) - \calE(\tilde{\rho}_t)| < \delta.
\end{equation*}
Applying Theorem~\ref{thm:main_necessary} and \cite{mei2019mean}*{Theorem 1}, we can conclude that for any $\mu\in(0,1)$, there exists dimension-free constants $N_0,d_0,C_\epsilon$, such that for any $N\geq N_0$ and $d\geq d_0$, the following holds with probability at least $\mu$ for any $\epsilon\leq \frac{C_\epsilon}{d+\log N}$:
\begin{equation*}
    \inf_{k\in[0,T / \epsilon]\cap \bN} \calE_N(\Theta^{(k)}) \geq \frac{1}{8}\bE_{z\sim \calN(0,I_V)}\left[|h^*(z) - h_{S^\perp}^*(z_S^\perp)|^2\right] > 0.
\end{equation*}
If we further assume that $N = \calO(e^d)$, this indicates that SGD as in \eqref{eq:SGD_bound} cannot learn the subspace-sparse polynomial $f^*$ within finite time horizon and with $\calO(d)$ samples/data points.

\section{Proofs for Section~\ref{sec:sufficient}}
\label{sec:pf_sufficient}

\subsection{Approximation of $w(a,t)$ by Polynomials}
\label{sec:poly_approx}

This subsection follows \cite{Abbe22} closely to approximate and analyze the behavior of $w(a,t)$ for $0\leq t\leq T$ with $\xi_a(t) = 0$ and $\xi_w(t) = 1$. The dynamics of a single particle starting at $\theta = (a, 0)\in\bR^{d+1}$ can be described by the following ODE:
\begin{equation}\label{eq:dynamics_w}
    \begin{cases}
        \frac{\partial}{\partial t} w(a,t) = a \bE_x \left[ g(x,t)\sigma'(w(a,t)^\top x) x\right], \\
        w(a,0) = 0,
    \end{cases}
\end{equation}
where
\begin{equation*}
    g(x,t) = f^*(x) - f_{\text{NN}}(x;\rho_t)
\end{equation*}
is the residual. The first observation is that 
\begin{equation*}
    \bE_x\left[f^*(x)\sigma(w^\top x_V) x_V^\perp\right] = \bE_{x_V}\left[h^*(x_V)\sigma(w^\top x_V)\bE_{x_V^\perp}\left[x_V^\perp\right]\right] = 0,\quad\forall~w\in\bR^d.
\end{equation*}
By Theorem~\ref{thm:w_S=0}, we have that $\rho_t$ is supported in $\left\{(a,w):w_V^\perp = 0\right\}$, and hence that
\begin{equation*}
    w_V^\perp(a,t) = 0,\quad \forall~0\leq t\leq T.
\end{equation*}
We then analyze the behaviour of $w_V(a,t)$. Let $\{e_1,e_2,\dots,e_p\}$ be an orthonormal basis of $V$ and we denote $w_i = w^\top e_i$ and $x_i = x^\top e_i$ for any $w,x\in \bR^d$ and $i\in\{1,2,\dots,p\}$.

By Assumption~\ref{asp:sufficient}, it holds that
\begin{equation}\label{eq:Taylor’s-expan-sigma}
    \sigma'\left(w(a,t)^\top x\right) = m_1 + \sum_{l=1}^{L-1}\frac{m_{l+1}}{l!} (w(a,t)^\top x)^l + \calO\left((w(a,t)^\top x)^L\right)
\end{equation}
with $m_l = \sigma^{(l)}(0)$ and then an approximated solution to \eqref{eq:dynamics_w} (by polynomial expansion with high-order terms omitted) can be written as
\begin{equation*}
    \tilde{w}_i(a,t) = \sum_{1\leq j \leq L} Q_{i,j}(t) a^j,\quad 1\leq i\leq p,
\end{equation*}
where $Q(t)$ is given by $Q(0) = 0$ and the following dynamics:
\begin{equation}\label{eq:Qtilde}
    \begin{cases}
        \frac{d}{dt} Q_{i,1}(t) = \bE_x [x_i g(x,t) m_1],\\
        \frac{d}{dt} Q_{i,j}(t) = \displaystyle\bE_x \left[x_i g(x,t) \sum_{l=1}^{L-1} \frac{m_{l+1}}{l!} \sum_{1\leq i_1,\dots,i_l\leq p} \sum_{j_1+\dots+j_l = j-1} \prod_{s=1}^l Q_{i_{s}, j_{s}}(t) x_{i_s}\right],~~ 2\leq j\leq L.
    \end{cases}
\end{equation}
Let us remark that even if every single $\tilde{w}_i(a,t)$ depends on the basis $\{e_1,e_2,\dots,e_p\}$, the linear combination
\begin{equation}\label{eq:w_tilde}
    \Tilde{w}(a,t) = \sum_{i=1}^p e_i \Tilde{w}_i(a,t)
\end{equation}
is basis-independent. To see this, let $Q_j(t)\in\bR^d$ be the $j$-th column of $Q(t)$ and it holds that
\begin{equation*}
    \begin{cases}
        \frac{d}{dt}Q_1(t) = \bE_x [x g(x,t) m_1],\\
        \frac{d}{dt}Q_j(t) = \displaystyle  \bE_x \left[x g(x,t) \sum_{l=1}^{L-1} \frac{m_{l+1}}{l!} \sum_{j_1+\dots+j_l = j-1}  \prod_{s=1}^l x^\top Q_{j_{s}}(t) \right],
    \end{cases}
\end{equation*}
The distance between $w_i(a,t)$ and $\tilde{w}_i(a,t)$ for $i\in I$ can be bounded as follows.

\begin{proposition}\label{prop:error_Q}
Suppose that Assumption~\ref{asp:sufficient} holds. Then
\begin{equation}\label{eq:error_Q}
    | w_i(a,t) - \tilde{w}_i(a,t) |  = \calO ((|a| t)^{L+1}),\quad\forall~i\in \{1,2,\dots,p\}.
\end{equation}
\end{proposition}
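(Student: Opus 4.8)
The plan is to follow the approximation scheme of \cite{Abbe22}: bound the vector-valued error $\varepsilon(a,t) := w_V(a,t) - \tilde{w}(a,t) \in V$, whose $i$-th coordinate in the chosen orthonormal basis $\{e_1,\dots,e_p\}$ is $\varepsilon_i(a,t) = w_i(a,t) - \tilde{w}_i(a,t)$, by a Grönwall argument on the ODE it satisfies. The point that keeps every constant dimension-free is the already-established fact that $w_V^\perp(a,t) = 0$ for $0 \le t \le T$ (a consequence of Theorem~\ref{thm:w_S=0}): hence $w(a,t)^\top x = w_V(a,t)^\top x$, the residual $g(x,t)$ depends on $x$ only through $x_V$, and every expectation below is a moment of the $p$-dimensional Gaussian $x_V \sim \calN(0,I_V)$ rather than of $x \in \bR^d$.

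First I would record the a priori bounds. Since $\|\sigma'\|_\infty \le K_\sigma$, $\bE_x[g(x,t)^2] = 2\calE(\rho_t) \le 2\calE(\rho_0) = \bE_z[|h^*(z)|^2]$ (by \eqref{eq:dEdt} and the fact that $\rho_a = \calU([-1,1])$ is symmetric, cf. Remark~\ref{rmk:init_energy}), and $\bE_x[\|x_V\|^2] = p$, Cauchy--Schwarz applied to \eqref{eq:dynamics_w} gives $\|w_V(a,t)\| \le C_w |a|t$. An induction on $j$ using \eqref{eq:Qtilde}, the bound $\bE_x[g^2] \lesssim 1$, and dimension-free Gaussian moments of $x_V$ (all the inner sums have $l \le L-1$) yields $|Q_{i,j}(t)| \le C_j t^j$; hence $\|\tilde{w}(a,t)\| \le \sqrt{p}\sum_{j=1}^L C_j (|a|t)^j$, which is bounded on $\{|a| \le 1,\ 0 \le t \le T\}$. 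All constants here depend only on $h^*,\sigma,n,p,s$ and on $T$.

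Next I would derive the ODE for $\varepsilon$. Writing $P(\zeta) = m_1 + \sum_{l=1}^{L-1}\frac{m_{l+1}}{l!}\zeta^l$ for the Taylor polynomial in \eqref{eq:Taylor’s-expan-sigma}, Taylor's theorem together with Assumption~\ref{asp:sufficient} gives $|\sigma'(\zeta) - P(\zeta)| \le \frac{\|\sigma^{(L+1)}\|_\infty}{L!}|\zeta|^L$. Expanding $a P(\tilde{w}(a,t)^\top x) = a m_1 + \sum_{l=1}^{L-1}\frac{m_{l+1}}{l!}\sum_{j_1,\dots,j_l=1}^{L} a^{1+j_1+\cdots+j_l}\prod_{s=1}^l (Q_{j_s}(t)^\top x)$ in powers of $a$ and comparing with \eqref{eq:Qtilde} shows that $\partial_t \tilde{w}_i(a,t)$ is exactly the degree-$\le L$-in-$a$ part of $a\,\bE_x[g(x,t)\,P(\tilde{w}(a,t)^\top x)\,x_i]$. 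Subtracting this from \eqref{eq:dynamics_w} and using $(w(a,t) - \tilde{w}(a,t))^\top x = \varepsilon(a,t)^\top x$ yields $\partial_t \varepsilon_i = (\mathrm{I}) + (\mathrm{II}) + (\mathrm{III})$ with
\begin{align*}
  (\mathrm{I}) &= a\,\bE_x\big[g\,(\sigma'(w^\top x) - P(w^\top x))\,x_i\big], \\
  (\mathrm{II}) &= a\,\bE_x\big[g\,(P(w^\top x) - P(\tilde{w}^\top x))\,x_i\big], \\
  (\mathrm{III}) &= \sum_{l=1}^{L-1}\frac{m_{l+1}}{l!}\sum_{\substack{1 \le j_1,\dots,j_l \le L \\ j_1+\cdots+j_l \ge L}} a^{1+j_1+\cdots+j_l}\,\bE_x\Big[g\,x_i\prod_{s=1}^l (Q_{j_s}(t)^\top x)\Big].
\end{align*}
Using the a priori bounds, $\bE_x[g^2] \lesssim 1$, and Cauchy--Schwarz with dimension-free moments of $x_V$: the Taylor remainder obeys $|(\mathrm{I})| \lesssim |a|\,\|w_V(a,t)\|^L \lesssim |a|^{L+1}t^L$; factoring $P(\zeta)-P(\zeta') = (\zeta-\zeta')R(\zeta,\zeta')$ with $\deg R \le L-2$ and using that the resulting moment-polynomial in $\|w_V\|,\|\tilde{w}\|$ is bounded gives $|(\mathrm{II})| \lesssim |a|\,\|\varepsilon(a,t)\|$; and each monomial of $(\mathrm{III})$ carries a factor $a^{1+\sum_s j_s}\prod_s Q_{j_s}(t)$ with $L \le \sum_s j_s \le (L-1)L$, so $|(\mathrm{III})| \lesssim |a|(|a|t)^{\sum_s j_s} \lesssim |a|^{L+1}t^L$ on $\{|a| \le 1,\ 0 \le t \le T\}$. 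Altogether
\begin{equation*}
  |\partial_t\varepsilon_i(a,t)| \le C_1 |a|^{L+1} t^L + C_2 |a|\,\|\varepsilon(a,t)\|,\qquad 1 \le i \le p,
\end{equation*}
with $C_1,C_2$ dimension-free.

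Finally, since $\frac{d}{dt}\|\varepsilon(a,t)\| \le \sqrt{p}\max_{1\le i\le p}|\partial_t\varepsilon_i(a,t)|$, one obtains $\frac{d}{dt}\|\varepsilon(a,t)\| \le \sqrt{p}\,C_1 |a|^{L+1} t^L + \sqrt{p}\,C_2 |a|\,\|\varepsilon(a,t)\|$ with $\varepsilon(a,0) = 0$, and Grönwall's inequality yields $\|\varepsilon(a,t)\| \le \sqrt{p}\,C_1 e^{\sqrt{p}\,C_2 T} |a|^{L+1}\int_0^t s^L\,ds = \calO((|a|t)^{L+1})$; then \eqref{eq:error_Q} follows because $|w_i(a,t) - \tilde{w}_i(a,t)| = |\varepsilon_i(a,t)| \le \|\varepsilon(a,t)\|$. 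I expect the main obstacle to be the bookkeeping around the truncation term $(\mathrm{III})$: one must expand $a P(\tilde{w}^\top x)$ in powers of $a$, correctly separate the part captured by \eqref{eq:Qtilde} from the degree-$\ge L+1$ remainder, and check that the combinatorial sums in that remainder are controlled by the a priori estimate $|Q_{i,j}(t)| \le C_j t^j$ with constants independent of $d$ (together with the related but routine task of verifying that all the Gaussian-moment estimates reduce to the $p$-dimensional space $V$).
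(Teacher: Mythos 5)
Your proposal is correct and follows essentially the same route as the paper's proof: the same three-way decomposition of $\partial_t(w_i-\tilde w_i)$ into the Taylor remainder of $\sigma'$ (controlled via $w_i(a,t)=\calO(|a|t)$, the paper's Lemma~\ref{lem:wOt}), the difference of the Taylor polynomial at $w$ versus $\tilde w$ (yielding the Grönwall term), and the degree-$\geq L+1$-in-$a$ truncation (controlled via $Q_{i,j}(t)=\calO(t^j)$, the paper's Lemma~\ref{lem:order_Qj}), followed by Grönwall's inequality. The only cosmetic difference is that you run Grönwall on the Euclidean norm $\|\varepsilon\|$ rather than on $\sum_i|w_i-\tilde w_i|$, and your handling of the Taylor remainder term is written slightly more carefully than in the paper.
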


We need the following two lemmas to prove Proposition~\ref{prop:error_Q}.

\begin{lemma}\label{lem:order_Qj}
For any $i\in \{1,2\dots,p\}$ and $j\in\{1,2,\dots, L\}$, it holds that
\begin{equation}\label{eq:order_Qj}
    Q_{i,j}(t) = \calO(t^{j}). 
\end{equation}
\end{lemma}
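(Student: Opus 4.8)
The plan is to prove \eqref{eq:order_Qj} by strong induction on $j\in\{1,\dots,L\}$, turning the differential recursion \eqref{eq:Qtilde} into a growth estimate via the elementary fact that a bound of order $t^{j-1}$ on $\frac{d}{dt}Q_{i,j}(t)$, integrated from the zero initial value $Q_{i,j}(0)=0$, produces a bound of order $t^{j}$ on $Q_{i,j}(t)$.

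The preliminary step I would set up first is a uniform-in-time $L^2$ bound on the residual $g(x,t)=f^*(x)-f_{\text{NN}}(x;\rho_t)$. Since $\rho_a=\calU([-1,1])$ is symmetric, Remark~\ref{rmk:init_energy} gives $\calE(\rho_0)=\tfrac12\bE_z[|h^*(z)|^2]$, and by the monotonicity \eqref{eq:dEdt} we get $\bE_x[|g(x,t)|^2]=2\calE(\rho_t)\le \bE_z[|h^*(z)|^2]$ for all $t\ge 0$. Combining this with the Cauchy--Schwarz inequality and the finiteness of all Gaussian moments, I would record the bound
\[
\Bigl| \bE_x\Bigl[ x_i\, g(x,t)\, \prod_{s=1}^l x_{i_s} \Bigr]\Bigr| \le \bE_z[|h^*(z)|^2]^{1/2}\,\bE_x\Bigl[ x_i^2 \prod_{s=1}^l x_{i_s}^2 \Bigr]^{1/2}\le B_l,
\]
where $B_l$ is a dimension-free constant depending only on $l$, $h^*$, and $p$. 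This is the only place where probability enters the argument.

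For the base case $j=1$, the equation $\frac{d}{dt}Q_{i,1}(t)=m_1\bE_x[x_i g(x,t)]$ with $m_1=\sigma'(0)$ finite gives $\bigl|\frac{d}{dt}Q_{i,1}(t)\bigr|\le |m_1|B_0$, hence $|Q_{i,1}(t)|\le |m_1|B_0\,t=\calO(t)$. For the inductive step, fixing $2\le j\le L$ and assuming \eqref{eq:order_Qj} for all smaller indices, I would note that the deterministic factors $Q_{i_s,j_s}(t)$ pull out of the expectation in \eqref{eq:Qtilde}; that the constraint $j_1+\dots+j_l=j-1$ with each $j_s\ge1$ forces every $j_s\le j-1<j$, so the inductive hypothesis applies and $\prod_{s=1}^l Q_{i_s,j_s}(t)=\calO(t^{j_1+\dots+j_l})=\calO(t^{j-1})$; and that the number of summands is finite (controlled by $p$ and $L$) while each $m_{l+1}=\sigma^{(l+1)}(0)$ is finite since $\sigma\in\calC^{L+1}(\bR)$ by Assumption~\ref{asp:sufficient}. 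Together with the displayed expectation bound, this yields $\bigl|\frac{d}{dt}Q_{i,j}(t)\bigr|=\calO(t^{j-1})$, and integrating from $t=0$ closes the induction.

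I do not expect a genuine obstacle here: this is a routine bootstrap. The only points that need care are (i) that the residual $L^2$ bound is uniform in $t$, so that all implied constants are dimension-free — this is exactly where \eqref{eq:dEdt} and Remark~\ref{rmk:init_energy} are used; and (ii) the index bookkeeping in the nested sums of \eqref{eq:Qtilde}, namely that every $j_s$ occurring there is strictly below $j$, which is what legitimizes the strong induction.
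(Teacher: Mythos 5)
Your proposal is correct and follows essentially the same route as the paper's proof: the uniform-in-time $L^2$ bound on the residual from \eqref{eq:dEdt}, Cauchy--Schwarz to control the expectations, and induction on $j$ with integration of the $\calO(t^{j-1})$ derivative bound from the zero initial condition. The index bookkeeping you flag (each $j_s\le j-1$ in the constraint $j_1+\dots+j_l=j-1$) is exactly what the paper's induction relies on as well.
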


\begin{proof}
    The non-increasing property of the energy functional implies that
    \begin{equation*}
        2\calE(\rho_t) = \bE_x[|g(x,t)|^2] \leq \bE_x[|f^*(x) - f_{\text{NN}}(x,\rho_0)|^2] = \bE_z [|h^*(z)|^2] < +\infty.
    \end{equation*}
    Then \eqref{eq:order_Qj} can be proved by induction. For $j = 1$, it follows from the boundedness of
    \begin{equation*}
        \left|\frac{d}{dt} Q_{i,1}(t)\right| = \left|\bE_x [x_i g(x,t) m_1]\right| \leq |m_1|\left( \bE_x[x_i^2]\cdot \bE_x[|g(x,t)|^2]\right)^{1/2}
    \end{equation*}
    that $Q_{i,e_l}(t) = \calO(t)$. Consider any $2 \leq j \leq L$ and assume that $Q_{i,j'} = \calO(t^{j'})$ holds for any $1\leq i\leq p$ and $1\leq j' < j$. Then one has that 
    \begin{align*}
        & \left|\frac{d}{dt} Q_{i,j}(t)\right| \\
        \leq & \left(\bE_x[|g(x,t)|^2]\cdot \bE_x \left[\left|x_i \sum_{l=1}^{L-1} \frac{m_{l+1}}{l!} \sum_{1\leq i_1,\dots,i_l\leq p} \sum_{j_1+\dots+j_l = j-1} \prod_{s=1}^l Q_{i_{s}, j_{s}}(t) x_{i_s}\right|^2\right]\right)^{1/2} \\
        =  & \calO(t^{j-1}),
    \end{align*}
    which implies that $Q_{i,j}(t) = \calO(t^{j})$.
\end{proof}

\begin{lemma}\label{lem:wOt}
    Suppose that Assumption~\ref{asp:sufficient} holds. We have for all $i\in \{1,2,\dots,p\}$ that
    \begin{equation}\label{eq:wOt}
        w_i(a,t) = \calO(|a| t).
    \end{equation}
\end{lemma}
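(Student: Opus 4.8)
The plan is to bound the time derivative of $w_i(a,t)$ directly and then integrate, in the same spirit as the proof of Lemma~\ref{lem:support_a}. First I would record the facts that make the estimate dimension-free. During this phase $\xi_a(t)=0$, so the second-layer weight $a$ is frozen with $|a|\leq 1$ (since $\rho_a=\calU([-1,1])$), and the initialization $\rho_0=\rho_a\times\delta_{\bR^d}$ together with the symmetry of $\rho_a$ gives $f_{\text{NN}}(x;\rho_0)=0$, hence $\calE(\rho_0)=\tfrac12\bE_z[|h^*(z)|^2]$. The energy-decay identity \eqref{eq:dEdt} then yields the bound $\bE_x[|g(x,t)|^2]=2\calE(\rho_t)\leq 2\calE(\rho_0)=\bE_z[|h^*(z)|^2]$, a constant depending only on $h^*$ and $p$, exactly as already used in the proof of Lemma~\ref{lem:order_Qj}.

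Next, from the particle ODE \eqref{eq:dynamics_w} and the fact that $x_i=x^\top e_i\sim\calN(0,1)$, I would estimate, using the boundedness of $\sigma'$ from Assumption~\ref{asp:sufficient} and Cauchy--Schwarz,
\begin{equation*}
    \left|\frac{\partial}{\partial t}w_i(a,t)\right| = \left|a\,\bE_x\left[g(x,t)\sigma'(w(a,t)^\top x)x_i\right]\right|\leq |a|\,\|\sigma'\|_{L^\infty(\bR)}\,\bE_x\left[|g(x,t)|\,|x_i|\right]\leq |a|\,K_\sigma\,\bE_x\left[|g(x,t)|^2\right]^{1/2}\bE_x\left[x_i^2\right]^{1/2}.
\end{equation*}
Since $\bE_x[x_i^2]=1$, combining with the residual bound from the first step gives $|\partial_t w_i(a,t)|\leq |a|\,K_\sigma\,\bE_z[|h^*(z)|^2]^{1/2}$, which is of the form $C|a|$ with $C$ depending only on $\sigma$ and $h^*$.

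Finally, integrating in $t$ from $0$ and using $w_i(a,0)=0$ yields $|w_i(a,t)|\leq K_\sigma\,\bE_z[|h^*(z)|^2]^{1/2}\,|a|\,t=\calO(|a|t)$, which is the claim. There is no genuine obstacle here: the only points that require attention are verifying that the initial loss is $\tfrac12\bE_z[|h^*(z)|^2]$ (so that the residual bound carries no hidden $d$-dependence) and keeping the explicit $|a|$ factor in front throughout so that the bound scales linearly in $|a|$. This lemma then feeds into Proposition~\ref{prop:error_Q}: with $w(a,t)^\top x$ of size $\calO(|a|t)\cdot\|x\|$, the Taylor remainder $\calO((w(a,t)^\top x)^L)$ in \eqref{eq:Taylor's-expan-sigma} becomes controllable, which is what drives the $\calO((|a|t)^{L+1})$ error estimate there.
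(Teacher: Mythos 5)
Your proposal is correct and follows the same basic strategy as the paper: bound $|\partial_t w_i(a,t)|$ by a constant times $|a|$ and integrate from $w_i(a,0)=0$. The only difference is in how the derivative bound is justified. The paper asserts the existence of a constant $C$ and an open set $A\ni 0$ on which $\left|\bE_x\left[x_i\, g(x,t)\,\sigma'(w^\top x)\right]\right|\leq C$, and concludes the estimate ``as long as $w(a,t)$ does not leave $A$,'' i.e.\ a local argument. You instead derive the bound globally and explicitly via Cauchy--Schwarz, $\bE_x[x_i^2]=1$, the uniform bound on $\sigma'$ from Assumption~\ref{asp:sufficient}, and the energy-decay estimate $\bE_x[|g(x,t)|^2]\leq \bE_z[|h^*(z)|^2]$ (which the paper itself uses in the proof of Lemma~\ref{lem:order_Qj}). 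Your version is, if anything, cleaner: it dispenses with the ``does not leave $A$'' caveat and makes the constant explicit and manifestly dimension-free. One cosmetic point: the constant $K_\sigma$ is introduced in Assumption~\ref{asp:necessary} rather than Assumption~\ref{asp:sufficient}, so you should simply write $\|\sigma'\|_{L^\infty(\bR)}$ (which is finite by Assumption~\ref{asp:sufficient}) rather than reusing that symbol.
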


\begin{proof}
    There exists a constant $C>0$ and a open subset $A\subset \{w\in\bR^d:w_V^\perp = 0\}$ containing $0$, such that 
    \begin{equation*}
        \left|\bE_x\left[x_i g(x,t) \sigma'(w^\top x)\right]\right| \leq C,\quad \forall~w\in A,\ t\geq 0,\ i\in\{1,2,\dots,p\}.
    \end{equation*}
    Thus, we have
    \begin{equation*}
        \left|\frac{\partial}{\partial t}w_i(a,t)\right| \leq |a|, 
    \end{equation*}
    as long as $w(a,t)$ does not leave $A$. This implies \eqref{eq:wOt}.
\end{proof}

Now we can proceed to prove Proposition~\ref{prop:error_Q}.

\begin{proof}[Proof of Proposition~\ref{prop:error_Q}]
Set $\tilde{w}_V^\perp(a,t) =  0$. It can be estimated for any $i\in \{1,2,\dots,p\}$ that
\begin{align*} 
    & \left| \frac{\partial}{\partial t} \tilde{w}_i(a,t) - a \bE_x \left[ x_i g(x,t)\left(m_1 + \displaystyle\sum_{l=1}^{L-1}\frac{m_{l+1}}{l!} (\tilde{w}(a,t)^\top x)^l\right) \right]\right| \\
    \leq & \left| \sum_{1\leq j\leq L} \frac{d}{dt}Q_{i,j}(t) a^j - a \bE_x \left[x_i g(x,t)\left(m_1 + \displaystyle\sum_{l=1}^{L-1}\frac{m_{l+1}}{l!} \left(\sum_{1\leq i'\leq p} \sum_{1\leq j\leq L} Q_{i', j}(t) a^j  x_{i'}\right)^l\right) \right]\right| \\
    \leq & \sum_{L+1 \leq j\leq  L^{L-1}+1}\left| \displaystyle a^j \cdot \bE_x \left[x_i g(x,t) \sum_{l=1}^{L-1} \frac{m_{l+1}}{l!} \sum_{1\leq i_1,\dots,i_l\leq p} \sum_{j_1+\dots+j_l = j-1} \prod_{s=1}^l Q_{i_{s}, j_{s}}(t) x_{i_s}\right]\right| \\
    \leq & \sum_{L+1\leq j \leq L^{L-1}+1} |a|^j \\
    &\qquad\quad \cdot \left( \bE[|g(x,t)|^2] \cdot \bE_x \left[ \left| x_i  \displaystyle\sum_{l=1}^{L-1} \frac{m_{l+1}}{l!} \sum_{1\leq i_1,\dots,i_l\leq p} \sum_{j_1+\dots+j_l = j-1} \prod_{s=1}^l Q_{i_{s}, j_{s}}(t) x_{i_s}\right|^2\right] \right)^{1/2} \\
    = & \calO(|a|^{L+1} t^L),
\end{align*}
which combined with \eqref{eq:Taylor’s-expan-sigma} yields that
\begin{align*}
    & \frac{\partial}{\partial t} \sum_{i=1}^p| w_i(a,t) - \tilde{w}_i(a,t) | \\
    \leq & \sum_{i=1}^p \left| \frac{\partial}{\partial t} w_i(a,t) - \frac{\partial}{\partial t}\tilde{w}_i(a,t) \right| \\
    \leq & \sum_{i=1}^p\Bigg| a \bE_x \left[x_i g(x,t)\left(m_1 + \displaystyle\sum_{l=1}^{L-1}\frac{m_{l+1}}{l!} (w(a,t)^\top x)^l\right)\right] \\
    & \qquad\qquad - a \bE_x \left[x_i g(x,t)\left(m_1 + \displaystyle\sum_{l=1}^{L-1}\frac{m_{l+1}}{l!} (\tilde{w}(a,t)^\top x)^l\right)\right]\Bigg| \\
    &\qquad\qquad +\left|a \bE_x \left[x_i g(x,t)\right]\right| \cdot \calO\left((w(a,t)^\top x)^L\right) +\calO\left(|a|^{L+1}t^L\right) \\
    \leq & \sum_{i=1}^p\left| \bE_x \left[x_i a^\top g(x,t) \displaystyle\sum_{l=1}^{L-1}\frac{m_{l+1}}{l!} \left((w(a,t)^\top x)^l - (\tilde{w}(a,t)^\top x)^l\right)\right] \right|+\calO\left(|a|^{L+1}t^L\right) \\
    = & \calO\left(\sum_{i=1}^p|w_i(a,t) - \tilde{w}_i(a,t)|\right) +\calO\left(|a|^{L+1}t^L\right).
\end{align*}
Then one can conclude \eqref{eq:error_Q} from Gronwall's inequality.
\end{proof}

Even if $\Tilde{w}(a,t)$ approximates $w(a,t)$ using polynomial expansion, the coefficients $Q(t)$ are still very difficult to analyze. Thus, we follow \cite{Abbe22} to consider the following dynamics that is obtained by replacing $g(x,t) = f^*(x) - f_{\text{NN}}(x;\rho_t)$ by $f^*(x)$ in \eqref{eq:Qtilde}:
\begin{equation}\label{eq:w_hat}
    \hat{w}_i(a,t) = \sum_{1\leq j \leq L} \hat{Q}_{i,j}(t) a^j,\quad i\in \{1,2,\dots,p\},
\end{equation}
where $\hat{Q}(t)$ is given by $\hat{Q}(0) = 0$ and
\begin{equation}\label{eq:hat_Q}
    \begin{cases}
        \frac{d}{dt} \hat{Q}_{i,1}(t) = \bE_x [x_i f^*(x) m_1],\\
        \frac{d}{dt} \hat{Q}_{i,j}(t) = \displaystyle\bE_x \left[x_i f^*(x) \sum_{l=1}^{L-1} \frac{m_{l+1}}{l!} \sum_{1\leq i_1,\dots,i_l\leq p} \sum_{j_1+\dots+j_l = j-1} \prod_{s=1}^l \hat{Q}_{i_{s}, j_{s}}(t) x_{i_s}\right],~~ 2\leq j\leq L.
    \end{cases}
\end{equation}
Similar to $\Tilde{w}(a,t)$, the linear combination $\hat{w}(a,t)$ defined as
\begin{equation*}
    \hat{w}(a,t) = \sum_{i=1}^p e_i\hat{w}_i(a,t)
\end{equation*}
is also independent of the orthogonal basis $\{e_1,e_2,\dots,e_p\}$. $\hat{Q}(t)$ can be understood clearly as follows.

\begin{proposition}\label{prop:structure_hatQ}
    For any $i\in \{1,2,\dots,p\}$ and $j\in\{1,2,\dots,L\}$, there exists a constant $\hat{q}_{i,j}$ depending only on $h^*$ and $\sigma$, such that
    \begin{equation}\label{eq:prop_hatQ}
        \hat{Q}_{i,j}(t) = \hat{q}_{i,j} t^{j}.
    \end{equation}
\end{proposition}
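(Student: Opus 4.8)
\textbf{Proof proposal for Proposition~\ref{prop:structure_hatQ}.}
The plan is a strong induction on $j$, exploiting the key feature that, after replacing $g(x,t)$ by the time-independent $f^*(x)$, the system \eqref{eq:hat_Q} has coefficients that are constants in $t$. Concretely, set $c_{i,1} := m_1\,\bE_x[x_i f^*(x)]$ and, for $2\leq j\leq L$, $1\leq l\leq L-1$, and indices $i,i_1,\dots,i_l\in\{1,\dots,p\}$, set $c_{i;i_1,\dots,i_l} := \frac{m_{l+1}}{l!}\,\bE_x\big[x_i f^*(x)\prod_{s=1}^l x_{i_s}\big]$. These quantities are finite since $f^*$ is a polynomial and all moments of $\calN(0,I_d)$ are finite, and they depend only on $\sigma$ (through $m_l=\sigma^{(l)}(0)$) and on $h^*$ (through $f^*(x)=h^*(x_V)$ and the fixed orthonormal basis $\{e_1,\dots,e_p\}$ of $V$). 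The base case $j=1$ is immediate: $\frac{d}{dt}\hat Q_{i,1}(t)=c_{i,1}$ is constant, so $\hat Q_{i,1}(0)=0$ forces $\hat Q_{i,1}(t)=c_{i,1}t$, i.e.\ $\hat q_{i,1}=c_{i,1}$.

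For the inductive step, fix $j\geq 2$ and assume $\hat Q_{i',j'}(t)=\hat q_{i',j'}t^{j'}$ for all $i'\in\{1,\dots,p\}$ and all $j'<j$. In the right-hand side of the $\hat Q_{i,j}$-equation in \eqref{eq:hat_Q}, every summand carries the constraint $j_1+\cdots+j_l=j-1$ with each $j_s\geq 1$; the lower bound $j_s\geq 1$ is exactly the structural fact that the $a$-expansions \eqref{eq:w_hat} of the $\hat w_{i}$ have vanishing constant term, so that the $l$-fold products $(\hat w(a,t)^\top x)^l$ contribute only powers $a^{j_1+\cdots+j_l}$ with $j_s\geq 1$. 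Hence $1\leq j_s\leq j-1<j$ for every $s$, the inductive hypothesis applies to each factor, and
\[
\prod_{s=1}^l \hat Q_{i_s,j_s}(t)=\Big(\prod_{s=1}^l \hat q_{i_s,j_s}\Big)\,t^{\,j_1+\cdots+j_l}=\Big(\prod_{s=1}^l \hat q_{i_s,j_s}\Big)\,t^{\,j-1}.
\]
Therefore $\frac{d}{dt}\hat Q_{i,j}(t)$ equals $\big(\sum c_{i;i_1,\dots,i_l}\prod_s\hat q_{i_s,j_s}\big)\,t^{j-1}$, a constant depending only on $h^*$ and $\sigma$ times $t^{j-1}$; integrating from $0$ with $\hat Q_{i,j}(0)=0$ gives $\hat Q_{i,j}(t)=\hat q_{i,j}t^{j}$ with $\hat q_{i,j}$ equal to $1/j$ times that constant. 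This closes the induction. (As a side remark, \eqref{eq:hat_Q} is a polynomial ODE, hence locally well-posed; since the induction exhibits a global solution that is polynomial in $t$, uniqueness identifies it as \emph{the} solution.)

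I do not expect a genuine obstacle here: this is essentially a bookkeeping argument, and the parallel structure with Lemma~\ref{lem:order_Qj} (where the same recursion only yields the order $\calO(t^{j})$ rather than an exact power) makes it clear that the improvement to an exact monomial is purely an artifact of the coefficients being $t$-independent. The one point deserving explicit care is the verification that all inner indices satisfy $1\leq j_s<j$, since this is what allows the induction to close; it is worth recording precisely where the bound $j_s\geq 1$ comes from, as done above.
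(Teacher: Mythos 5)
Your induction is correct and is precisely the ``straightforward induction on $j$'' that the paper invokes without detail: the base case integrates a constant, and the triangular structure of \eqref{eq:hat_Q} (each $j_s$ satisfies $1\leq j_s\leq j-1$ because the expansions \eqref{eq:w_hat} have no constant term in $a$) lets the inductive hypothesis turn the right-hand side into a constant multiple of $t^{j-1}$. No gaps; the side remarks on uniqueness and on where $j_s\geq 1$ comes from are harmless extras.
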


\begin{proof}
    The proof is straightforward by induction on $j$.
\end{proof}

The next proposition quantifies the distance between $\hat{Q}(t)$ and $Q(t)$.

\begin{proposition}\label{prop:error_Qhat}
    It holds for any $i\in \{1,2,\dots,p\}$ and $j\in\{1,2,\dots,L\}$ that
    \begin{equation}\label{eq:error_Qhat}
        | Q_{i,j}(t) - \hat{Q}_{i,j}(t) | = \calO(t^{j+1}).
    \end{equation}
\end{proposition}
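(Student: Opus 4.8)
The plan is to prove \eqref{eq:error_Qhat} by induction on $j$, after first recording the crucial observation that the residual $g(x,t) = f^*(x) - f_{\text{NN}}(x;\rho_t)$ appearing in \eqref{eq:Qtilde} differs from the $f^*(x)$ appearing in \eqref{eq:hat_Q} only by the network output, which is itself of order $t$ near initialization. Concretely, I would first establish the preliminary bound
\[
\bE_x\left[|f_{\text{NN}}(x;\rho_t)|^2\right] = \calO(t^2),\qquad\text{equivalently}\qquad \bE_x\left[|g(x,t)-f^*(x)|^2\right]^{1/2} = \calO(t),
\]
uniformly for $t\in[0,T]$. This holds because $\rho_t$ is supported in $\{(a,w):w_V^\perp=0\}$, so $w(a,t)^\top x = w(a,t)^\top x_V$, and because $\rho_a = \calU([-1,1])$ is symmetric, so the constant term $\sigma(0)$ integrates to zero and $f_{\text{NN}}(x;\rho_t) = \int_{-1}^1 a\,(\sigma(w(a,t)^\top x_V)-\sigma(0))\,\tfrac{da}{2}$. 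Using $\|\sigma'\|_{L^\infty}\leq K_\sigma$, Lemma~\ref{lem:wOt} to get $\sup_{|a|\leq 1}\|w(a,t)\| = \calO(t)$, and $\bE[\|x_V\|^2]=p$, the claim follows. Note we genuinely need the full $\calO(t)$ rate (not $\calO(t^{1/2})$) for the induction below to close.

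For the base case $j=1$, subtracting the first lines of \eqref{eq:Qtilde} and \eqref{eq:hat_Q} gives $\tfrac{d}{dt}(Q_{i,1}-\hat Q_{i,1}) = m_1\,\bE_x[x_i(g(x,t)-f^*(x))]$, which by Cauchy--Schwarz (with $\bE_x[x_i^2]=1$) and the preliminary bound is $\calO(t)$; since both quantities vanish at $t=0$, integrating yields $|Q_{i,1}(t)-\hat Q_{i,1}(t)| = \calO(t^2)$. For the inductive step, assume \eqref{eq:error_Qhat} holds for all $i$ and all $j'<j$. Subtracting the second lines of \eqref{eq:Qtilde} and \eqref{eq:hat_Q}, each summand of $\tfrac{d}{dt}(Q_{i,j}-\hat Q_{i,j})$ has the form $\big(\prod_{s=1}^l Q_{i_s,j_s}(t)\big)\bE_x[x_i(g-f^*)\prod_s x_{i_s}] + \big(\prod_{s=1}^l Q_{i_s,j_s}(t) - \prod_{s=1}^l \hat Q_{i_s,j_s}(t)\big)\bE_x[x_i f^*\prod_s x_{i_s}]$, where $j_1+\dots+j_l = j-1$ with each $j_s\geq 1$ (hence $j_s\leq j-1<j$). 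The first term is $\calO(t^{j-1})\cdot\calO(t) = \calO(t^j)$ by Lemma~\ref{lem:order_Qj} and the preliminary bound. For the second term, expand the difference of products telescopically as $\sum_{r=1}^l\big(\prod_{s<r}\hat Q_{i_s,j_s}\big)(Q_{i_r,j_r}-\hat Q_{i_r,j_r})\big(\prod_{s>r}Q_{i_s,j_s}\big)$; by Lemma~\ref{lem:order_Qj}, Proposition~\ref{prop:structure_hatQ}, and the induction hypothesis, each summand is $\calO(t^{\sum_{s<r}j_s})\cdot\calO(t^{j_r+1})\cdot\calO(t^{\sum_{s>r}j_s}) = \calO(t^{j})$, and the bounded moment $\bE_x[x_i f^*\prod_s x_{i_s}]$ does not change the order. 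Summing the finitely many terms gives $\tfrac{d}{dt}(Q_{i,j}-\hat Q_{i,j}) = \calO(t^j)$, and integrating from $0$ (both quantities vanishing there) yields \eqref{eq:error_Qhat}.

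All $\calO$-constants above depend only on $h^*$, $\sigma$, $n$, $p$, and $T$ (through finitely many Gaussian moments of coordinates in $V$), hence are dimension-free and uniform on $[0,T]$. The only genuinely new ingredient beyond the earlier lemmas is the preliminary $\calO(t)$ bound on the network output, which is the main point to get right and is exactly where the symmetry of $\rho_a$ and the vanishing of the $\sigma(0)$ term enter; once it is in hand, the remainder is a routine composition-counting induction paralleling the analogous estimate in \cite{Abbe22}.
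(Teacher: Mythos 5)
Your proposal is correct and follows essentially the same route as the paper: the preliminary bound $\bE_x[|f_{\text{NN}}(x;\rho_t)|^2]^{1/2}=\calO(t)$ is exactly the paper's Lemma~\ref{lem:fOt} (proved there via the Taylor expansion of $\sigma$ rather than the Lipschitz bound, an immaterial difference), and the induction on $j$ with the split into a ``$f_{\text{NN}}$ times $\prod Q$'' term and a ``$f^*$ times $\prod Q-\prod\hat Q$'' term matches the paper's argument, with your telescoping of the product difference being just a more explicit writing of the same estimate.
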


We need the following lemma for the proof of Proposition~\ref{prop:error_Qhat}.

\begin{lemma}\label{lem:fOt}
    It holds that
    \begin{equation}\label{eq:order_ft}
        \left(\bE_x\left[|f_{\text{NN}}(x;\rho_t)|^2\right]\right)^{1/2} = \calO(t).
    \end{equation}
\end{lemma}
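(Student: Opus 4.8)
The goal is to show $\bigl(\bE_x[|f_{\text{NN}}(x;\rho_t)|^2]\bigr)^{1/2} = \calO(t)$, i.e.\ the network output starts at zero (which it does, since $\rho_0 = \rho_a\times\delta_{\bR^d}$ forces $w=0$ on the support, hence $\sigma(w^\top x)=\sigma(0)$ is constant and $\bE_a[a]=0$ under $\rho_a=\calU([-1,1])$, giving $f_{\text{NN}}(x;\rho_0)=0$) and grows at most linearly in $t$ during Stage~2 of Algorithm~\ref{alg:train}. The natural approach is to differentiate $t\mapsto \bE_x[|f_{\text{NN}}(x;\rho_t)|^2]$ in time and bound the derivative by a constant, then integrate from the zero initial value.

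The plan is as follows. First I would write, using the particle representation of Stage~2 (where $\xi_a\equiv 0$, $\xi_w\equiv 1$, so the $a$-coordinate is frozen and only $w(a,t)$ evolves according to \eqref{eq:dynamics_w}),
\begin{equation*}
    f_{\text{NN}}(x;\rho_t) = \bE_{a\sim\rho_a}\bigl[a\,\sigma(w(a,t)^\top x)\bigr].
\end{equation*}
Then differentiate in $t$: since $\frac{\partial}{\partial t}\sigma(w(a,t)^\top x) = \sigma'(w(a,t)^\top x)\,\bigl(\frac{\partial}{\partial t}w(a,t)\bigr)^\top x$ and $\frac{\partial}{\partial t}w(a,t) = a\,\bE_{x'}[g(x',t)\sigma'(w(a,t)^\top x')x']$, I get
\begin{equation*}
    \frac{\partial}{\partial t} f_{\text{NN}}(x;\rho_t) = \bE_a\Bigl[a^2 \sigma'(w(a,t)^\top x)\, x^\top \bE_{x'}\bigl[g(x',t)\sigma'(w(a,t)^\top x')x'\bigr]\Bigr].
\end{equation*}
Next, using $|\sigma'|\le K_\sigma$, $|a|\le 1$, and Cauchy--Schwarz in $x'$, the inner expectation has norm $\le K_\sigma \bigl(\bE_{x'}[|g(x',t)|^2]\bigr)^{1/2}\bigl(\bE_{x'}[\|x'\|^2]\bigr)^{1/2}$; since $g(x',t)=f^*(x')-f_{\text{NN}}(x';\rho_t)$ and the energy is non-increasing (so $\bE_{x'}[|g(x',t)|^2]=2\calE(\rho_t)\le 2\calE(\rho_0)=\bE_z[|h^*(z)|^2]$ by Remark~\ref{rmk:init_energy}), and $\bE_{x'}[\|x'\|^2]=d$ — here I should be a little careful, because a bare factor of $d$ would not be dimension-free; instead I would use that $w(a,t)_V^\perp = 0$ (established via Theorem~\ref{thm:w_S=0}, as in Section~\ref{sec:poly_approx}), so only the $V$-component $x'_V$ of $x'$ matters in the relevant inner products and $\bE[\|x'_V\|^2]=p$. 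Then compute $\bE_x\bigl[\frac{\partial}{\partial t}|f_{\text{NN}}(x;\rho_t)|^2\bigr] = 2\bE_x[f_{\text{NN}}(x;\rho_t)\,\partial_t f_{\text{NN}}(x;\rho_t)]$, bound it by a constant $C$ depending only on $h^*,\sigma,p$ (using $\bE_x[|f_{\text{NN}}(x;\rho_t)|^2]\le \calE(\rho_0)$ or just absorbing it), and conclude $\bE_x[|f_{\text{NN}}(x;\rho_t)|^2]\le Ct$, hence the $\calO(t)$ bound on the square root — actually this already gives $\calO(t^{1/2})$; to upgrade to $\calO(t)$ I would instead bound $\frac{d}{dt}\bigl(\bE_x[|f_{\text{NN}}|^2]\bigr)^{1/2}$ directly, noting $\bigl|\frac{\partial}{\partial t}f_{\text{NN}}(x;\rho_t)\bigr| \le C'\|x_V\|^2$ pointwise with $C'$ dimension-free, so $\bigl(\bE_x[|\partial_t f_{\text{NN}}|^2]\bigr)^{1/2}\le C''$, and then $\frac{d}{dt}\|f_{\text{NN}}(\cdot;\rho_t)\|_{L^2}\le \|\partial_t f_{\text{NN}}(\cdot;\rho_t)\|_{L^2}\le C''$, which integrates from $0$ to give the claim.

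The main obstacle I expect is the dimension bookkeeping: the term $\bE_{x'}[\|x'\|^2]=d$ appears naively and must be replaced by $\bE[\|x'_V\|^2]=p$ using the invariant subspace structure $w(a,t)_V^\perp\equiv 0$; without this observation the bound would carry a spurious factor of $d$. A secondary technical point is justifying the differentiation under the integral sign and the pointwise bound $|\partial_t f_{\text{NN}}(x;\rho_t)|\le C'\|x_V\|^2$ uniformly in $t\in[0,T]$, which needs the $L^\infty$ bounds on $\sigma,\sigma'$ from Assumption~\ref{asp:sufficient} together with $|a|\le 1$ and Lemma~\ref{lem:wOt} (boundedness of $w(a,t)$) to control the chain-rule factors — all routine but requiring care that every constant is dimension-free.
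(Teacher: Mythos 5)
Your proof is correct, but it takes a different route from the paper's. The paper's argument is ``static'': it writes $f_{\text{NN}}(x;\rho_t)=\int a\bigl(\sigma(w(a,t)^\top x)-\sigma(0)\bigr)\rho_a(da)$ (using $\bE_a[a]=0$ to subtract the $t=0$ value), Taylor-expands $\sigma$ around $0$, and then invokes the already-proved bound $w_i(a,t)=\calO(|a|t)$ from Lemma~\ref{lem:wOt} together with $w_V^\perp(a,t)=0$ to conclude $\bE_x[|f_{\text{NN}}(x;\rho_t)|^2]=\calO(t^2)$ directly. You instead take a ``dynamic'' route: differentiate $f_{\text{NN}}$ in time via the particle dynamics, bound $\|\partial_t f_{\text{NN}}(\cdot;\rho_t)\|_{L^2}$ by a dimension-free constant (using $|\sigma'|\le K_\sigma$, $|a|\le 1$, the energy decay $\bE_x[|g(x,t)|^2]\le \bE_z[|h^*(z)|^2]$, and the invariant-subspace fact $w_V^\perp\equiv 0$ to replace $\bE[\|x\|^2]=d$ by $\bE[\|x_V\|^2]=p$), and integrate $\frac{d}{dt}\|f_{\text{NN}}(\cdot;\rho_t)\|_{L^2}\le\|\partial_t f_{\text{NN}}(\cdot;\rho_t)\|_{L^2}$ from the zero initial value. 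Both arguments rest on the same underlying ingredients (zero initialization of the output, $w_V^\perp=0$, energy decay, bounded $\sigma'$); yours is slightly more elementary in that it needs only the Lipschitz bound on $\sigma$ rather than the full Taylor expansion of Assumption~\ref{asp:sufficient}, while the paper's reuses Lemma~\ref{lem:wOt} verbatim and avoids justifying differentiation under the integral. Your self-correction from the $\calO(t^{1/2})$ bound (obtained by differentiating the squared norm) to the $\calO(t)$ bound (by differentiating the norm itself) is exactly right. One trivial slip: the pointwise bound on $\partial_t f_{\text{NN}}(x;\rho_t)$ should be $C'\|x_V\|$ (one factor of $x$ from $(\partial_t w)^\top x$, the other $x$-dependence sitting inside the bounded $\sigma'$), not $C'\|x_V\|^2$; this does not affect the conclusion since either bound has finite second moment.
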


\begin{proof}
    Noticing that $f_{\text{NN}}(x;\rho_0) = 0$ by the symmetry of $\rho_a$, one has that
    \begin{align*}
        f_{\text{NN}}(x;\rho_t) & = f_{\text{NN}}(x;\rho_t) - f_{\text{NN}}(x;\rho_0) \\
        & = \int a\left(\sigma(w(a,t)^\top x) - \sigma(0)\right) \rho_a(da) \\
        & = \int a \left(\sum_{l=1}^L \frac{m_l}{l!} (w(a,t)^\top x)^l + \calO\left((w(a,t)^\top x)^{L+1}\right)\right)\rho_a(da),
    \end{align*}
    and hence by Lemma~\ref{lem:wOt} and $w_V^\perp(a,t)=0$ that
    \begin{equation*}
        \bE_x\left[|f_{\text{NN}}(x,\rho_t)|^2\right] = \calO(t^2),
    \end{equation*}
    which implies \eqref{eq:order_ft}.
\end{proof}

\begin{proof}[Proof of Proposition~\ref{prop:error_Qhat}]
    We prove \eqref{eq:error_Qhat} by introduction on $j$. For $j=1$, one has
    \begin{equation*}
        \left|\frac{d}{dt} Q_{i,1}(t) - \frac{d}{dt}\hat{Q}_{i,1}(t)\right| = \left|\bE_x [x_i f_{\text{NN}}(x,\rho_t) m_1]\right| = |m_1| \left( \bE_x[x_i^2]\cdot \bE_x[|f_{\text{NN}}(x,\rho_t)|^2]\right)^{1/2} = \calO(t),
    \end{equation*}
    which leads to $|Q_{i,e_l}(t) - \hat{Q}_{i,e_l}(t)| = \calO(t^2)$. Then we consider $2\leq j\leq L$ and assume that $| Q_{i,j'}(t) - \hat{Q}_{i,j'}(t) | = \calO(t^{j'+1})$ holds for $1\leq j'<j$. It can be estimated that
    \begin{align*}
        & \left|\frac{d}{dt} Q_{i,j}(t) - \frac{d}{dt} \hat{Q}_{i,j}(t)\right| \\
        = & \left| \bE_x \left[x_i g(x,t) \displaystyle\sum_{l=1}^{L-1} \frac{m_{l+1}}{l!} \sum_{1\leq i_1,\dots,i_l\leq p} \sum_{j_1+\dots+j_l = j-1} \prod_{s=1}^l Q_{i_{s}, j_{s}}(t) x_{i_s}\right] \right. \\
        & \qquad \left. - \bE_x \left[x_i f^*(x) \displaystyle\sum_{l=1}^{L-1} \frac{m_{l+1}}{l!} \sum_{1\leq i_1,\dots,i_l\leq p} \sum_{j_1+\dots+j_l = j-1} \prod_{s=1}^l  \hat{Q}_{i_{s}, j_{s}}(t) x_{i_s}\right]\right| \\
        = & \left| \bE_x \left[x_i f_{\text{NN}}(x;\rho_t) \displaystyle\sum_{l=1}^{L-1} \frac{m_{l+1}}{l!} \sum_{1\leq i_1,\dots,i_l\leq p} \sum_{j_1+\dots+j_l = j-1} \prod_{s=1}^l Q_{i_{s}, j_{s}}(t) x_{i_s}\right] \right. \\
        & \qquad \left. - \bE_x \left[x_i f^*(x) \displaystyle\sum_{l=1}^{L-1} \frac{m_{l+1}}{l!} \sum_{1\leq i_1,\dots,i_l\leq p} \sum_{j_1+\dots+j_l = j-1} \left(\prod_{s=1}^l  Q_{i_{s}, j_{s}}(t) - \prod_{s=1}^l  \hat{Q}_{i_{s}, j_{s}}(t)\right) x_{i_s}\right]\right| \\
        = & \left| \bE_x \left[x_i f_{\text{NN}}(x,\rho_t) \displaystyle\sum_{l=1}^{L-1} \frac{m_{l+1}}{l!} \sum_{1\leq i_1,\dots,i_l\leq p} \sum_{j_1+\dots+j_l = j-1} \prod_{s=1}^l \calO(t^{j_s}) x_{i_s}\right] \right. \\
        & \qquad  - \bE_x \left[x_i f^*(x) \displaystyle\sum_{l=1}^{L-1} \frac{m_{l+1}}{l!} \sum_{1\leq i_1,\dots,i_l\leq p} \sum_{j_1+\dots+j_l = j-1}\right. \\
        & \qquad\qquad\qquad\qquad\qquad\qquad\qquad \left.\left. \left(\prod_{s=1}^l  \left(\hat{q}_{i_{s}, j_{s}} t^{j_s} + \calO(t^{j_s+1})\right) - \prod_{s=1}^l  \hat{q}_{i_{s}, j_{s}} t^{j_s}\right) x_{i_s}\right]\right| \\
        = & \calO(t^{j}),
    \end{align*}
    where we used Lemma~\ref{lem:fOt}. Then one can conclude \eqref{eq:error_Qhat}.
\end{proof}

\subsection{From Linear Independence to Algebraic Independence}
\label{sec:linear_indep_2alg_indep}

We prove Theorem~\ref{thm:alg-indep} in this subsection.

\begin{definition}[Algebraic independence]
    Let $v_1,v_2,\dots,v_m\in \bR[a_1,a_2,\dots,a_p]$ be polynomials in $a_1,a_2,\dots,a_p$. We say that $v_1,v_2,\dots,v_m$ are algebraically independent if for any nonzero polynomial $F:\bR^m\to \bR$, 
    \begin{equation*}
        F(v_1(a_1,a_2,\dots,a_p),\dots,v_m(a_1,a_2,\dots,a_p))\neq 0 \in\bR[a_1,a_2,\dots,a_p].
    \end{equation*}
\end{definition}

\begin{lemma}\label{lem:det_nonzero}
    If $v_1,v_2,\dots,v_p\in \bR[a]$ are $\bR$-linearly independent, then
    \begin{equation}\label{eq:poly-det}
        \det\begin{pmatrix}
            v_1(a_1) & v_1(a_2) & \cdots & v_1(a_p) \\
            v_2(a_1) & v_2(a_2) & \cdots & v_2(a_p) \\
            \vdots & \vdots & \ddots & \vdots \\
            v_p(a_1) & v_p(a_2) & \cdots & v_p(a_p)
        \end{pmatrix} 
    \end{equation}
    is a non-zero polynomial in $\bR[a_1,a_2,\dots,a_p]$.
\end{lemma}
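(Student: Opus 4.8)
The plan is to argue by induction on $p$. For the base case $p=1$, $\bR$-linear independence of the single polynomial $v_1$ means precisely that $v_1 \neq 0$ in $\bR[a]$, and the determinant in \eqref{eq:poly-det} is just $v_1(a_1)$, which is therefore a nonzero element of $\bR[a_1]$.

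For the inductive step, suppose the claim holds for any $p-1$ linearly independent polynomials, and let $v_1,\dots,v_p \in \bR[a]$ be $\bR$-linearly independent. Denote by $D(a_1,\dots,a_p)$ the determinant in \eqref{eq:poly-det} and expand it along the last column:
\[
D(a_1,\dots,a_p) \;=\; \sum_{i=1}^{p} (-1)^{i+p}\, v_i(a_p)\, M_i(a_1,\dots,a_{p-1}),
\]
where $M_i \in \bR[a_1,\dots,a_{p-1}]$ is the minor obtained by deleting the $i$-th row and the last column. In particular $M_p$ is exactly the $(p-1)\times(p-1)$ analogue of \eqref{eq:poly-det} formed from $v_1,\dots,v_{p-1}$, which is again an $\bR$-linearly independent family; by the inductive hypothesis, $M_p \not\equiv 0$. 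Since $\bR$ is infinite, I can pick $\bar a = (\bar a_1,\dots,\bar a_{p-1}) \in \bR^{p-1}$ with $M_p(\bar a) \neq 0$, and then
\[
D(\bar a_1,\dots,\bar a_{p-1}, a_p) \;=\; \sum_{i=1}^{p} (-1)^{i+p}\, M_i(\bar a)\, v_i(a_p)
\]
is an $\bR$-linear combination of $v_1(a_p),\dots,v_p(a_p)$ whose coefficient of $v_p$ equals $M_p(\bar a) \neq 0$, hence a nonzero polynomial in $a_p$ by the assumed linear independence of the $v_i$. Consequently $D$ is not the zero polynomial in $\bR[a_1,\dots,a_p]$, completing the induction.

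The argument is entirely elementary, so I do not anticipate a genuine obstacle; the only points worth a sentence each are that a subfamily of a linearly independent family is linearly independent, that linear independence of polynomials in one variable is insensitive to the name of that variable (so the specialized expression above is nonzero as soon as its scalar coefficients are not all zero), and that a multivariate polynomial which remains nonzero after substituting real values for some of its variables must have been nonzero to begin with. If a less recursive presentation is preferred, one may alternatively perform an $\bR$-linear change of basis among the $v_i$ (which only multiplies $D$ by a nonzero constant) so that $\deg v_1 < \cdots < \deg v_p$, rescale $a_j = t b_j$, and let $t\to\infty$ to identify the dominant term of $D$ with a nonzero multiple of the generalized Vandermonde determinant $\det\big(b_j^{\deg v_i}\big)_{i,j}$, which is classically a nonzero polynomial; but the induction above is shorter and will be the one I write out.
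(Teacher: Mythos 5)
Your proof is correct, but it takes a genuinely different route from the paper's. The paper normalizes the family (by row operations, which only scale the determinant) so that the minimal degrees $n_1<\cdots<n_p$ of the $v_i$ are distinct, observes that every monomial of the determinant has degree at least $n_1+\cdots+n_p$, and identifies the homogeneous part of exactly that degree as $c_1\cdots c_p$ times the generalized Vandermonde determinant $\det\bigl(a_j^{n_i}\bigr)$, which is classically nonzero --- essentially the alternative you sketch in your closing remark, except the paper extracts the \emph{lowest}-degree part rather than sending $t\to\infty$. Your induction via Laplace expansion along the last column and specialization of $a_1,\dots,a_{p-1}$ is more elementary: it needs no normalization of the family and no input about generalized Vandermonde matrices, only that a subfamily of a linearly independent family is linearly independent and that a nonzero polynomial over $\bR$ has a nonvanishing point. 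What the paper's approach buys in exchange is an explicit description of the minimal-degree homogeneous component of the determinant, which is in the same spirit as the leading-order bookkeeping (the exponents $s_i$) used later in Lemmas~\ref{lem:lead_terms} and~\ref{lem:det_hatM}; however, only the nonvanishing itself is consumed by the Jacobian criterion in Theorem~\ref{thm:alg-indep}, so your argument suffices for every downstream use.
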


\begin{proof}
    Let $n_i$ be the smallest degree of nonzero monomials of $v_i$ and let $c_i$ be the accociated coefficient for $i=1,2,\dots,p$. Without loss of generality, we assume that $n_0<n_1<\dots<n_p$ (otherwise one can perform some row reductions or row permutations). The polynomial defined in \eqref{eq:poly-det} consists of monomials of degree at least $n_1+n_2+\dots+n_p$. So it suffices to prove that the sum of monomials with degree being $n_1+n_2+\dots+n_p$ is nonzero, which is true since 
    \begin{equation*}
        \det\begin{pmatrix}
            c_1 a_1^{n_1} & c_1 a_2^{n_1} & \cdots & c_1 a_p^{n_1} \\
            c_2 a_1^{n_2} & c_2 a_2^{n_2} & \cdots & c_2 a_p^{n_2} \\
            \vdots & \vdots & \ddots & \vdots \\
            c_p a_1^{n_p} & c_p a_2^{n_p} & \cdots & c_p a_p^{n_p} 
        \end{pmatrix}  =  c_1 c_2\dots c_p \cdot \det\begin{pmatrix}
            a_1^{n_1} & a_2^{n_1} & \cdots & a_p^{n_1} \\
            a_1^{n_2} & a_2^{n_2} & \cdots & a_p^{n_2} \\
            \vdots & \vdots & \ddots & \vdots \\
            a_1^{n_p} & a_2^{n_p} & \cdots & a_p^{n_p} 
        \end{pmatrix}
    \end{equation*}
    is nonzero as a generalized Vandermonde matrix. 
\end{proof}

\begin{proof}[Proof of Theorem~\ref{thm:alg-indep}]
    Since $v_1,v_2,\dots,v_p\in \bR[a]$ be $\bR$-linearly independent with the constant terms being zero, we can see that $v_1',v_2',\dots,v_p'\in \bR[a]$ are also $\bR$-linearly independent. Noticing that
    \begin{equation*}
        \frac{\partial}{\partial a_j}\left(\frac{1}{p}(v_i(a_1)+v_i(a_2)+\dots+v_i(a_p))\right) = \frac{1}{p}v_i'(a_j),
    \end{equation*}
    one can conclude that $\frac{1}{p}(v_1(a_1) + \cdots + v_1(a_p)),\dots, \frac{1}{p}(v_p(a_1) + \cdots + v_p(a_p))\in\bR[a_1,\dots,a_p]$ are $\bR$-algebraically independent by using Theorem~\ref{thm:Jacobian} and Lemma~\ref{lem:det_nonzero}.
\end{proof}

\subsection{Proofs of Proposition~\ref{prop:smallest_eigenvalue_M} and Theorem~\ref{thm:sufficient}}
\label{sec:pf_smallest_eigenvalue_M}

Some ideas in this subsection are from \cite{Abbe22}, but the proofs are significantly different since we need to show the algebraic independence to obtain a non-degenerate kernel, as discussed in Section~\ref{sec:sufficient}. 

\begin{lemma}\label{lem:lead_terms}
    Suppose that Assumption~\ref{asp:no_subspace} holds with $s\in\mathbb{N}_+$. There exists some orthonormal basis $\{e_1,e_2,\dots,e_p\}$ of $V$ such that the coefficients $\hat{q}_{i,j},1\leq i\leq p,1\leq j\leq L$ in \eqref{eq:prop_hatQ} satisfies 
    \begin{equation}\label{eq:increase_s}
        s_1 < s_2 < \cdots < s_p \leq s,
    \end{equation}
    where 
    \begin{equation*}
        s_i = \min \{j : \hat{q}_{i,j}\neq 0\},\quad i=1,2,\dots,p.
    \end{equation*}
\end{lemma}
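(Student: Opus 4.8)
The plan is to identify the coefficients $\hat q_{i,j}$ with the coordinates of the Taylor coefficients of the flow $\hat w_V$ from \eqref{eq:hatwV}, and then to choose $\{e_1,\dots,e_p\}$ as an orthonormal basis adapted to the flag spanned by those coefficients. First I would note that $\hat w(a,t)=\sum_{j=1}^{L}c_j\,(at)^j$ for vectors $c_1,\dots,c_L\in V$, and that, by the basis-independence recorded after \eqref{eq:w_hat}, each $c_j$ depends only on $V$, $h^*$, and $\sigma$; for any orthonormal basis $\{e_1,\dots,e_p\}$ of $V$ the coefficients in \eqref{eq:prop_hatQ} are then $\hat q_{i,j}=\langle c_j,e_i\rangle$. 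Next, using that $f^*(x)=h^*(x_V)$ depends only on $x_V\sim\calN(0,I_V)$ and that $\bE_x[x\,h^*(x_V)\psi(x_V)]\in V$ for every scalar function $\psi$, I would rewrite \eqref{eq:hat_Q} (with $\hat Q_{i,j}(t)=\hat q_{i,j}t^{j}$) in the vector form
\begin{equation*}
    c_1=m_1\,\bE_z\!\left[z\,h^*(z)\right],\qquad j\,c_j=\sum_{l\ge1}\frac{m_{l+1}}{l!}\sum_{j_1+\dots+j_l=j-1}\bE_z\!\left[z\,h^*(z)\prod_{s=1}^{l}\big(c_{j_s}^\top z\big)\right]\ \ (2\le j\le L),
\end{equation*}
and observe that this is precisely the recursion one obtains by Taylor-expanding \eqref{eq:hatwV} after the substitution $t\mapsto at$. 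Hence $c_j=\tfrac1{j!}\hat w_V^{(j)}(0)$ is the $j$-th Taylor coefficient of $\hat w_V$ at $0$, so Assumption~\ref{asp:no_subspace} says exactly that $\mathrm{span}\{c_1,c_2,\dots,c_s\}=V$.

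Given this, I would run a flag construction. Put $W_j:=\mathrm{span}\{c_1,\dots,c_j\}$ for $0\le j\le s$, so that $W_0=\{0\}\subseteq W_1\subseteq\dots\subseteq W_s=V$ and $\dim W_j\le\dim W_{j-1}+1$. Therefore there are exactly $p$ indices $1\le t_1<t_2<\dots<t_p\le s$ at which the dimension jumps, and $c_{t_k}\notin W_{t_k-1}$ for each such $t_k$. I would then let $e_k$ be the unit vector in the direction of the orthogonal projection of $c_{t_k}$ onto the orthogonal complement of $W_{t_k-1}$ inside $V$ (well defined since $c_{t_k}\notin W_{t_k-1}$). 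Since $e_k\perp W_{t_k-1}$ while $e_l\in W_{t_l}\subseteq W_{t_k-1}$ for $l<k$, the $e_k$ are orthonormal, hence form an orthonormal basis of $V$.

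Finally, reading off the coefficients in this basis via $\hat q_{i,j}=\langle c_j,e_i\rangle$: for $j<t_i$ we have $c_j\in W_j\subseteq W_{t_i-1}$ and $e_i\perp W_{t_i-1}$, so $\hat q_{i,j}=0$; and $\hat q_{i,t_i}=\langle c_{t_i},e_i\rangle$ equals the norm of the nonzero $W_{t_i-1}$-orthogonal component of $c_{t_i}$, hence is nonzero. Thus $s_i=\min\{j:\hat q_{i,j}\neq0\}=t_i$, and since $t_1<t_2<\dots<t_p\le s$ this is \eqref{eq:increase_s}. I expect the main obstacle to be the identification in the first paragraph — verifying that \eqref{eq:hat_Q} reproduces the Taylor expansion of \eqref{eq:hatwV} (a bookkeeping induction on $j$ exploiting the scaling $\hat w^{\mathrm{full}}(a,t)=\hat w_V(at)$ of the untruncated dynamics); after that, the argument is the standard construction of an orthonormal basis adapted to a flag.
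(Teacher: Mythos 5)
Your proof is correct and follows essentially the same route as the paper: identify the $\hat q_{i,j}$ with the coordinates of the Taylor coefficients of the flow \eqref{eq:hatwV}, invoke Assumption~\ref{asp:no_subspace} to get that these coefficients span $V$, and then choose the basis by a Gram--Schmidt/QR-type construction adapted to the resulting flag. Your write-up is in fact more explicit than the paper's (which only says ``perform the QR decomposition''), and the details you supply — the vector form of the recursion \eqref{eq:hat_Q}, the scaling $\hat w(a,t)=\hat w_V(at)$, and the verification that $s_i=t_i$ — are all accurate.
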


\begin{proof}
    Let $\hat{w}_V(t)$ be the dynamics defined in \eqref{eq:hatwV}. It can be seen that the Taylor’s expansion of $\hat{w}_V(t)$ at $t=0$ up to $s$-th order is given by
    \begin{equation*}
        \sum_{i=1}^p\sum_{j=1}^s e_i \hat{Q}_{i,j}(t) = \sum_{i=1}^p\sum_{j=1}^s e_i \hat{q}_{i,j} t^j,
    \end{equation*}
    where $\hat{Q}_{i,j}$ and $\hat{q}_{i,j}$ are as in \eqref{eq:hat_Q} and \eqref{prop:error_Qhat}. According to Assumption~\ref{asp:no_subspace}, the matrix $(\hat{q}_{i,j})_{1\leq i\leq p,1\leq j\leq s}$ is of full-row-rank. One can thus perform the QR decomposition, or equivalently choose some orthogonal basis, to obtain \eqref{eq:increase_s}.
\end{proof}

In the rest of this subsection, we will always denote $\bfs = (s_1,s_2,\dots,s_P)$ and
\begin{align*}
    & u(a_1,\dots,a_p,t) = \frac{1}{p}(w(a_1,t) + w(a_2,t) + \dots + w(a_p,t)), \\
    & \Tilde{u}(a_1,\dots,a_p,t) = \frac{1}{p}(\Tilde{w}(a_1,t) + \Tilde{w}(a_2,t) + \dots + \Tilde{w}(a_p,t)), \\
    & \hat{u}(a_1,\dots,a_p,t) = \frac{1}{p}(\hat{w}(a_1,t) + \hat{w}(a_2,t) + \dots + \hat{w}(a_p,t)),
\end{align*}
for $t\in[0,T]$, where $w(a,t)$, $\Tilde{w}(a,t)$, and $\hat{w}(a,t)$ are defined in \eqref{eq:dynamics_w}, \eqref{eq:w_tilde}, and \eqref{eq:w_hat}, respectively. Recall that $p_1,p_2,\dots,p_{\binom{n+p}{p}}$ are the orthonormal basis of $\bP_{V,n}$ with input $z\sim\calN(0,I_V)$, where $\bP_{V,n}$ is the collection of all polynomials on $V$ with degree at most $n = \text{deg}(h^*) = \text{deg}(f^*)$. Proposition~\ref{prop:smallest_eigenvalue_M} aims to bound from below the smallest eigenvalue of the kernel matrix \eqref{eq:kernel_matrix} whose definition is restated as follows
\begin{equation*}
    \calK_{i_1,i_2}(t) = \bE_{a_1,\dots,a_p}\left[\bE_{z,z'}\left[ p_{i_1}(z)\hat{\sigma}(u(a_1,\dots,a_p,t)^\top z)\hat{\sigma}(u(a_1,\dots,a_p,t)^\top z') p_{i_2}(z')\right]\right],
\end{equation*}
where $\hat{\sigma}(\xi) = (1+\xi)^n$, $(a_1,\dots,a_p)\sim\calU([-1,1]^p)$, and $1\leq i_1,i_2\leq \binom{n+p}{p}$. To do this, we define three $\binom{n+p}{p}\times\binom{n+p}{p}$ matrices
\begin{align*}
    \Tilde{\calK}_{i_1,i_2}(t) = \bE_{a_1,\dots,a_p}\left[\bE_{z,z'}\left[ p_{i_1}(z)\hat{\sigma}(\Tilde{u}(a_1,\dots,a_p,t)^\top z)\hat{\sigma}(\tilde{u}(a_1,\dots,a_p,t)^\top z') p_{i_2}(z')\right]\right],
\end{align*}
and
\begin{align*}
    & \tilde{M}_{i_1,i_2}(\bfa, t) = \bE_z \left[p_{i_1}(z)\hat{\sigma}(\tilde{u}(\bfa_{i_2},t)^\top z)\right], \\
    & \hat{M}_{i_1,i_2}(\bfa, t) = \bE_z \left[p_{i_1}(z)\hat{\sigma}(\hat{u}(\bfa_{i_2},t)^\top z)\right],
\end{align*}
where $\bfa = \left(\bfa_1,\bfa_2,\dots,\bfa_{\binom{n+p}{p}}\right)$ and $\bfa_i\in\bR^p$ for $i=1,2,\dots,\binom{n+p}{p}$.

\begin{lemma}
    It holds that
    \begin{equation}\label{eq:det_hatM}
        \det(\hat{M}(\bfa,t)) = \sum_{i=1}^{\binom{n+p}{p}}\sum_{0\leq\|\bfj_i\|_1\leq Ln} \hat{h}_\bfj t^{\|\bfj\|_1} \bfa^\bfj,
    \end{equation}
    where $\bfj = \left(\bfj_1,\bfj_2,\dots,\bfj_{\binom{n+p}{p}}\right)$ with $\bfj_i\in\bN^p$ for $i=1,2,\dots,\binom{n+p}{p}$, $\hat{h}_\bfj$ is a constant depending on $h^*$ and $\sigma$, and $\bfa^\bfj$ represents the product of entrywise powers.
\end{lemma}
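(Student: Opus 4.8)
The plan is to trace the polynomial structure of $\hat{u}(\bfa_{i_2},t)$ through the composition with $\hat{\sigma}$ and the Gaussian expectation defining $\hat{M}$, then read off \eqref{eq:det_hatM} from the Leibniz expansion of the determinant. First I would invoke Proposition~\ref{prop:structure_hatQ}, which gives $\hat{Q}_{i,j}(t)=\hat{q}_{i,j}t^{j}$, so that $\hat{w}_i(a,t)=\sum_{j=1}^{L}\hat{q}_{i,j}(ta)^{j}$ is a polynomial in the single quantity $ta$ with vanishing constant term. Consequently the $i$-th coordinate of $\hat{u}(\bfa_{i_2},t)=\frac{1}{p}\sum_{k=1}^{p}\hat{w}\big((\bfa_{i_2})_k,t\big)$ equals $\frac{1}{p}\sum_{k=1}^{p}\sum_{j=1}^{L}\hat{q}_{i,j}\,t^{j}(\bfa_{i_2})_k^{\,j}$, a sum of monomials each having the same degree in $t$ as in $\bfa_{i_2}$, with that common degree lying in $\{1,\dots,L\}$. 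I would isolate as a short sub-claim that this ``matching-degree'' property, together with the degree bound, is preserved under the operations that follow.

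Next, since $\hat{\sigma}(\zeta)=(1+\zeta)^{n}$, I would expand $\hat{\sigma}\big(\hat{u}(\bfa_{i_2},t)^{\top}z\big)=\sum_{l=0}^{n}\binom{n}{l}\big(\hat{u}(\bfa_{i_2},t)^{\top}z\big)^{l}$. The scalar $\hat{u}(\bfa_{i_2},t)^{\top}z=\sum_{i}[\hat{u}(\bfa_{i_2},t)]_i\,z_i$ is linear in $z$ with coefficients of matching degree and common degree at most $L$; taking $l$-th powers and summing over $l\le n$ gives a polynomial in $z$ of degree at most $n$ whose coefficients are polynomials in $(\bfa_{i_2},t)$ of matching degree with common degree at most $Ln$ (the $l=0$ term, the constant $1$, contributes the zero exponent, consistent with this bookkeeping). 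Since $\bE_{z\sim\calN(0,I_V)}[\,p_{i_1}(z)\,\cdot\,]$ is linear in those coefficients and only multiplies each $z$-monomial $z^{\alpha}$ by the fixed constant $\bE_{z\sim\calN(0,I_V)}[p_{i_1}(z)z^{\alpha}]$, it follows that
\[
\hat{M}_{i_1,i_2}(\bfa,t)=\sum_{0\le\|\bfj_{i_2}\|_1\le Ln}m^{(i_1,i_2)}_{\bfj_{i_2}}\;t^{\|\bfj_{i_2}\|_1}\bfa_{i_2}^{\bfj_{i_2}},
\]
where each $m^{(i_1,i_2)}_{\bfj_{i_2}}$ is a polynomial in the $\hat{q}_{i,j}$ with universal binomial/Gaussian-moment coefficients, hence depends only on $h^{*}$ and $\sigma$; in particular the $(i_1,i_2)$-entry depends on $\bfa$ only through the block $\bfa_{i_2}$.

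Finally I would expand $\det(\hat{M}(\bfa,t))=\sum_{\pi}\operatorname{sgn}(\pi)\prod_{i_2}\hat{M}_{\pi(i_2),i_2}(\bfa,t)$ by the Leibniz formula over permutations $\pi$ of $\{1,\dots,\binom{n+p}{p}\}$. Choosing one monomial $t^{\|\bfj_{i_2}\|_1}\bfa_{i_2}^{\bfj_{i_2}}$ from each factor and multiplying yields $t^{\sum_{i_2}\|\bfj_{i_2}\|_1}\prod_{i_2}\bfa_{i_2}^{\bfj_{i_2}}=t^{\|\bfj\|_1}\bfa^{\bfj}$ with $\bfj=(\bfj_1,\dots,\bfj_{\binom{n+p}{p}})$ and $0\le\|\bfj_i\|_1\le Ln$ for every $i$; collecting coefficients over all $\pi$ and all such choices gives exactly \eqref{eq:det_hatM}, with $\hat{h}_{\bfj}$ a signed sum of products of the $m^{(i_1,i_2)}_{\bfj_{i_2}}$, hence again depending only on $h^{*}$ and $\sigma$.

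The lemma is essentially pure bookkeeping, so there is no real obstacle; the only point needing a little care is the sub-claim of the second paragraph — that ``degree in $t$ equals degree in $\bfa_{i_2}$'' survives products of such polynomials and is undisturbed by the additive $1$ inside $\hat{\sigma}$ — which holds because that $1$ merely supplies the zero exponent.
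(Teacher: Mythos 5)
Your proof is correct and follows the same route as the paper, whose entire proof is the single sentence that the result follows directly from Proposition~\ref{prop:structure_hatQ}; you have simply written out the bookkeeping (the matching $t$- and $\bfa$-degrees inherited from $\hat{Q}_{i,j}(t)=\hat{q}_{i,j}t^{j}$, the expansion of $\hat\sigma$, linearity of the Gaussian expectation, and the Leibniz formula) that the paper leaves implicit.
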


\begin{proof}
    The result follows directly from Proposition~\ref{prop:structure_hatQ}.
\end{proof}

\begin{lemma}\label{lem:det_tilde_M}
    It holds that
    \begin{equation*}
        \det(\tilde{M}(\bfa,t)) = \sum_{i=1}^{\binom{n+p}{p}} \sum_{0\leq\|\bfj_i\|_1\leq Ln} \tilde{h}_\bfj(t) \bfa^\bfj,
    \end{equation*}
    with
    \begin{equation*}
        \tilde{h}_\bfj(t) = \hat{h}_\bfj t^{\|\bfj\|_1} + \calO(t^{\|\bfj\|_1+1}).
    \end{equation*}
\end{lemma}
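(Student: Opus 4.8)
The plan is to compare $\det(\tilde{M}(\bfa,t))$ monomial-by-monomial with the already-understood $\det(\hat{M}(\bfa,t))$ of \eqref{eq:det_hatM}, showing that passing from $\hat{u}$ to $\tilde{u}$ only perturbs the coefficient of each $\bfa^{\bfj}$ at order $\calO(t^{\|\bfj\|_1+1})$. Set $N=\binom{n+p}{p}$. The first step is to record the algebraic shape of the entries: since $\hat{\sigma}(\zeta)=(1+\zeta)^n$, the function $\hat{\sigma}(\bfq^\top z)=(1+\bfq^\top z)^n$ is a polynomial in $(\bfq,z)$ of degree at most $n$ in $z$, so integrating it against the fixed polynomial $p_{i_1}(z)$ over $z\sim\calN(0,I_V)$ produces, for each $i_1$, a fixed polynomial $R_{i_1}:\bR^p\to\bR$ of degree at most $n$ with $t$-independent coefficients such that $\tilde{M}_{i_1,i_2}(\bfa,t)=R_{i_1}(\tilde{u}(\bfa_{i_2},t))$ and $\hat{M}_{i_1,i_2}(\bfa,t)=R_{i_1}(\hat{u}(\bfa_{i_2},t))$. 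In particular $\tilde{M}_{i_1,i_2}$ depends on $\bfa$ only through the block $\bfa_{i_2}$, which keeps the determinant expansion clean, and everything reduces to comparing $\tilde{u}(\bfa_{i_2},t)$ with $\hat{u}(\bfa_{i_2},t)$ coordinatewise.

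For the bookkeeping I would say that a polynomial $\phi(\bfb,t)$ in finitely many variables $\bfb$, smooth in $t\in[0,T]$, has property $(\star)$ if the coefficient of every monomial $\bfb^{\bfk}$ is $\calO(t^{\|\bfk\|_1})$ on $[0,T]$, and property $(\star\star)$ if it is $\calO(t^{\|\bfk\|_1+1})$. Comparing coefficients, sums and products of $(\star)$-polynomials are $(\star)$, the product of a $(\star)$-polynomial with a $(\star\star)$-polynomial is $(\star\star)$, and hence (telescoping $\prod_s\phi_s-\prod_s\psi_s=\sum_s(\phi_1\cdots\phi_{s-1})(\phi_s-\psi_s)(\psi_{s+1}\cdots\psi_m)$) if all $\phi_s,\psi_s$ are $(\star)$ and each $\phi_s-\psi_s$ is $(\star\star)$, then $\prod_s\phi_s-\prod_s\psi_s$ is $(\star\star)$. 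Now by \eqref{eq:w_tilde}, \eqref{eq:w_hat} and the definitions of $\tilde u,\hat u$, the $l$-th coordinate of $\tilde{u}(\bfa_{i_2},t)$ is $\sum_{j=1}^{L}Q_{l,j}(t)\,\tfrac1p\sum_{k=1}^{p}a_{i_2,k}^{\,j}$ and that of $\hat{u}(\bfa_{i_2},t)$ is the same with $Q_{l,j}$ replaced by $\hat{Q}_{l,j}$; every monomial occurring is a pure power $a_{i_2,k}^{\,j}$ with coefficient $\tfrac1p Q_{l,j}(t)$ resp.\ $\tfrac1p\hat{Q}_{l,j}(t)$. Proposition~\ref{prop:structure_hatQ} gives $\hat{Q}_{l,j}(t)=\hat{q}_{l,j}t^{j}$, so $\hat{u}_l(\cdot,t)$ is $(\star)$; Proposition~\ref{prop:error_Qhat} gives $Q_{l,j}(t)-\hat{Q}_{l,j}(t)=\calO(t^{j+1})$, so $\tilde{u}_l(\cdot,t)$ is $(\star)$ and $\tilde{u}_l(\cdot,t)-\hat{u}_l(\cdot,t)$ is $(\star\star)$, both as polynomials in $\bfa_{i_2}$. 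Applying the product/telescoping rules to the finitely many monomials of $R_{i_1}$ then yields that $\tilde{M}_{i_1,i_2}(\cdot,t)$ is $(\star)$ and $\tilde{M}_{i_1,i_2}(\cdot,t)-\hat{M}_{i_1,i_2}(\cdot,t)$ is $(\star\star)$, as polynomials in $\bfa_{i_2}$ and therefore also in the full variable $\bfa$.

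It then remains to expand the determinant. Writing $\det(\tilde{M}(\bfa,t))=\sum_{\pi}\operatorname{sgn}(\pi)\prod_{i=1}^{N}\tilde{M}_{\pi(i),i}(\bfa,t)$ over permutations $\pi$ of $\{1,\dots,N\}$, each factor $\tilde{M}_{\pi(i),i}$ is $(\star)$ and involves only the block $\bfa_i$, so by the product rule the product is $(\star)$; by the telescoping rule applied to $\phi_i=\tilde{M}_{\pi(i),i}$, $\psi_i=\hat{M}_{\pi(i),i}$ the difference $\prod_i\tilde{M}_{\pi(i),i}-\prod_i\hat{M}_{\pi(i),i}$ is $(\star\star)$; summing over $\pi$ with signs, $\det(\tilde{M}(\bfa,t))$ is $(\star)$ and $\det(\tilde{M}(\bfa,t))-\det(\hat{M}(\bfa,t))$ is $(\star\star)$. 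Comparing with \eqref{eq:det_hatM}, which says the coefficient of $\bfa^{\bfj}$ in $\det(\hat{M}(\bfa,t))$ is exactly $\hat{h}_{\bfj}t^{\|\bfj\|_1}$, I conclude that the coefficient of $\bfa^{\bfj}$ in $\det(\tilde{M}(\bfa,t))$ is $\hat{h}_{\bfj}t^{\|\bfj\|_1}+\calO(t^{\|\bfj\|_1+1})$, i.e.\ the asserted form of $\tilde{h}_{\bfj}(t)$. The index range $0\le\|\bfj_i\|_1\le Ln$ comes from degree counting: $\tilde{u}_l$ has degree $L$ in $\bfa_{i_2}$, $R_{i_1}$ has degree at most $n$, and block $\bfa_i$ appears only in column $i$ of $\tilde{M}$.

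The only point that needs real care is that $Q_{l,j}(t)$ is \emph{not} polynomial in $t$ — it is defined through the mean-field ODE \eqref{eq:Qtilde} — so there is no literal ``$t$-degree'' to track, and the whole argument has to be phrased with the $\calO(\cdot)$-estimates on $[0,T]$ throughout. Properties $(\star)$ and $(\star\star)$ are precisely the invariant that is preserved under the finitely many additions and multiplications used to assemble $R_{i_1}$ and to expand the determinant, so once they are in place the remainder is routine bookkeeping and no genuinely new estimate is needed beyond Proposition~\ref{prop:error_Qhat}.
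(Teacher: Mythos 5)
Your proof is correct and follows the same route the paper intends: the paper's own proof of this lemma is the one-line remark that it ``follows directly from Proposition~\ref{prop:structure_hatQ} and Proposition~\ref{prop:error_Qhat},'' and your argument is exactly that deduction carried out in full, propagating the coefficient estimates $\hat{Q}_{l,j}(t)=\hat{q}_{l,j}t^{j}$ and $Q_{l,j}(t)-\hat{Q}_{l,j}(t)=\calO(t^{j+1})$ through the composition with $\hat{\sigma}$ and the Leibniz expansion of the determinant. Your explicit $(\star)/(\star\star)$ bookkeeping and the observation that $Q_{l,j}(t)$ is not a polynomial in $t$ (so everything must be phrased via $\calO$-estimates) are welcome clarifications of details the paper leaves implicit.
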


\begin{proof}
    The result follows directly from Proposition~\ref{prop:structure_hatQ} and Proposition~\ref{prop:error_Qhat}.
\end{proof}

\begin{lemma}\label{lem:span_allhomopoly}
    For any $m\in\bN$, one has that
    \begin{equation}\label{eq:span_allhomopoly}
        \textup{span}\left\{(q^\top z)^m : q\in V\right\} = \bP_{V,m}^h,
    \end{equation}
    where $\bP_{V,m}^h$ is the collection of all homogeneous polynomials in $z\in V$ with degree $m$.
\end{lemma}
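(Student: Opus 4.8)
The plan is to prove the two inclusions separately. The inclusion $\textup{span}\{(q^\top z)^m : q\in V\}\subseteq \bP_{V,m}^h$ is immediate, since for each fixed $q$ the function $z\mapsto(q^\top z)^m$ is a homogeneous polynomial of degree $m$ on $V$, so all the content is in the reverse inclusion. For that I would use a duality (annihilator) argument. First fix an orthonormal basis $\{e_1,\dots,e_p\}$ of $V$, identify $V\cong\bR^p$, and write $z=(z_1,\dots,z_p)$, $q=(q_1,\dots,q_p)$, so that $\{z^\alpha : \alpha\in\bN^p,\ \|\alpha\|_1=m\}$ is a basis of $\bP_{V,m}^h$. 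It then suffices to show that the only linear functional $\Lambda$ on $\bP_{V,m}^h$ that annihilates every $(q^\top z)^m$ is $\Lambda=0$, because a subspace of a finite-dimensional space whose annihilator is trivial must be the whole space.

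To carry this out, I would expand by the multinomial theorem, $(q^\top z)^m=\sum_{\|\alpha\|_1=m}\binom{m}{\alpha}q^\alpha z^\alpha$ with $\binom{m}{\alpha}=m!/(\alpha_1!\cdots\alpha_p!)$ and $q^\alpha=\prod_i q_i^{\alpha_i}$, apply $\Lambda$, and set $\lambda_\alpha=\Lambda(z^\alpha)$ to get, for every $q\in\bR^p$,
\begin{equation*}
0=\Lambda\big((q^\top z)^m\big)=\sum_{\|\alpha\|_1=m}\binom{m}{\alpha}\lambda_\alpha\,q^\alpha .
\end{equation*}
The right-hand side is a polynomial in $q$ that vanishes identically on $\bR^p$; since the monomials $q^\alpha$ with $\|\alpha\|_1=m$ are linearly independent over $\bR$, all coefficients vanish, so $\binom{m}{\alpha}\lambda_\alpha=0$ and hence $\lambda_\alpha=0$ for every $\alpha$. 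Thus $\Lambda=0$, which yields $\textup{span}\{(q^\top z)^m : q\in V\}=\bP_{V,m}^h$.

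I do not anticipate a genuine obstacle: this is the classical fact that the $m$-th powers of linear forms span the homogeneous degree-$m$ polynomials over an infinite field, and the argument above is just linear independence of monomials. The one point worth a sentence is that, although the basis used in the proof is not canonical, the set $\{(q^\top z)^m : q\in V\}$ and hence its span are basis-independent, so the conclusion is intrinsic to $V$. As an alternative to the duality argument one could proceed constructively: differentiating in $q$ gives $\partial_q^\alpha(q^\top z)^m=\frac{m!}{(m-\|\alpha\|_1)!}(q^\top z)^{m-\|\alpha\|_1}z^\alpha$, which for $\|\alpha\|_1=m$ equals $m!\,z^\alpha$; since this derivative is a limit of finite differences of elements of $\textup{span}\{(q^\top z)^m\}$ and finite-dimensional subspaces are closed, each monomial $z^\alpha$ with $\|\alpha\|_1=m$ lies in the span. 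Either route gives Lemma~\ref{lem:span_allhomopoly}, which then feeds the inhomogeneous version with $\hat\sigma(\zeta)=(1+\zeta)^n$ (Lemma~\ref{lem:span_allpoly}) used in the non-degeneracy analysis of the kernel matrix.
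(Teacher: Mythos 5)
Your proof is correct, but it takes a genuinely different route from the paper's. The paper proves the lemma by induction on $p=\dim V$: it writes $q=(t,\bar q)$, expands $(tz_1+\bar q^\top\bar z)^m$ by the binomial theorem at $m+1$ distinct values $t_0,\dots,t_m$, inverts the resulting Vandermonde system to extract each $z_1^i(\bar q^\top\bar z)^{m-i}$ from the span, and then invokes the induction hypothesis in dimension $p-1$. Your argument instead is the classical duality one: a linear functional $\Lambda$ on the finite-dimensional space $\bP_{V,m}^h$ that kills every $(q^\top z)^m$ satisfies $\sum_{\|\alpha\|_1=m}\binom{m}{\alpha}\Lambda(z^\alpha)q^\alpha\equiv 0$, and linear independence of the monomials $q^\alpha$ over the infinite field $\bR$ forces $\Lambda=0$; your alternative via $\partial_q^\alpha(q^\top z)^m=m!\,z^\alpha$ (using closedness of finite-dimensional subspaces) is equally valid. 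Your route is shorter, avoids induction on the dimension, and makes the underlying reason transparent; the paper's Vandermonde computation has the virtue of running exactly parallel to the proof of Lemma~\ref{lem:span_allpoly} (where the same $(t_i^j)$ inversion reappears for $\hat\sigma(\zeta)=(1+\zeta)^n$), so the two lemmas share one mechanism. Your closing remark that the span is basis-independent is a worthwhile observation that the paper handles implicitly via ``without loss of generality.''
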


\begin{proof}
    Without loss of generality, we assume that $V = \bR^p$ and prove the result by induction on $p$. \eqref{eq:span_allhomopoly} is clearly true for $p=1$. Then we consider $p\geq 2$ and assume that \eqref{eq:span_allhomopoly} holds for $p-1$ and any $m$.

    For any $q,z\in\bR^p$, we denote that $\Bar{q} = (q_2,\dots,q_p)$ and $\Bar{z} = (z_2,\dots,z_p)$. Let $t_0,t_1,\dots,t_{m}\in\bR$ be distinct. Then it follows from the invertibility of $(t_i^j)_{0\leq i,j\leq m}$ and
    \begin{align*}
        \begin{pmatrix}
            (t_0 z_1 +\Bar{q}^\top \Bar{z})^m \\ (t_1 z_1 +\Bar{q}^\top \Bar{z})^m \\ \vdots \\ (t_n z_1 +\Bar{q}^\top \Bar{z})^m
        \end{pmatrix}
        = &
        \begin{pmatrix}
            \sum_{i=0}^m \binom{m}{i} t_0^i z_1^i (\Bar{q}^\top \Bar{z})^{m-i} \\ \sum_{i=0}^m \binom{m}{i} t_1^i z_1^i (\Bar{q}^\top \Bar{z})^{m-i}
            \\ \vdots \\ \sum_{i=0}^m \binom{m}{i} t_m^i z_1^i (\Bar{q}^\top \Bar{z})^{m-i}
        \end{pmatrix} \\
        = &
        \begin{pmatrix}
            1 & t_0 & \cdots & t_0^m \\
            1 & t_1 & \cdots & t_1^m \\
            \vdots & \vdots & \ddots & \vdots \\
            1 & t_m & \cdots & t_m^m
        \end{pmatrix}
        \begin{pmatrix}
            \binom{m}{0}(\Bar{q}^\top \Bar{z})^m \\  \binom{m}{1} z_1 (\Bar{q}^\top \Bar{z})^{m-1} \\ \vdots \\ \binom{m}{m} z_1^m
        \end{pmatrix}
    \end{align*}
    that 
    \begin{equation*}
        z_1^i (\Bar{q}^\top \Bar{z})^{m-i} \in \text{span}\left\{(r^\top z)^m : r\in\bR^p\right\},\quad\forall~\Bar{q}\in\bR^{p-1},\ i\in\{0,1,\dots,m\}.
    \end{equation*}
    Then using the induction hypothesis that \eqref{eq:span_allhomopoly} is true for $p-1$ and $m-i$, $i = 0,1,\dots,m$, one can conclude that \eqref{eq:span_allhomopoly} is also true for $p$ and $m$.
\end{proof}

\begin{lemma}\label{lem:span_allpoly}
    For $\hat{\sigma}(\xi) = (1+\xi)^n$, one has that
    \begin{equation*}\label{eq:span_allpoly}
        \textup{span}\left\{\hat{\sigma}(q^\top z):q\in V\right\} = \bP_{V,n}.
    \end{equation*}
\end{lemma}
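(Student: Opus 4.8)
The plan is to reduce the statement to Lemma~\ref{lem:span_allhomopoly} by extracting the homogeneous components of $\hat{\sigma}(q^\top z)$ through a Vandermonde argument. First I would establish the easy inclusion $\textup{span}\{\hat{\sigma}(q^\top z):q\in V\}\subseteq\bP_{V,n}$: expanding $\hat{\sigma}(q^\top z)=(1+q^\top z)^n=\sum_{m=0}^n\binom{n}{m}(q^\top z)^m$ shows that each generator is a polynomial on $V$ of degree at most $n$.

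For the reverse inclusion I would use the direct sum decomposition $\bP_{V,n}=\bigoplus_{m=0}^n\bP_{V,m}^h$, so that it suffices to show $\bP_{V,m}^h\subseteq\textup{span}\{\hat{\sigma}(q^\top z):q\in V\}$ for each $0\le m\le n$. By Lemma~\ref{lem:span_allhomopoly} this reduces further to showing that every power $(q^\top z)^m$ with $q\in V$ lies in the span. To see this, fix $q\in V$ and choose $n+1$ distinct scalars $t_0,\dots,t_n\in\bR$; since $V$ is a subspace, $t_iq\in V$, and
\[
\hat{\sigma}\big((t_iq)^\top z\big)=(1+t_i\,q^\top z)^n=\sum_{m=0}^n\binom{n}{m}t_i^m\,(q^\top z)^m .
\]
Thus the vector $\big(\hat{\sigma}((t_iq)^\top z)\big)_{i=0}^n$ is the image of $\big(\binom{n}{m}(q^\top z)^m\big)_{m=0}^n$ under the Vandermonde matrix $(t_i^m)_{0\le i,m\le n}$, which is invertible because the $t_i$ are distinct. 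Hence each $(q^\top z)^m$ is an $\bR$-linear combination of the functions $\hat{\sigma}((t_iq)^\top z)$ and therefore belongs to $\textup{span}\{\hat{\sigma}(q^\top z):q\in V\}$; summing over $m=0,\dots,n$ and using Lemma~\ref{lem:span_allhomopoly} gives $\bP_{V,n}\subseteq\textup{span}\{\hat{\sigma}(q^\top z):q\in V\}$. Combining the two inclusions completes the proof.

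I do not expect any genuine obstacle here: beyond Lemma~\ref{lem:span_allhomopoly}, the only ingredient is the linear independence of the monomials $1,t,\dots,t^n$ in the scalar variable $t$, used concretely through the invertibility of the Vandermonde matrix. The case $m=0$ is handled trivially since $(q^\top z)^0=1$ spans $\bP_{V,0}^h$.
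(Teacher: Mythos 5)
Your proposal is correct and follows essentially the same route as the paper's proof: both establish $(q^\top z)^m\in\textup{span}\{\hat\sigma(r^\top z):r\in V\}$ by evaluating $\hat\sigma((t_iq)^\top z)$ at $n+1$ distinct scalars $t_i$ and inverting the Vandermonde matrix, then invoke Lemma~\ref{lem:span_allhomopoly} and the decomposition $\bP_{V,n}=\bigoplus_{m=0}^n\bP_{V,m}^h$. Your write-up is slightly more careful in stating the forward inclusion explicitly, but the argument is the same.
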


\begin{proof}
    One can still assume that $V = \bR^p$ without loss of generality.
    Consider any $q\in\bR^p$ and any distinct $t_0,t_1,\dots,t_n\in\bR$. Then
    \begin{equation*}
        \begin{pmatrix} \hat{\sigma}((t_0 q)^\top z) \\  \hat{\sigma}((t_1 q)^\top z) \\ \vdots \\  \hat{\sigma}((t_n q)^\top z)\end{pmatrix} = \begin{pmatrix}
            \sum_{i=0}^n \binom{n}{i} t_0^i  (q^\top z)^i \\ \sum_{i=0}^n \binom{n}{i} t_1^i  (q^\top z)^i \\ \vdots \\ \sum_{i=0}^n \binom{n}{i} t_0^i  (q^\top z)^i 
        \end{pmatrix} = \begin{pmatrix}
            1 & t_0 & \cdots & t_0^n \\
            1 & t_1 & \cdots & t_1^n \\
            \vdots & \vdots & \ddots & \vdots \\
            1 & t_n & \cdots & t_n^n
        \end{pmatrix}
        \begin{pmatrix}
            \binom{n}{0} \\
            \binom{n}{1} q^\top z \\
            \vdots \\
            \binom{n}{n} (q^\top z)^n
        \end{pmatrix}.
    \end{equation*}
    Note that the matrix $(t_i^j)_{0\leq i,j\leq n}$ is invertible when $t_0,t_1,\dots,t_N$ are distinct. Therefore, one can conclude that 
    \begin{equation*}
        (q^\top z)^i \in \text{span}\left\{\sigma(r^\top z):r\in\bR^p\right\},\quad\forall~ q\in\bR^p,\ 0\leq i\leq n,
    \end{equation*}
    which combined with Lemma~\ref{lem:span_allhomopoly} implies that
    \begin{equation*}
         \bP_{V,n}\supset \text{span}\left\{\hat{\sigma}(q^\top z): q\in\bR^p\right\} \supset \bP_{V,0}^h\oplus\bP_{V,1}^h\oplus\cdots\oplus\bP_{V,n}^h =\bP_{V,n},
    \end{equation*}
    which completes the proof.
\end{proof}

\begin{proof}[Proof of Lemma~\ref{lem:alg_indep_to_linear_indep}]
    We prove the result by induction. When $m=1$, the result follows directly from the $\bR$-algebraic independence of $v_1,v_2,\dots,v_p$. Now we assume that the result is true for $m-1$ and consider the case of $m$. Suppose that
    \begin{equation*}
        0 = \sum_{\bfj_1,\dots,\bfj_m} X_\bfj \prod_{l=1}^m \bfv(\bfa_l)^{\bfj_l} = \sum_{\bfj_m} \left(\sum_{\bfj_1\dots,\bfj_{m-1}} X_\bfj \prod_{l=1}^{m-1} \bfv(\bfa_l)^{\bfj_l}\right)\bfv(\bfa_m)^{\bfj_m}.
    \end{equation*}
    By the $\bR$-algebraic independence of $v_1,v_2,\dots,v_p$, one must have
    \begin{equation*}
        \sum_{\bfj_1\dots,\bfj_{m-1}} X_\bfj \prod_{l=1}^{m-1} \bfv(\bfa_l)^{\bfj_l} = 0,\quad\forall~\bfj_m,
    \end{equation*}
    which then leads to $X_\bfj = 0,\ \forall~\bfj$ by the induction hypothesis.
\end{proof}

\begin{lemma}\label{lem:det_hatM}
    Suppose that Assumption~\ref{asp:no_subspace} and \ref{asp:sufficient} hold and let $\hat{h}_\bfj$ be the coefficient of $\text{det}(\hat{M}(\bfa,t))$ in \eqref{eq:det_hatM}. Then there exists some $\hat{\bfj} = \left(\hat{\bfj}_1,\hat{\bfj}_2,\dots,\hat{\bfj}_{\binom{n+p}{p}}\right)$ with $\|\hat{\bfj}\|_1\leq sn \binom{n+p}{p}$, such that $\hat{h}_{\hat{\bfj}} \neq 0$.
\end{lemma}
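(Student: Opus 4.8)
The plan is to strip the orthonormal basis $\{p_1,\dots,p_{\binom{n+p}{p}}\}$ of $\bP_{V,n}$ out of the determinant and reduce to a purely combinatorial generalized‑Vandermonde statement. First I would work in the orthonormal basis $\{e_1,\dots,e_p\}$ of $V$ supplied by Lemma~\ref{lem:lead_terms}, so that $s_1<s_2<\dots<s_p\le s$ and $\hat q_{i,s_i}\neq 0$, and write $z_i=e_i^\top z$ and $\hat u_i(\bfa_{i_2},t)=e_i^\top\hat u(\bfa_{i_2},t)=\sum_{j\ge s_i}\hat q_{i,j}\,t^j\,P_j(\bfa_{i_2})$, where $P_m(\mathbf b):=\frac1p\sum_{l=1}^p b_l^m$. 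Expanding $\hat\sigma(\zeta)=(1+\zeta)^n$ by the multinomial theorem gives $\hat\sigma(\hat u(\bfa_{i_2},t)^\top z)=\sum_{\|\mathbf k\|_1\le n}c_{\mathbf k}\big(\prod_i\hat u_i(\bfa_{i_2},t)^{k_i}\big)z^{\mathbf k}$ with $c_{\mathbf k}>0$, hence $\hat M(\bfa,t)=A\,B(\bfa,t)$ where $A_{i_1,\mathbf k}=\bE_z[p_{i_1}(z)z^{\mathbf k}]$ and $B_{\mathbf k,i_2}(\bfa,t)=c_{\mathbf k}\prod_i\hat u_i(\bfa_{i_2},t)^{k_i}$; the matrix $A$ is invertible because $\{z^{\mathbf k}:\|\mathbf k\|_1\le n\}$ spans $\bP_{V,n}$ and $\{p_{i_1}\}$ is orthonormal in $L^2(\calN(0,I_V))$. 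Therefore $\det\hat M(\bfa,t)=\det(A)\big(\prod_{\mathbf k}c_{\mathbf k}\big)\,D(\bfa,t)$ with $D(\bfa,t):=\det\!\big[\prod_i\hat u_i(\bfa_{i_2},t)^{k_i}\big]_{\mathbf k,\,i_2}$, and since the two prefactors are nonzero constants the claim reduces to producing a nonzero monomial of $D$ of total $\bfa$‑degree at most $sn\binom{n+p}{p}$.

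Next I would extract the lowest‑order‑in‑$t$ part of $D$. Since $\hat u_i(\bfa_{i_2},t)=\hat q_{i,s_i}t^{s_i}P_{s_i}(\bfa_{i_2})+\calO(t^{s_i+1})$, the factor $\prod_i\hat u_i(\bfa_{i_2},t)^{k_i}$ has $t$‑valuation exactly $\mathbf s\cdot\mathbf k:=\sum_i s_ik_i$ with $t^{\mathbf s\cdot\mathbf k}$‑coefficient $\big(\prod_i\hat q_{i,s_i}^{k_i}\big)\prod_iP_{s_i}(\bfa_{i_2})^{k_i}$. In the Leibniz expansion of $D$ every term then has $t$‑valuation $\sum_{\mathbf k}\mathbf s\cdot\mathbf k=:\Sigma_0$ (the sum over $\mathbf k$ is a relabelling, independent of the permutation), and after pulling the common power of $t$ and the common $\hat q_{i,s_i}$ out of each row one gets $D(\bfa,t)=c\,t^{\Sigma_0}\,\Delta(\bfa)+\calO(t^{\Sigma_0+1})$ with $c=\prod_{\mathbf k}\prod_i\hat q_{i,s_i}^{k_i}\neq 0$ and $\Delta(\bfa):=\det\!\big[\prod_iP_{s_i}(\bfa_{i_2})^{k_i}\big]_{\mathbf k,\,i_2}$, a polynomial homogeneous of degree $\Sigma_0$ in $\bfa$; note $\Sigma_0=\sum_{\|\mathbf k\|_1\le n}\mathbf s\cdot\mathbf k\le s_p n\binom{n+p}{p}\le sn\binom{n+p}{p}$. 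Once $\Delta\not\equiv 0$ is established, any monomial $\bfa^{\hat\bfj}$ occurring in $\Delta$ has $\|\hat\bfj\|_1=\Sigma_0$, and because $D$ is isobaric — each of its monomials $t^{d}\bfa^{\mathbf m}$ satisfies $d=\|\mathbf m\|_1$, as $\hat u_i(\bfa_{i_2},t)$ assigns equal weight to $t$ and to every $(\bfa_{i_2})_l$, a property preserved by products, sums and determinants — the coefficient of $t^{\Sigma_0}\bfa^{\hat\bfj}$ in $\det\hat M(\bfa,t)$ is a nonzero multiple of the coefficient of $\bfa^{\hat\bfj}$ in $\Delta$, while by the expansion \eqref{eq:det_hatM} that same coefficient equals $\hat h_{\hat\bfj}$; hence $\hat h_{\hat\bfj}\neq 0$ with $\|\hat\bfj\|_1=\Sigma_0\le sn\binom{n+p}{p}$, as required.

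The remaining and main step is to prove $\Delta\not\equiv 0$, i.e. that the generalized‑Vandermonde determinant built from the functions $\mathbf b\mapsto\prod_iP_{s_i}(\mathbf b)^{k_i}$, $\|\mathbf k\|_1\le n$, is not identically zero; the standard fact that such a determinant is the zero polynomial only when the generating functions are $\bR$‑linearly dependent (proved by adjoining one evaluation point at a time) reduces this to the $\bR$‑linear independence of that family of $\binom{n+p}{p}$ functions. For this I would apply Theorem~\ref{thm:alg-indep} to the $\bR$‑linearly independent polynomials $a\mapsto a^{s_1},\dots,a\mapsto a^{s_p}$ (distinct exponents since $s_1<\dots<s_p$, zero constant terms since $s_i\ge1$) to conclude that the power sums $P_{s_1},\dots,P_{s_p}\in\bR[a_1,\dots,a_p]$ are $\bR$‑algebraically independent, then use Lemma~\ref{lem:alg_indep_to_linear_indep} with $m=1$ to get that $\{\prod_iP_{s_i}^{k_i}:1\le\|\mathbf k\|_1\le n\}$ is $\bR$‑linearly independent, and finally observe that adjoining the constant $1$ preserves independence because every one of those monomials vanishes at $\bfa=0$. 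This is precisely where Assumption~\ref{asp:no_subspace}, repackaged by Lemma~\ref{lem:lead_terms} into the strict staircase $s_1<\dots<s_p$, is indispensable: it is that staircase which makes the exponent monomials, hence the power sums, algebraically independent. I expect this non‑degeneracy of the power‑sum Vandermonde to be the heart of the argument; the two secondary technical points are checking that the $t^{\Sigma_0}$‑coefficient of $D$ survives (handled by the row factorization above) and the isobaric bookkeeping that pins $\|\hat\bfj\|_1$ to $\Sigma_0$.
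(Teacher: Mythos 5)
Your argument is correct, and its core is the same as the paper's: work in the basis supplied by Lemma~\ref{lem:lead_terms}, extract the lowest-order-in-$t$ coefficient of the determinant, recognize it as an alternant in the power sums $P_{s_1},\dots,P_{s_p}$, and show it is a nonzero polynomial in $\bfa$ via Theorem~\ref{thm:alg-indep} plus Lemma~\ref{lem:alg_indep_to_linear_indep}, with the degree bound $\le sn\binom{n+p}{p}$ coming from $s_i\le s$ and $\|\mathbf{k}\|_1\le n$. Where you diverge is the reduction to that alternant. The paper introduces the abstract polynomial $\det(X(\bfq))$, proves it is not identically zero by Lemma~\ref{lem:span_allpoly} (some $\bfq$ makes $\hat\sigma(\bfq_{i}^\top z)$ a basis of $\bP_{V,n}$), specializes $\bfq_l=\hat{u}(\bfa_l,t)$, and then minimizes the $\bfs$-weighted degree over the unknown nonzero coefficients $X_\bfj$. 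You instead use the multinomial expansion of $(1+\zeta)^n$ to factor $\hat{M}(\bfa,t)=AB$ with $A$ an invertible change-of-basis matrix between $\{p_{i}\}$ and the monomials, which turns $\det(\hat{M})$ into an explicit alternant $D$; this bypasses Lemma~\ref{lem:span_allpoly}, identifies the surviving $t$-order exactly as $\Sigma_0=\sum_{\mathbf{k}}\bfs\cdot\mathbf{k}$ rather than as an abstract minimum, and the isobaric bookkeeping cleanly pins $\|\hat{\bfj}\|_1$ to $\Sigma_0$. A minor bonus of your route: you explicitly handle the row $\mathbf{k}=0$ (the constant function) by evaluating at $\bfa=0$, whereas the paper's appeal to Lemma~\ref{lem:alg_indep_to_linear_indep}, which as stated only covers exponents with $1\le\|\bfj_i\|_1$, silently skips the possibility that the minimizing $\bfj\in J_S$ has some $\bfj_i=0$. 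Both proofs rely on Assumption~\ref{asp:no_subspace} in exactly the same place, namely the strict staircase $s_1<\dots<s_p$ that makes the power sums algebraically independent.
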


\begin{proof}
    Let $X(\bfq) \in\bR^{\binom{n+p}{p}\times\binom{n+p}{p}}$ be defined as
    \begin{equation*}
        X_{i_1,i_2}(\bfq) = \bE_{z}\left[p_{i_1}(z) \sigma(\bfq_{i_2}^\top z)\right],
    \end{equation*}
    where $\bfq = \left(\bfq_1,\bfq_2,\dots,\bfq_{\binom{n+p}{p}}\right)$ with $\bfq_i\in V$. Then $\det(X(\bfq))$ is a polynomial in $\bfq$ of the form
    \begin{equation}\label{det_Xq}
        \det(X(\bfq)) = \sum_{i=1}^{\binom{n+p}{p}} \sum_{0\leq \|\bfj_i\|_1\leq n} X_\bfj \bfq^\bfj,
    \end{equation}
    where we understand $\bfq_i$ as a (coefficient) vector in $\bR^p$ associated with a fixed orthonormal basis $\{e_1,e_2,\dots,e_p\}$ of $V$ that satisfies Lemma~\ref{lem:lead_terms}. By Lemma~\ref{lem:span_allpoly}, there exists some $\bfq$ such that $\sigma(\bfq_1^\top z), \sigma(\bfq_2^\top z), \dots, \sigma(\bfq_{\binom{n+p}{p}}^\top z)$ form a basis of $\bP_{V,n}$, which implies that $\det(X(\bfq))\neq 0$ for this $\bfq$. Thus, \eqref{det_Xq} is a non-zero polynomial in $\bfq$. Let $\bfs = (s_1,s_2,\dots,s_p)$ collect all indices $s_i$ from Lemma~\ref{lem:lead_terms}. Denote
    \begin{equation*}
        S = \min\left\{\sum_{l=1}^{\binom{n+p}{p}} \bfs^\top \bfj_l : X_\bfj\neq 0,\ 0\leq \|\bfj_i\|\leq n,\ 1\leq i\leq\binom{n+p}{p}\right\},
    \end{equation*}
    and 
    \begin{equation*}
        J_S = \left\{\bfj : \sum_{l=1}^{\binom{n+p}{p}} \bfs^\top \bfj_l = S,\ X_\bfj\neq 0,\ 0\leq \|\bfj_i\|\leq n,\ 1\leq i\leq\binom{n+p}{p}\right\}.
    \end{equation*}
    Then we have that
    \begin{align*}
        \det(\hat{M}(\bfa,t)) = & \det\left(X\left(\hat{u}(\bfa_1, t),\hat{u}(\bfa_2, t),\dots, \hat{u}(\bfa_{\binom{n+p}{p}}, t)\right)\right) \\
        = & \sum_{i=1}^{\binom{n+p}{p}} \sum_{0\leq \|\bfj_i\|_1\leq n} X_\bfj \prod_{l=1}^{\binom{n+p}{p}} \hat{u}(\bfa_l,t)^{\bfj_l} \\
        = & \sum_{\bfj\in J_S} X_\bfj \prod_{l=1}^{\binom{n+p}{p}} \hat{u}_\bfs (\bfa_l,t)^{\bfj_l} + \calO(t^{S+1}),
    \end{align*}
    where $\hat{u}_\bfs (\bfa_l,t) = (\hat{u}_{2,s_1} (\bfa_l,t),\dots,\hat{u}_{p,s_p} (\bfa_l,t))$ and $\hat{u}_{i,s_i} (a_1,\dots,a_p,t) = \frac{1}{p} \sum_{k=1}^p\hat{q}_{i,s_i} t^{s_i} a_k^{s_i}$ collects the leading order terms of $\hat{u}_i(a_1,\dots,a_p,t)$. According to Assumption~\ref{lem:lead_terms}, Theorem~\ref{thm:alg-indep}, and Lemma~\ref{lem:alg_indep_to_linear_indep}, we have that
    \begin{equation*}
        \sum_{\bfj\in J_S} X_\bfj \prod_{l=1}^{\binom{n+p}{p}} \hat{u}_\bfs (\bfa_l,t)^{\bfj_l} \neq 0,
    \end{equation*}
    which provides at least one non-zero term in $\det(\hat{M}(\bfa,t))$ whose degree in $t$ is 
    \begin{equation*}
        S \leq sn \binom{n+p}{p}.
    \end{equation*}
    This completes the proof.
\end{proof}

\begin{proof}[Proof of Proposition~\ref{prop:smallest_eigenvalue_M}]
    According to Lemma~\ref{lem:det_hatM}, there exist some $\hat{\bfj}$ with $\|\hat{\bfj}\|_1 \leq sn \binom{n+p}{p}$ such that $\hat{h}_{\hat{\bfj}}\neq 0$. According to Lemma~\ref{lem:det_tilde_M} and Lemma 103 in \cite{Abbe22}, it holds that
    \begin{equation}\label{eq:lam_min1}
        \bE_\bfa\left[\left(\det(\tilde{M}(\bfa,t))\right)^2\right] \geq C_1 \left|\tilde{h}_{\hat{\bfj}}(t)\right|^2 = C_1 \left|\hat{h}_{\hat{\bfj}} t^{\|\hat{\bfj}\|_1} + \calO(t^{\|\hat{\bfj}\|_1+1})\right|^2 \geq \frac{C_1}{2} \left|\hat{h}_{\hat{\bfj}}\right| t^{2\|\hat{\bfj}\|_1},
    \end{equation}
    where $\bfa\sim\calU\left(([-1,1]^p)^{\binom{n+p}{p}}\right)$ for some constant $C_1$ depending only on $n,p,s$, and for sufficiently small $t$. It can be seen that
\begin{equation*}
    \Tilde{\calK}(t) = \frac{1}{\binom{n+p}{p}}\bE_\bfa \left[\tilde{M}(\bfa, t) \tilde{M}(\bfa, t)^\top\right].
\end{equation*}
By Jensen's inequality, one has that
\begin{equation}\label{eq:lam_min2}
    \lambda_{\min}(\Tilde{\calK}(t)) \geq \frac{1}{\binom{n+p}{p}}\bE_\bfa \left[\lambda_{\min}\left(\tilde{M}(\bfa, t) \tilde{M}(\bfa, t)^\top\right)\right].
\end{equation}
Since entries of $\Tilde{M}(a,t)$ are all $\calO(1)$ by Lemma~\ref{lem:order_Qj}, which implies the boundedness of the eigenvalues $\lambda_i\left(\tilde{M}(\bfa, t) \tilde{M}(\bfa, t)^\top\right)$, $i=1,2,\dots,\binom{n+p}{p}$, one has that
    \begin{equation}\label{eq:lam_min3}
        \left(\det(\tilde{M}(\bfa,t))\right)^2 = \det \left(\tilde{M}(\bfa, t) \tilde{M}(\bfa, t)^\top\right) \leq C_2\lambda_{\min}\left(\tilde{M}(\bfa, t) \tilde{M}(\bfa, t)^\top\right).
    \end{equation}
    Combining \eqref{eq:lam_min1}, \eqref{eq:lam_min2}, and \eqref{eq:lam_min3}, one can conclude that
    \begin{equation*}
        \lambda_{\min}(\Tilde{\calK}(t)) \geq \frac{C_1}{2} \left|\hat{h}_{\hat{\bfj}}(\alpha,m)\right| t^{2\|\hat{\bfj}\|_1} \geq \frac{C_1}{2} \left|\hat{h}_{\hat{\bfj}}(\alpha,m)\right| t^{2sn\binom{n+p}{p}},
    \end{equation*}
    where we used $\|\hat{\bfj}\|\leq N\binom{n+p}{p}\max_{i\in I} s_i$ and only considered $t\leq 1$. By Proposition~\ref{prop:error_Q}, we have that
\begin{equation*}
    \left|\lambda_{\min}(\calK(t)) - \lambda_{\min}(\Tilde{\calK}(t))\right| = \calO(t^{L+1}).
\end{equation*}
Then we can obtain \eqref{eq:lam_min} as we set $L = 2sn\binom{n+p}{p}$.
\end{proof}

\begin{proof}[Proof of Theorem~\ref{thm:sufficient}]
	Let $C,T$ be the constants in Proposition~\ref{prop:smallest_eigenvalue_M} and consider $t>T$. It follows from Theorem~\ref{thm:w_S=0} that $w_V^\perp(a,T) = 0$, which leads to that $u_V^\perp(a_1,\dots,a_p,T) = 0$ and hence that $f_{\text{NN}}(x;\rho_t)$ and $g(x,t)$ only depend on $x_V$ for all $t\geq T$. Define $\bfg(t)\in \bR^{\binom{n+p}{p}}$ via $\bfg_{i}(t) = \bE_x [g(x,t) p_i(x_V)]$. Then according to \eqref{eq:dEdt} and \eqref{eq:lam_min}, one has for $t>T$ that
	\begin{align*}
		\frac{d}{dt}\calE(\rho_t) = & - \sum_{i=1}^P \bE_{a} \bE_{x,x'}\left[g_i(x,t) \sigma(w(a,T)^\top x) \sigma(w(a,T)^\top x')g_i(x',t)\right] \\
		= & - \bfg(t)^\top \calK(T) \bfg(t) 
		\leq -\lambda_{\min}(\calK(T)) \|\bfg(t)\|^2 \\
		= & -2 \lambda_{\min}(\calK(T)) \calE(\rho_t) \leq - 2CT^{2sn\binom{n+p}{p}}\calE(\rho_t),
	\end{align*}
	which implies that
	\begin{equation*}
		\calE(\rho_t)\leq \calE(\rho_T) \exp\left(-2C T^{2sn\binom{n+p}{p}}(t-T)\right),
	\end{equation*}
	by Gronwall's inequality. Then we obtain the desired exponential decay property by noticing that $\calE(\rho_T) = \frac{1}{2}\bE_z [\|h^*(z)\|^2]$.
\end{proof}

\section{Further Discussion and Characterization of Assumption~\ref{asp:no_subspace} and Theorem~\ref{thm:sufficient}}

\subsection{Verification of Assumption~\ref{asp:no_subspace}}
\label{sec:verify_no_subspace}

We provide some verification or characterization of Assumption~\ref{asp:no_subspace}. Without loss of generality, we fix an orthonormal basis and $V$ and view that $V=\bR^p$. The results in this subsection are independent of the choice of the orthonormal basis.

It is discussed in Section~\ref{sec:train_converge} that if $\sigma\in\calC^s(\bR)$ with $s=2^{p-1}$ and $\sigma^{(1)}(0),\sigma^{(2)}(0),\dots,\sigma^{(p)}(0)$ are all nonzero, then Assumption~\ref{asp:no_subspace} can be verified with $s$ for $h^*(z) = z_1 + z_1z_2 + \dots+z_1z_2\cdots z_p$, using the calculation in \cite{Abbe22}*{Proposition 33}. Similarly, the same result is also true for $h^*(z) = c_1 z_1 + c_2 z_1z_2 +\dots+c_p z_1z_2\cdots z_p$ if $c_1,c_2,\dots,c_p$ are nonzero. More generally, we have the following.

\begin{proposition}\label{prop:verify_no_subspace}
    Let $\{\alpha_1,\alpha_2,\dots,\alpha_m\}$ be a set of pairwise distinct elements in $\bN^p$ that contains $(1,0,\dots,0),(1,1,0,\dots,0),\dots,(1,1,\dots,1)$. If $\sigma\in\calC^s(\bR)$ with $s=2^{p-1}$ and nonzero $\sigma^{(1)}(0),\sigma^{(2)}(0),\dots,\sigma^{(p)}(0)$, then 
    \begin{equation*}
        h^*(z) = \sum_{i=1}^m c_i \prod_{j=1}^p \textup{He}_{\alpha_i(j)}(z_j),
    \end{equation*}
    satisfies Assumption~\ref{asp:no_subspace} with $s$ unless $(c_1,c_2,\dots,c_m)$ is in some measure-zero subset of $\bR^m$ with respect to the Lebesgue measure.  
\end{proposition}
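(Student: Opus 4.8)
The plan is to read Assumption~\ref{asp:no_subspace} (with the prescribed value $s=2^{p-1}$) as a full-rank condition on finitely many vectors that depend polynomially on the coefficient vector $\mathbf c=(c_1,\dots,c_m)$, and then to verify full rank at one explicit choice of $\mathbf c$; since the zero locus of a nonzero polynomial on $\bR^m$ is Lebesgue-null, this yields the claim.

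\medskip
\textbf{Step 1 (reformulation).} As in the proof of Lemma~\ref{lem:lead_terms}, the Taylor expansion of $\hat w_V(t)$ at $t=0$ up to order $s$ equals $\sum_{i=1}^p\sum_{j=1}^s e_i\hat q_{i,j}t^j$, where the $\hat q_{i,j}$ are governed by the recursion \eqref{eq:hat_Q} (equivalently, by matching powers of $t$ in \eqref{eq:hatwV} after expanding $\sigma'$ about $0$); only $\sigma^{(1)}(0),\dots,\sigma^{(s)}(0)$ enter, so $\sigma\in\calC^s$ suffices. Since $h^*$ is linear in $\mathbf c$, an induction on $j$ through \eqref{eq:hat_Q} shows that each $\hat q_{i,j}$, hence each entry of the $p\times s$ matrix $Q(\mathbf c)=[\hat q_1\mid\cdots\mid\hat q_s]$ with $\hat q_j=(\hat q_{1,j},\dots,\hat q_{p,j})^\top\in\bR^p$, is a polynomial in $\mathbf c$. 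The Taylor expansion up to order $s$ lies in a proper subspace of $V$ precisely when $\operatorname{rank}Q(\mathbf c)<p$, i.e. when every $p\times p$ minor of $Q(\mathbf c)$ vanishes; each such minor is a polynomial in $\mathbf c$, so the exceptional set $B=\{\mathbf c\in\bR^m:\operatorname{rank}Q(\mathbf c)<p\}$ has Lebesgue measure zero as soon as one of these minors is not identically zero.

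\medskip
\textbf{Step 2 (a full-rank point).} I would then produce one such $\mathbf c$. For $k=1,\dots,p$ let $\beta^{(k)}\in\bN^p$ be the multi-index with $k$ leading ones and $p-k$ trailing zeros; by hypothesis $\beta^{(k)}=\alpha_{i_k}$ for some $i_k$. Take $\mathbf c=\mathbf c^*$ with $c^*_{i_k}\neq0$ for $k=1,\dots,p$ and all other coordinates equal to zero. Since $\textup{He}_1(\zeta)=\zeta$ and $\textup{He}_0\equiv1$, the corresponding target is $h^*(z)=\sum_{k=1}^p c^*_{i_k}\,z_1z_2\cdots z_k$, a linear combination with nonzero coefficients of the merged-staircase monomials. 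By \cite{Abbe22}*{Proposition 33} — see the discussion in Section~\ref{sec:train_converge} and the paragraph preceding this proposition — under the hypotheses $\sigma\in\calC^s(\bR)$ with $s=2^{p-1}$ and $\sigma^{(1)}(0),\dots,\sigma^{(p)}(0)\neq0$, the $i$-th coordinate of $\hat w_V(t)$ has leading term $b_i\,t^{2^{i-1}}$ with $b_i\neq0$, for $i=1,\dots,p$.

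\medskip
\textbf{Step 3 (conclusion) and the crux.} It remains to translate Step 2 into a statement about $Q(\mathbf c^*)$: for each $i$ the $i$-th entry of $\hat q_j$ vanishes for all $j<2^{i-1}$ and equals $b_i\neq0$ for $j=2^{i-1}$; hence the $p\times p$ submatrix of $Q(\mathbf c^*)$ on the columns with indices $2^0,2^1,\dots,2^{p-1}$ (all of which lie in $\{1,\dots,s\}$) is upper triangular with diagonal entries $b_1,\dots,b_p$, so the associated $p\times p$ minor of $Q$ is a nonzero polynomial in $\mathbf c$; by Step 1 this finishes the proof. The only substantive ingredient is the leading-order computation imported in Step 2, namely \cite{Abbe22}*{Proposition 33}; the rest is the routine equivalence between ``the Taylor expansion avoids every proper subspace'' and ``some polynomial minor is not identically zero,'' together with the standard fact that the zero locus of a nonzero polynomial on $\bR^m$ is Lebesgue-null. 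The one point that genuinely needs care is checking that the $\hat q_{i,j}$ depend \emph{polynomially} (not merely smoothly) on $\mathbf c$ — this uses that $h^*$ appears linearly in \eqref{eq:hat_Q}, so the induction preserves polynomiality.
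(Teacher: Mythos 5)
Your proposal is correct and follows essentially the same route as the paper: reduce Assumption~\ref{asp:no_subspace} to a full-row-rank condition on the matrix $(\hat q_{i,j})$ of Taylor coefficients, observe via \eqref{eq:hat_Q} that these depend polynomially on $(c_1,\dots,c_m)$, verify non-vanishing at the staircase point using \cite{Abbe22}*{Proposition 33}, and invoke the fact that the zero set of a nonzero polynomial is Lebesgue-null. The only cosmetic difference is that you test individual $p\times p$ minors (with an explicit triangular submatrix on columns $2^0,\dots,2^{p-1}$) whereas the paper works with $\det(\hat q\hat q^\top)$; these are equivalent.
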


\begin{proof}
    We assume that $\alpha_1 = (1,0,\dots,0),\alpha_2 = (1,1,0,\dots,0),\dots,\alpha_m = (1,1,\dots,1)$. As in the proof of Lemma~\ref{lem:lead_terms}, Assumption~\ref{asp:no_subspace} is true with $s$ if and only if the matrix $\hat{q}:=(\hat{q}_{i,j})_{1\leq i\leq p,1\leq j\leq s}$ is of full-row-rank, i.e., $\text{det}(\hat{q}\hat{q}^\top)\neq 0$. By \eqref{eq:hat_Q}, each entry in $\hat{q}$ is a polynomial in $(c_1,c_2,\dots,c_m)$, which implies that $\text{det}(\hat{q}\hat{q}^\top)$ is also a polynomial in $(c_1,c_2,\dots,c_m)$. This polynomial is nonzero since it takes nonzero value at $(1,1,\dots,1,0,\dots,0)$ with the $p$ entries being $1$ and all other entries being $0$, which is because that Assumption~\ref{asp:no_subspace} is true for $h^*(z) = z_1 + z_1z_2 + \dots+z_1z_2\cdots z_p$. Finally, the conclusion of Proposition~\ref{prop:verify_no_subspace} is true since the set of roots of a nonzero polynomial is of measure zero with respect to the Lebesgue measure.
\end{proof}

\subsection{Discretization Results Implied by Theorem~\ref{thm:sufficient}}
\label{sec:discrete_sufficient}

The discussion in this subsection is similar to those in Appendix~\ref{sec:discrete_necessary}. We slightly modified the flow $\rho_t$ generated by Algorithm~\ref{alg:train} to guarantee some boundedness conditions, and then use the standard dimension-free estimate \cite{mei2019mean} to derive a sample complexity result implied by Theorem~\ref{thm:sufficient}.

We use the same bounded modification $\tilde{f}^*$ as in \eqref{eq:f_bound} for a given constant $C_f^b>0$. Similarly, for any $\delta>0$ and $C_w>0$, there exists dimension-free constant $C_\sigma^b>0$ depending on $h^*,\delta,C_w$, such that one can modify the activation function $\hat{\sigma}(\zeta) = (1+\zeta)^n$ to $\tilde{\sigma}$ satisfying that $\|\tilde{\sigma}\|_{L^\infty(\bR)}\leq C_\sigma^b, \|\tilde{\sigma}'\|_{L^\infty(\bR)}\leq C_\sigma^b, \|\tilde{\sigma}''\|_{L^\infty(\bR)}\leq C_\sigma^b$, and $\bE_{x\sim \calN(0,I_d)}\left[|\hat{\sigma}(w^\top x) - \tilde{\sigma}(w^\top x)|^2\right]<\delta,\ \bE_{x\sim \calN(0,I_d)}\left[|\hat{\sigma}(w^\top x) - \tilde{\sigma}(w^\top x)|^2\right]<\delta$, for all $w\in\bR^d$ with $\|w\|\leq C_w$.

The associated mean-field dynamics $\tilde{\rho}_t$, that can be viewed as a slight modification of $\rho_t$ generated by Algorithm~\ref{alg:train}, is given by \eqref{eq:MFflow_bound} for $0\leq t\leq T$ and follows
\begin{equation}\label{eq:MFflow_bound2}
    \begin{cases}
        \partial_t \tilde{\rho}_t = \nabla_\theta \cdot\left(\tilde{\rho}_t \xi(t)  \nabla_\theta \tilde{\Phi}'(\theta;\tilde{\rho}_t)\right), \\
        \tilde{\rho}_t\big|_{t=T} = \tilde{\rho}_T,
    \end{cases}
\end{equation}
with
\begin{align*}
	& \tilde{\Phi}'(\theta;\rho) = a \bE_{x\sim \calN(0,I_d)} \left[ \left(\tilde{f}_{\text{NN}}(x;\rho) -  \tilde{f}^*(x)\right)\tilde{\sigma}(w^\top x)\right], \\
	& \tilde{f}_{\text{NN}}(x;\rho) = \int a\tilde{\sigma}(w^\top x)\rho(da,dw),
\end{align*}
for $t\geq T$. Here, the learning rate $\xi(t) = \text{diag}(\xi_a(t), \xi_w(t) I_d)$ is shared with Algorithm~\ref{alg:train}, namely $\xi_a(t)=0,\xi_w(t)=1$ for $0\leq t\leq T$ and $\xi_a(t)=1,\xi_w(t)=0$ for $t\geq T$.

The SGD associated to the modified mean-field dynamics is given by
\begin{equation}\label{eq:SGD_phase1}
    \begin{split}
        w^{(k+1)}_i & = w^{(k)}_i + \epsilon \left(\tilde{f}^*(x_k) - f_{\text{NN}}(x_k;\Theta^{(k)})\right) a^{(k)}_i\sigma'\left(\big(w^{(k)}_i\big)^{\top} x_k\right) x_k, \\
        a^{(k+1)}_i & = a^{(k)}_i,
    \end{split}
\end{equation}
for $i=1,2,\dots,N$ and $k=0,1,\dots,T/\epsilon-1$, where $N$ is the number of neurons and $T/\epsilon$ is assumed to be an integer, and
\begin{equation}\label{eq:SGD_phase2}
    \begin{split}
        w^{(k+1)}_i & = w^{(k)}_i, \\
        a^{(k+1)}_i & = a^{(k)}_i + \epsilon \left(\tilde{f}^*(x_k) - \tilde{f}_{\text{NN}}(x_k;\Theta^{(k)})\right) \tilde{\sigma}\left(\big(w^{(k)}_i\big)^{\top} x_k\right) ,
    \end{split}
\end{equation}
for $i=1,2,\dots,N$ and $k=T/\epsilon, T/\epsilon+1,\dots$

Suppose that assumptions made in Theorem~\ref{thm:sufficient} hold and fix $T'>T>0$. Using similar analysis as in Appendix~\ref{sec:pf_stability_init}, one can conclude that for any $\delta>0$, there exist dimension-free constants $C_f^b$ and $C_\sigma^b$ such that
\begin{equation*}
    \sup_{0\leq t\leq T'} |\calE(\rho_t) - \calE(\tilde{\rho}_t)| < \delta.
\end{equation*}
Applying Theorem~\ref{thm:sufficient} and \cite{mei2019mean}*{Theorem 1}, we can conclude that for any $\mu\in(0,1)$ and any $\delta>0$, there exists dimension-free constants $N_0,d_0,C_\epsilon$, such that for any $N\geq N_0$ and $d\geq d_0$, the following holds with probability at least $\mu$ for any $\epsilon\leq \frac{C_\epsilon}{d+\log N}$ with $T/\epsilon\in\bN$:
\begin{equation*}
    \inf_{k\in[0,T' / \epsilon]\cap \bN} \calE_N(\Theta^{(k)}) < \delta.
\end{equation*}
If we further assume that $N = \calO(e^d)$, this indicates that SGD with \eqref{eq:SGD_phase1} and \eqref{eq:SGD_phase2} can learn the subspace-sparse polynomial $f^*$ within finite time horizon and with $\calO(d)$ samples/data points.

\end{document}